\theoremstyle{plain}
\newtheorem{theorem}{Theorem}[section]
\newtheorem{proposition}[theorem]{Proposition}
\newtheorem{lemma}[theorem]{Lemma}
\newtheorem{corollary}[theorem]{Corollary}
\theoremstyle{definition}
\newtheorem{definition}[theorem]{Definition}
\theoremstyle{remark}
\newtheorem{remark}[theorem]{Remark}
\definecolor{bgcolor}{rgb}{0.66,0.88,1.00}
\def\eqref#1{equation~\ref{#1}}
\def\1{\bm{1}}
\def\vmu{{\bm{\mu}}}
\def\mX{{\bm{X}}}
\DeclareMathAlphabet{\mathsfit}{\encodingdefault}{\sfdefault}{m}{sl}
\SetMathAlphabet{\mathsfit}{bold}{\encodingdefault}{\sfdefault}{bx}{n}
\def\gA{{\mathcal{A}}}
\def\gB{{\mathcal{B}}}
\def\gD{{\mathcal{D}}}
\def\gE{{\mathcal{E}}}
\def\gG{{\mathcal{G}}}
\def\gI{{\mathcal{I}}}
\def\gL{{\mathcal{L}}}
\def\gM{{\mathcal{M}}}
\def\gN{{\mathcal{N}}}
\def\gO{{\mathcal{O}}}
\def\gP{{\mathcal{P}}}
\def\gR{{\mathcal{R}}}
\def\gS{{\mathcal{S}}}
\def\sI{{\mathbb{I}}}
\def\sP{{\mathbb{P}}}
\def\sS{{\mathbb{S}}}
\newcommand{\E}{\mathbb{E}}
\newcommand{\R}{\mathbb{R}}
\newcommand{\algname}[1]{{\sf\footnotesize#1}\xspace}
\newcommand{\ucb}[1]{\text{UCB}\left(#1\right)}
\newcommand{\ucbt}[2]{\text{UCB}_{#1}\left(#2\right)}
\newcommand{\opt}{\text{opt}}
\definecolor{darkgreen}{rgb}{0,0.5,0}
\definecolor{darkred}{rgb}{0.7,0,0}
\definecolor{teal}{rgb}{0.3,0.8,0.8}
\definecolor{orange}{rgb}{1.0,0.5,0.0}
\definecolor{purple}{rgb}{0.8,0.0,0.8}
\definecolor{OliveGreen}{rgb}{0.7,0.7,0.3}
\icmltitlerunning{Adversarial Attacks on Combinatorial Multi-Armed Bandits}
\begin{document}

\twocolumn[
\icmltitle{Adversarial Attacks on Combinatorial Multi-Armed Bandits}




\begin{icmlauthorlist}
\icmlauthor{Rishab Balasubramanian}{osu}
\icmlauthor{Jiawei Li}{uiuc}
\icmlauthor{Prasad Tadepalli}{osu}
\icmlauthor{Huazheng Wang}{osu}
\icmlauthor{Qingyun Wu}{penn}
\icmlauthor{Haoyu Zhao}{princeton}
\end{icmlauthorlist}

\icmlaffiliation{osu}{Oregon State University}
\icmlaffiliation{penn}{Pennsylvania State University}
\icmlaffiliation{princeton}{Princeton University}
\icmlaffiliation{uiuc}{University of Illinois Urbana-Champaign}

\icmlcorrespondingauthor{Huazheng Wang}{huazheng.wang@oregonstate.edu}

\icmlkeywords{Machine Learning, ICML}

\vskip 0.3in
]



\printAffiliationsAndNotice{*Alphabetical order}  

\begin{abstract}
We study reward poisoning attacks on Combinatorial Multi-armed Bandits (CMAB). We first provide a sufficient and necessary condition for the attackability of CMAB, a notion to capture the vulnerability
and robustness of CMAB. The attackability condition depends on the intrinsic properties of the corresponding CMAB instance such as the reward distributions of super arms and outcome distributions of base arms. Additionally, we devise an attack algorithm for attackable CMAB instances. Contrary to prior understanding of multi-armed bandits, our work reveals a surprising fact that the attackability of a specific CMAB instance also depends on whether the bandit instance is known or unknown to the adversary. This finding indicates that adversarial attacks on CMAB are difficult in practice and a general attack strategy for any CMAB instance does not exist since the environment is mostly unknown to the adversary. We validate our theoretical findings via extensive experiments on real-world CMAB applications including probabilistic maximum covering problem, online minimum spanning tree,  cascading bandits for online ranking, and online shortest path. 
\end{abstract}

\section{Introduction}\label{sec:intro}

Multi-armed bandits (MAB) \citep{Auer02} is a classic framework of sequential decision-making problems that has been extensively studied \citep{lattimore2020bandit,slivkins2019introduction}. In each round, the learning agent selects one out of $m$ arms and observes its reward feedback which follows an unknown reward distribution. The goal is to maximize the
cumulative reward, which requires the agent to balance exploitation (selecting the arm with the highest average reward) and exploration (exploring arms that have high potential but have not been played enough).

Combinatorial multi-armed bandits (CMAB) is a generalized framework of original MAB with many real-world applications,  including online advertising, recommendation, ranking, and influence maximization \citep{liu2012adaptive,kveton2015cascading,chen2016combinatorial,wang2017improving}. In CMAB, the agent chooses a combinatorial
action (called a super arm) over the $m$ base arms in each round, and observes outcomes of base arms triggered by the action as feedback, known as the \textit{semi-bandit} feedback. The exploration-exploitation trade-off in CMAB is extremely hard compared to MAB because the number of candidate super arms could be \textit{exponential} in $m$.

Recent studies showed that MAB and its variants are vulnerable to adversarial attacks, especially poisoning attacks \citep{jun2018adversarial,liu2019data,wang2022linear,garcelon2020adversarial}. Under such attacks, an adversary observes the pulled arm and its reward feedback, and then modifies the reward to misguide the bandit algorithm to pull a target arm that is in the adversary's interest. 
Specifically, the adversary aims to spend attack cost sublinear in time horizon $T$ to modify rewards, i.e., $o(T)$ cost, such that the bandit algorithm pulls the target arm \emph{almost all the time}, i.e., $T - o(T)$ times. 
\citet{liu2019data} showed that no-regret MAB algorithms can be efficiently attacked under any problem instance, while \citet{wang2022linear} showed that there exist instances of linear bandits that cannot be attacked without linear cost, indicating such instances are \emph{intrinsically robust} to adversarial attacks even provided vanilla bandit algorithms.

Due to the wide applicability of CMAB, understanding its vulnerability and robustness to poisoning attacks is increasingly important. Directly applying the MAB concept of \emph{attackability} to CMAB is tempting~\citep{liu2019data,wang2022linear}. However, it leads to a sublinear cost bound in $T$ but exponential in the number of base arms $m$. This approach follows the same attack strategy as in MAB. However, in practice, the exponential cost in $m$ is undesirable, and can even exceed $T$, resulting in vacuous results. Therefore, the original attackability notion is insufficient, and we have the following research question: 
\begin{quote}
    \centering
   \textbf{RQ:} \emph{What is a good notion to capture the vulnerability and robustness of CMAB?} 
\end{quote}

\subsection{Our contribution}
\looseness=-1
To answer the research question, we propose the definition of \emph{polynomial attackability}~(\Cref{sec:poly-attack}): when the attack is ``successful'', the cost upper bound should not only be sublinear in time horizon $T$, but also polynomial in the number of base arms $m$ and other factors. Following the definition of polynomial attackability, we propose the first study on adversarial attacks on combinatorial multi-armed bandits under the reward poisoning attack~\citep{jun2018adversarial,liu2019data,wang2022linear}. We provide a sufficient and necessary condition for polynomial attackability of CMAB, which depends on the bandit environment, such as the reward distributions of super arms and outcome distributions of base arms~(\Cref{sec:poly-attack}). The polynomial attackability notion captures vulnerability of certain CMAB instances regardless of the CMAB algorithm while other instances are intrinsically robust to poisoning attacks. The condition also motivates the design of our attack algorithm. We analyze the attackability of several real-world CMAB applications including cascading bandits for online learning to rank~\citep{kveton2015cascading}, online shortest path~\citep{liu2012adaptive}, online minimum spanning tree~\citep{kveton2014matroid}, and probabilistic maximum coverage problem~\citep{chen2016combinatorial} (a special instance of online influence maximization \citep{wen2017online}). 
Our results can also be applied to simple reinforcement learning settings~(\Cref{sec:poly-attack}), and to the best of our knowledge, get the first ``instance''-level attackability result on reinforcement learning.

Perhaps surprisingly, we discovered that for the same CMAB instance, polynomial attackability is not always the same, but is conditioned on whether the bandit environment is known or unknown to the adversary~(\Cref{sec:unknown}). We constructed a \textit{hard example} such that the instance is polynomially attackable if the environment is known to the adversary but polynomially unattackable if it is unknown to the adversary in advance. This hardness result suggests that adversarial attacks on CMAB may be extremely difficult in practice and a general attack strategy for any CMAB instance \emph{does not exist} since the environment is usually unknown to the adversary (black-box settings). Instead, the success of an attack strategy should be analyzed specifically for an instance or an application. Following the attackability condition, we prove that in black-box settings our attack algorithm can successfully attack applications such as probabilistic max coverage, online minimum spanning tree and cascading bandits.

Finally, numerical experiments \footnote{Code can be found at \url{https://github.com/haoyuzhao123/robust-cmab-code}}
are conducted to verify our theory~(\Cref{sec:exp}). Specifically,  we validated that our attack algorithm efficiently attacked the applications mentioned above in unknown environments.

\subsection{Related works}

\paragraph{Adversarial attacks on bandits and reinforcement learning} Reward poisoning attacks on bandit algorithms were first studied in the stochastic multi-armed bandit setting \citep{jun2018adversarial, liu2019data}, where an adversary can always force the bandit algorithm to pull a target arm linear times only using a logarithmic cost. 
\citet{garcelon2020adversarial} studied attacks on linear contextual bandits. The notion of attackability is first termed by \citet{wang2022linear}, where they studied the attackability of linear stochastic bandits. Adversarial attacks on reinforcement learning have been studied under white box \citep{rakhsha2020policy, zhang2020adaptive} and black-box setting \citep{rakhsha2021reward,rangi2022understanding}. Specifically,  \citet{rangi2022understanding} showed there exist unattackable episodic RL instances with reward poisoning attacks. However, none of the existing work analyzed the attackability of a given instance.  Besides reward poisoning attacks, other threat models such as environment poisoning attacks \citep{rakhsha2020policy,sun2021vulnerabilityaware,xu2021transferable,rangi2022understanding} and action poisoning attacks \citep{liu2020action} were also being studied. We focus on reward poisoning attacks in this paper and leave investigation on other threat models as future work.

\paragraph{Corruption-tolerant bandits} Another line of work studies the robustness of bandit algorithms against poisoning attacks, also known as corruption-tolerant bandits. \citet{lykouris2018stochastic,gupta2019better}  proposed robust MAB algorithms under an oblivious adversary who determines the manipulation before the bandit algorithm pulls an arm. 
\citet{dong2022combinatorial} proposed a robust CMAB algorithm under strategic manipulations where the corruptions are limited to only increase the outcome of base arms in semi-bandit feedback with known attack budget. This is weaker than our threat model as we allow the adversary to increase or decrease the outcome of base arms. 



\section{Preliminary}\label{sec:preliminary}
\label{sec:model}

\subsection{Combinatorial semi-bandit}

In this section, we introduce our model for the combinatorial semi-bandit (CMAB) problem. The CMAB model is mainly based on ~\citet{wang2017improving}, which handles nonlinear reward functions, approximate offline oracle, and probabilistically triggered base arms.
A CMAB problem \citep{wang2017improving} can be considered a game between a \emph{player} and an \emph{environment}. We summarize the important concepts involved in a CMAB problem here:
\begin{itemize}[leftmargin=*]
    \setlength\itemsep{-0.2em}
    \item \textbf{Base arm:} The environment has $m$ base arms $[m] = {1,2,\ldots, m}$ associated with random variables following a joint distribution $\gD$ over $[0,1]^{m}$. At each time $t$, random outcomes $\bm{X}^{(t)} = (X_1^{(t)},X_2^{(t)},\dots,X_m^{(t)})$ are sampled from $\gD$. The unknown mean vector $\vmu = (\mu_{1},\mu_{2},\dots,\mu_{m})$ where $\mu_i = \E_{X\sim\gD}[X_i^{(t)}]$ represents the means of the $m$ base arms.
    \item \textbf{Super arm:} At time $t$, the player selects a base arm set $\gS^{(t)}$ from action space $\sS$ based on the feedback from the previous rounds. The base arm set, referred to as a \textbf{\emph{super arm}}, is composed of individual \textbf{\emph{base arms}}.
    \item \textbf{Probabilistically triggered base arms:} When the player selects a super arm $\gS^{(t)}$, a random subset $\tau_{t} \subseteq [m]$ of base arms is triggered, and the outcomes $X_i^{(t)}$ for $i\in \tau_{t}$ are observed as feedback. $\tau_{t}$ is sampled from the distribution ${\gD}^{\text{trig}}(\gS^{(t)},\bm{X}^{(t)})$, where ${\gD}^{\text{trig}}(\gS,\bm{X})$ is the probabilistic triggering function on the subsets $2^{[m]}$ given $\gS$ and $\bm{X}$. We denote the probability of triggering arm $i$ with action $\gS$ as $p_i^{{\gD},\gS}$ where ${\gD}$ is the environment triggering distribution. The set of base arms that can be triggered by $\gS$ under ${\gD}$ is denoted as $\gO_{\gS} = \{i \in [m]: p_i^{{\gD},\gS} > 0\}$.
    \item \textbf{Reward:} The player receives a nonnegative reward $R(\gS^{(t)},\bm{X}^{(t)}, \tau_{t})$ determined by $\gS^{(t)},\bm{X}^{(t)}$, and $\tau_{t}$. The objective is to select an arm $\gS^{(t)}$ in each round $t$ to maximize cumulative reward. We assume that $\E[R(\gS^{(t)},\bm{X}^{(t)}, \tau_t)]$ depends on $\gS^{(t)},\vmu_t$, and denote $r_{\gS}(\vmu) \coloneqq \E_{\bm{X}}[R(\gS,\bm{X},\tau)]$ as the expected reward of super arm $\gS$ with mean vector $\vmu$. This assumption is similar to~\citet{chen2016combinatorial,wang2017improving}, and holds when variables $X_i^{(t)}$ are independent Bernoulli random variables. We define $\text{opt}{\vmu} \coloneqq \sup_{\gS\in\sS}r_{\gS}(\vmu)$ as the maximum reward given $\vmu$.\
\end{itemize}

 In summary, a \emph{CMAB problem instance} with probabilistically triggered arms can be described as a tuple $([m], \sS, \gD, {\gD}^{\text{trig}}, R)$. We introduce the following assumptions on the reward function, which are standard assumptions commonly used in the CMAB problem.
\begin{restatable}[Monotonicity]{assumption}{assmonotonicity}\label{ass:monotonicity}
For any $\vmu$ and $\vmu'$ with $\vmu \preceq \vmu'$ (dimension-wise), for any super arm $\gS \in \sS$, $r_{\gS}(\vmu) \le r_{\gS}(\vmu')$.
\end{restatable}

\begin{restatable}[$1$-Norm TPM Bounded Smoothness]{assumption}{asstpmboundedsmooth}\label{ass:tpm-bounded-smoothness}
    For any two distributions ${\gD},{\gD}'$ with expectation vectors $\vmu$ and $\vmu'$ and any super arm $\gS \in \sS$, there exists a \(B \in \mathbb{R}^{+} \) such that,
    \small
    \[|r_{\gS}(\vmu) - r_{\gS}(\vmu') |\le B\sum_{i\in [m]}p^{{\gD},\gS}_i|\mu_i - \mu'_i|.\]
\end{restatable}

\paragraph{\algname{CUCB} algorithm} The combinatorial upper confidence bound (\algname{CUCB}) algorithms~\citep{chen2013combinatorial,wang2017improving} are a series of algorithms devised for the CMAB problem. Key ingredients of a typical \algname{CUCB} algorithm include (1) optimistic estimations of the expected value of the base arms \(\vmu\) and (2) a computational oracle that takes the expected value vector \(\vmu\) as input and returns an optimal or close-to-optimal super arm. For example, an \((\alpha, \beta)\)-approximation oracle can ensure that \(Pr(r_{\gS}(\vmu) \geq \alpha \cdot \text{opt}_{\vmu}) \geq \beta \). A typical \algname{CUCB} algorithm proceeds in the following manner: in each round, the player tries to construct tight upper confidence bound (UCB) on the expected outcome based on historical observations, and feeds the UCBs to the computational oracle; the  oracle yields which super arm to select. When combined with the two assumptions above, a legitimate \algname{CUCB} algorithm's regret, or the \( (\alpha, \beta)\)-approximation regret, can typically be upper bounded.

\subsection{Threat model}
We consider reward poisoning attack as the threat model \citep{jun2018adversarial,liu2019data,garcelon2020adversarial,wang2022linear}. Under such threat model, a malicious adversary has a set $\gM$ of target super arms in mind and the goal of the adversary is to misguide the player into pulling any of the target super arm linear times in the time horizon $T$. At each round \(t\), the adversary observes the pulled super arm \(\gS^{(t)}\), the outcome of the base arms $\bm{X}^{(t)} = \{X_{i}^{(t)}\}_{i \in \tau_t}$, and its reward
feedback \(R(\gS^{(t)},\bm{X}^{(t)}, \tau_t)\). The adversary modifies the outcome of base arm from \(  X_{i}^{(t)}\) to $\tilde X_{i}^{(t)}$ for all $i \in \tau_t$. \( \tilde{\bm{X}}^{(t)} \coloneqq \{\tilde X_{i}^{(t)} \}_{i \in \tau_t} \) is thus the `corrupted return' of the observed base arms. The cost of the attack is defined as 
$C(T) = \sum_{t=1}^T \|\tilde {\bm{X}}^{(t)} - \bm{X}^{(t)}\|_0$.

\subsection{Selected applications of CMAB} 

\paragraph{Online minimum spanning tree} Consider an undirected graph $\mathcal{G} = (V, E)$. Every edge $e = (u, v)$ in the graph has a cost realization $X_{u,v}^{(t)}\in [0,1]$ drawn from a distribution with mean $\mu_{u,v}$ at each time slot $t$. 
Online minimum spanning tree problem is to select a spanning tree $\mathcal{S}^{(t)}$ at every time step $t$ on an unknown graph in order to minimize the cumulative (pseudo) regret defined as $\mathcal{R} = \sum_{t=1}^T \mathbb{E} \big[\sum_{(u,v) \in \mathcal{S}^{(t)}}  X_{u,v}^t - \sum_{(u,v) \in \mathcal{S}^*}  X_{u,v}^t\big]$,
where $\mathcal{S}^*$ is the minimum spanning tree given the edge cost $\mu_e$ for every edge $e$.

\paragraph{Online shortest path} Consider an undirected graph $\mathcal{G} = (V, E)$. Every edge $e = (u, v)$ in the graph has a cost realization $X_{u,v}^{(t)}\in [0,1]$ drawn from a distribution with mean $\mu_{u,v}$ at each time slot $t$.
Given the parameters $\mu_e$, the shortest path problem with a source $\mathbf{s}$ and a destination $\mathbf{t}$ is to select a path from $\mathbf{s}$ to $\mathbf{t}$ with minimum cost, i.e., to solve the following problem $\min_{\mathcal{S}} \sum_{(u,v) \in \mathcal{S}}  \mu_{u,v}$ where $\mathcal{S}$ is a path from $\mathbf{s}$ to $\mathbf{t}$.
The online version is to select a path $\mathcal{S}^{(t)}$ from $\mathbf{s}$ to $\mathbf{t}$ at time $t$ on graph $\mathcal{G} = (V, E)$ in order to minimize the cumulative regret.

\paragraph{Cascading bandit} Cascading bandit is a popular model for online learning to rank with Bernoulli click feedback under cascade click model~\citep{kveton2015cascading,vial2022minimax}. There is a set of items (base arms) $[m] = \{1, 2, 3, \cdots, m\}$. At round $t$, the bandit algorithm recommends a list of $K < m$ items $[a_{t,1}, a_{t,2}, \cdots a_{t,K}]$ as a super arm to the user from which the user clicks on the first attractive item (if any), and stops examining items after clicking. The user selects item $j$ from the list with probability $\mu_j$, which should be estimated online by the algorithm. The cascading bandit problem can be reduced to CMAB with probabilistic triggered arms~\citep{wang2017improving}.

\looseness=-1\paragraph{Probabilistic maximum coverage} 
Given a weighted bipartite graph $\gG = (L, R, E)$ where each edge $(u, v)$ has a probability $\mu_{u,v}$, the probabilistic maximum coverage (PMC) is to find a set $S\subseteq L$ of size $k$ that maximizes the expected number of activated nodes in $R$, where a node $v\in R$ can be activated by a node $u\in S$ with an independent probability of $\mu_{u,v}$.
In the online version of PMC, $\mu_{u,v}$ are unknown. The algorithm estimates $\mu_{u,v}$ online and minimizes the pseudo-regret. 

\section{Polynomial Attackability of CMAB Instances}\label{sec:poly-attack}




\looseness=-1 
Previous literature defined the following \emph{attackability} notion for MAB instances~\citep{jun2018adversarial,liu2019data,garcelon2020adversarial,wang2022linear}: for any no-regret algorithm $\gA$ (the regret of $\gA$ is $o(T)$ for large enough $T$), the attack can only use sublinear cost $C(T) = o(T)$ and fool the algorithm $\gA$ to play the arms in target set $\gM$ for $T-o(T)$ times. It is worth noting that \citet{wang2022linear} is the first to observe that certain linear MAB instances are \emph{unattackable} without linear cost, suggesting intrinsic robustness of such linear bandit instances. However the notion of \emph{attackability} needs to be modified in the CMAB framework, since the number of super arms is exponential and will lead to impractical guarantees.
In this work, we propose to consider the \emph{polynomial attackability} of CMAB. 

For a particular CMAB instance $([m], \sS, \gD, {\gD}^{\text{trig}}, R)$, we define $p^* \coloneqq \inf_{\gS \in \sS, i \in \gO_{\gS} } \{p_i^{\gD,\gS}\}$.

\begin{definition}[Polynomially attackable\footnote{We use the terms \textit{attackable} and \textit{polynomially attackable}  interchangeably in the following discussion.}]\label{defn:poly-attack} 
    A CMAB instance is polynomially attackable with respect to a set of super arms $\gM$, if for any learning algorithm with regret $O(\text{poly}(m, 1/p^*, K)\cdot T^{1-\gamma})$ with high probability for some constant $\gamma > 0$, there exists an attack method with constant $\gamma'>0$ that uses at most $T^{1-\gamma'}$ attack cost and misguides the algorithm to pull super arm $\gS\in\gM$ for $T-T^{1-\gamma'}$ times with high probability for any $T \ge T^*$, where $T^*$ polynomially depends on $m, 1/p^*, K$.
\end{definition}

\begin{definition}[Polynomially unattackable]\label{defn:poly-unattack}
    A CMAB instance is polynomially unattackable with respect to a set of super arms $\gM$ if there exists a learning algorithm $\gA$ with regret $O(\text{poly}(m, 1/p^*, K)\cdot T^{1-\gamma})$ with high probability for some constant $\gamma > 0$, such that for any attack method with constant $\gamma'>0$ that uses at most $T^{1-\gamma'}$ attack cost, the algorithm $\gA$ will pull super arms $\gS\in\gM$ for at most $T/2$ times with high probability for any $T \ge T^*$, where $T^*$ polynomially depends on $m, 1/p^*, K$.
\end{definition}

\begin{remark} [Conventional attackability definition vs. polynomially attackable 
 vs. polynomially unattackable]
 \textbf{(1)} Compared to conventional attackability definitions for \(k\)-armed stochastic bandit instance, both Definition~\ref{defn:poly-attack} and Definition~\ref{defn:poly-unattack} require a polynomial dependency on $m$. 
 \textbf{(2)} Note that the `polynomially unattackable' notion is stronger than `not polynomially attackable'.
\end{remark}




\begin{remark}[Polynomial dependency]
$T^*$'s dependency on $m, 1/p^*, K$ is important to CMAB. Otherwise, the problem reduces to a vanilla MAB with an exponentially large number of arms.
\end{remark}

\begin{definition}[Gap]\label{defn:gap}
    For each super arm $\gS$, we define the following gap
    \[\Delta_{\gS} := r_{\gS}(\vmu) - \max_{\gS'\neq \gS} r_{\gS'}(\vmu\odot \gO_{\gS}).\] where $\odot$ is the element-wise product.
    For a set $\gM$ of super arms, we define the gap of $\gM$ as $\Delta_{\gM} = \max_{\gS\in\gM} \Delta_{\gS}$.
\end{definition}

\begin{algorithm}
\caption{Attack algorithm for CMAB instance}
\label{alg:attack-exact-oracle}
\begin{algorithmic}[1]
    \REQUIRE 
    Target arm set $\gM$ such that $\Delta_{\gM} > 0$, CMAB algorithm \algname{Alg}.
    \STATE Find a super arm $\gS\in\gM$ such that $\Delta_{\gS} > 0$.
    \FOR{$t=1,2,\dots,T$}
        \STATE  \algname{Alg} returns super arm $\gS^{(t)}$.
        \STATE Adversary returns $ \tilde X_{i}^{t} = 0$ for all $i\in \tau^{(t)}\setminus \gO_{\gS}$ and keep other outcome $X_{i}^{t}$ unchanged.
    \ENDFOR
\end{algorithmic}
\end{algorithm}


\begin{restatable}[Polynomial attackability of CMAB]{theorem}{thmattack}\label{thm:attackability-known}
    Given a particular CMAB instance and the target set of super arms $\gM$ to attack. If $\Delta_{\gM} > 0$, then the CMAB instance is polynomially attackable. If $\Delta_{\gM} < 0$, the instance is polynomially unattackable.
\end{restatable}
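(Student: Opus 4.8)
The plan is to handle the two regimes separately: the positive case by analysing the explicit attack in \Cref{alg:attack-exact-oracle}, and the negative case by designing a robust learner. For $\Delta_{\gM}>0$, pick $\gS\in\gM$ with $\Delta_\gS>0$ and run \Cref{alg:attack-exact-oracle}, which zeroes every triggered outcome outside $\gO_\gS$. The key observation is that from the learner's viewpoint this realises \emph{exactly} the CMAB environment with mean vector $\vmu':=\vmu\odot\gO_\gS$: coordinates in $\gO_\gS$ are never altered while those outside always read $0$. Since $p_i^{\gD,\gS}=0$ for $i\notin\gO_\gS$, \Cref{ass:tpm-bounded-smoothness} gives $r_\gS(\vmu')=r_\gS(\vmu)$, so by \Cref{defn:gap} the arm $\gS$ is optimal under $\vmu'$ and beats every competitor by at least $\Delta_\gS$. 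Plugging the learner's guarantee into $\mathrm{Regret}(T)\ge\Delta_\gS\cdot|\{t:\gS^{(t)}\ne\gS\}|$ bounds the non-target rounds by $O(\mathrm{poly}(m,1/p^*,K)\,T^{1-\gamma}/\Delta_\gS)$; as the attack pays $0$ on target rounds and at most $m$ otherwise, both cost and non-target count fall below $T^{1-\gamma'}$ once $T\ge T^*$, with $T^*$ polynomial in the stated quantities after absorbing the $m$ and $1/\Delta_\gS$ factors.

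For $\Delta_{\gM}<0$ (so $\Delta_\gS<0$ for every $\gS\in\gM$) I would build one robust learner and verify it is both no-regret and un-foolable. Its skeleton is a no-regret \algname{CUCB} augmented with a \emph{defeat test}: keeping confidence bounds $[\underline\mu_i,\bar\mu_i]$ per base arm, it refuses to play a target arm $\gS$ as soon as some $\gS'$ satisfies $r_{\gS'}(\underline\vmu\odot\gO_\gS)>r_\gS(\bar\vmu)$. No-regret is immediate: because $\vmu\odot\gO_\gS\preceq\vmu$, \Cref{ass:monotonicity} yields $r_{\gS'}(\vmu)\ge r_{\gS'}(\vmu\odot\gO_\gS)>r_\gS(\vmu)$, so each target arm is genuinely suboptimal and suppressing it can only help the base \algname{CUCB} regret.

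The substance is robustness. Assume toward a contradiction that an attack of cost $\le T^{1-\gamma'}$ drives the learner to play target arms more than $T/2$ times; then at most $T^{1-\gamma'}$ base-arm observations are corrupted overall. Each time a target arm $\gS$ is played, every $i\in\gO_\gS$ is triggered with probability at least $p^*$, so after $N$ plays of $\gS$ the estimate $\hat\mu_i$ is built from $\gtrsim p^*N$ samples; biasing it by a constant therefore costs the adversary $\Omega(p^*N)$ corruptions. Consequently, keeping the $\gO_\gS$-estimates wrong enough to stop the defeat test from firing while $\gS$ is played linearly often demands linear budget, contradicting $T^{1-\gamma'}$. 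Made quantitative---shrinking the confidence radius below a $|\Delta_\gS|/B$-scale threshold and aggregating over $\gM$ through the global budget rather than a per-arm pigeonhole---this forces at most $T/2$ target plays for $T\ge T^*$ polynomial in $m,1/p^*,K$. I expect this last step to be the main obstacle: converting ``a biased empirical mean is expensive to maintain'' into a clean inequality that simultaneously couples $p^*$ with the corruption budget via concentration and aggregates over all of $\gM$ without incurring a cardinality blow-up.
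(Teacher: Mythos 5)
Your treatment of the case $\Delta_{\gM}>0$ is essentially the paper's proof: run \Cref{alg:attack-exact-oracle}, observe that the learner faces the environment $\vmu\odot\gO_{\gS}$ in which $\gS$ is optimal with gap $\Delta_{\gS}$, convert the regret bound into a bound of $O(\mathrm{poly}(m,1/p^*,K)\,T^{1-\gamma}/\Delta_{\gS})$ on non-target rounds, and charge at most $m$ per such round. That half is fine.

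For $\Delta_{\gM}<0$ there is a genuine gap, and it is exactly the one you flag at the end: you never aggregate over the target set $\gM$ without a cardinality blow-up. Your contradiction argument is phrased per target arm $\gS$ (``after $N$ plays of $\gS$ the estimate is built from $\gtrsim p^*N$ samples''), but $|\gM|$ may be exponential in $m$, so $T/2$ total target pulls can be spread so thinly that no individual $\gS$ is played often enough for that argument to bite. The paper closes this with precisely the per-base-arm pigeonhole you say you want to avoid: it shows (Lemma~\ref{lem:proof-unattackable-obs-bound}) that whenever \emph{any} $\gS\in\gM$ is pulled, some base arm $i\in\gO_{\gS}$ must still satisfy $T_i^{(t)}\le T_0$ with $T_0=O\bigl(KB\gB/|\Delta_{\gM}|+K^2B^2\log(mT/\delta)/|\Delta_{\gM}|^2\bigr)$ --- otherwise the corruption bias $\gB/T_0$ plus the confidence radius is below $|\Delta_{\gS}|/(6KB)$ and the competitor $\gS'$ realizing the gap has a strictly larger UCB-index, so \algname{CUCB} would not pull $\gS$. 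Every target pull therefore charges one of only $m$ base arms whose triggering counter is below $4T_0/p^*$, giving at most $4mT_0/p^*=\mathrm{poly}(\cdot)\cdot T^{1-\gamma'}$ target pulls in total. This is the step your sketch is missing, and it is what couples $p^*$, the budget, and the exponential target set into a polynomial bound.

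Two further remarks on your robust learner. First, it is unnecessary: vanilla \algname{CUCB} already performs your ``defeat test'' implicitly, because it plays $\argmax_{\gS}r_{\gS}(\mathbf{UCB}_t)$ and the paper shows $r_{\gS'}(\mathbf{UCB}_t)\ge r_{\gS'}(\mathbf{UCB}_t\odot\gO_{\gS})\ge r_{\gS}(\mathbf{UCB}_t)+|\Delta_{\gS}|/3$ once the $\gO_{\gS}$-estimates are accurate --- the monotonicity step is what lets corrupted arms outside $\gO_{\gS}$ be ignored, the same observation underlying your test. Second, introducing a new learner obliges you to prove its $O(\mathrm{poly}\cdot T^{1-\gamma})$ clean-environment regret from scratch (Definition~\ref{defn:poly-unattack} requires the same algorithm to be no-regret and attack-resistant), a burden the paper avoids entirely by sticking with \algname{CUCB}.
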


We do not consider the special case of $\Delta_{\gM} = 0$ in Theorem \ref{thm:attackability-known} as the condition is ill-defined for attackability: it relies on how the CMAB algorithm breaks the tie and we omit the case in the theorem. 


Theorem~\ref{thm:attackability-known} provides a \textit{sufficient and necessary} condition for the CMAB instance to be polynomial attackable. Specifically, we prove the sufficiency condition ($\Delta_{\gM} > 0$) by constructing Algorithm \ref{alg:attack-exact-oracle} as an attack that spends $\text{poly}(m,1/p^*,K)\cdot m \cdot (1/\Delta_{\gS^*}) \cdot T^{1-\gamma}$ attack cost to pull target super arms linear time. The main idea is to reduce the reward of base arms that are not associated with the target super arms to 0. On the other hand, we show that $\Delta_{\gM} < 0$ leads to polynomial unattackable instance, which is stronger than and covers \textit{not polynomial attackable}, and serves as the necessary condition. The polynomial unattackable instances are \emph{intrinsically robust} to any attack strategy even with the vanilla CUCB algorithm.\\

Note that in \Cref{thm:attackability-known}, we do not specify the attack cost since the cost depends on not only the CMAB instance but also the victim CMAB algorithm. If we specify the victim algorithm to be CUCB, which is the most general algorithm for solving CMAB problem, we can have the following results directly from the proof of \Cref{thm:attackability-known}.

\begin{corollary}
    Given a particular CMAB instance and the target set of super arms $\gM$ to attack. If $\Delta_{\gM} > 0$ and the victim algorithm is chosen to be CUCB, then the attack cost can be bounded by $\text{poly}(m,K,1/p,1/\Delta)\cdot \log(T)$.
\end{corollary}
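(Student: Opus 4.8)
The plan is to show that Algorithm~\ref{alg:attack-exact-oracle}, when the victim is \algname{CUCB}, makes the learner behave exactly as if it were running \emph{unattacked} on a clean but \emph{modified} CMAB instance, and then to import the standard gap-dependent \emph{logarithmic} regret guarantee for \algname{CUCB} on that instance. Fix the super arm $\gS\in\gM$ achieving $\Delta_\gS=\Delta_\gM>0$ in line~1. Define the modified instance $\gD'$ in which every base arm $i\notin\gO_\gS$ has its reported outcome deterministically set to $0$, every base arm $i\in\gO_\gS$ keeps its original marginal distribution, and the triggering function $\gD^{\text{trig}}$ is left unchanged. The first step is to check that the corrupted feedback $\tilde{\bm{X}}^{(t)}$ produced by the attack is distributed exactly as a genuine observation from $\gD'$: the adversary zeroes precisely the triggered arms in $\tau^{(t)}\setminus\gO_\gS$ and leaves the others untouched, which is by construction a sample from $\gD'$ under the same triggering event. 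Hence the whole interaction transcript of \algname{CUCB} under attack is equal in law to \algname{CUCB} running, unattacked, on $\gD'$.

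The second step identifies the optimal arm and gap of $\gD'$. Since $\gS$ can only trigger arms in $\gO_\gS$, its reward depends only on those coordinates, so $r_\gS(\vmu\odot\gO_\gS)=r_\gS(\vmu)$, while the mean vector of $\gD'$ is exactly $\vmu\odot\gO_\gS$. By Definition~\ref{defn:gap}, $\Delta_\gS=r_\gS(\vmu\odot\gO_\gS)-\max_{\gS'\neq\gS}r_{\gS'}(\vmu\odot\gO_\gS)>0$, so $\gS$ is the unique optimal super arm of $\gD'$ and every other super arm is suboptimal by at least $\Delta_\gS=\Delta_\gM$. I would also verify that $\gD'$ inherits Assumptions~\ref{ass:monotonicity} and~\ref{ass:tpm-bounded-smoothness}, since the smoothness constant $B$ and the triggering probabilities $p_i^{\gD',\gS'}=p_i^{\gD,\gS'}$ are unchanged, so that the published \algname{CUCB} analysis applies verbatim.

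The third step invokes the gap-dependent regret bound for \algname{CUCB} on $\gD'$ \citep{chen2013combinatorial,wang2017improving}: with an exact oracle the regret on $\gD'$ is $O(\text{poly}(m,K,1/p^*)\cdot\log(T)/\Delta_\gM)$ (the general $(\alpha,\beta)$-oracle case follows identically from the approximation regret). Because each pull of any $\gS'\neq\gS$ incurs instantaneous regret at least $\Delta_\gM$, the number $N_{\text{sub}}$ of rounds in which \algname{CUCB} does not play $\gS$ satisfies $N_{\text{sub}}\le\text{Regret}/\Delta_\gM=O(\text{poly}(m,K,1/p^*)\cdot\log(T)/\Delta_\gM^2)$. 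Converting pulls to cost: on any round where \algname{CUCB} plays $\gS$ we have $\tau^{(t)}\subseteq\gO_\gS$, hence $\tau^{(t)}\setminus\gO_\gS=\emptyset$ and zero cost is spent; on every other round the attack modifies at most $|\tau^{(t)}|\le m$ arms. Therefore $C(T)=\sum_t\|\tilde{\bm{X}}^{(t)}-\bm{X}^{(t)}\|_0\le m\cdot N_{\text{sub}}=\text{poly}(m,K,1/p^*,1/\Delta_\gM)\cdot\log(T)$, which is the claimed bound.

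The main obstacle I anticipate is the first step: rigorously establishing the distributional equivalence between the attacked transcript and a clean run on $\gD'$, in particular that the triggering events are unaffected by the corruption. This is immediate when triggering is outcome-independent (as in cascading bandits and the other applications considered), but if $\gD^{\text{trig}}(\gS,\bm{X})$ genuinely depends on $\bm{X}$ the argument must exploit that the adversary corrupts only the \emph{reported} outcomes \emph{after} triggering has already been realized, so that the victim's estimators are driven to $\vmu\odot\gO_\gS$ while the true triggering law is preserved. Making this precise, and confirming that the \algname{CUCB} confidence intervals remain valid on the zeroed coordinates, is where the care is needed.
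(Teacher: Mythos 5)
Your proposal is correct and follows essentially the same route as the paper: Algorithm~\ref{alg:attack-exact-oracle} makes \algname{CUCB} behave as in the modified environment with mean vector $\vmu\odot\gO_{\gS}$, in which the chosen target arm is optimal with gap $\Delta_{\gS}=\Delta_{\gM}$, and plugging \algname{CUCB}'s gap-dependent $O\left(\mathrm{poly}(m,K,1/p^*)\cdot\log T/\Delta\right)$ regret bound into the pulls-to-cost conversion (at most $m$ modifications per suboptimal round, zero cost when $\gS$ is played since $\tau^{(t)}\subseteq\gO_{\gS}$) gives exactly the claimed $\mathrm{poly}(m,K,1/p^*,1/\Delta_{\gM})\cdot\log T$ bound, which is how the paper derives the corollary from the sufficiency direction of Theorem~\ref{thm:attackability-known}. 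Your additional care about the triggering law (corruption occurs after $\tau_t$ is realized, so triggering probabilities are unchanged) makes explicit a point the paper's argument leaves implicit, but it does not constitute a different route.
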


In general, the probability $p^*$ may be exponentially small, and there already exists analysis for the combinatorial semi-bandit~\citep{wang2017improving} that can remove this $p^*$ dependence. However, there are still some differences between the original CMAB setting and our attack setting. The following example shows the necessity of the term $p^*$: if we remove the dependency on $1/p^*$ in Definition \ref{defn:poly-attack} and \ref{defn:poly-unattack}, we show that there exist instances such that it is neither ``polynomial attackable'' nor ``polynomial unattackable''. Consider the case that there are $3$ base arms $a,b,c$. $\mu_a = \mu_b = 1/2, \mu_c = 1/4$, and two super arms $\gS_1, \gS_2$. Assume that when you pull $\gS_1$, you observe base arm $b$ with probability $1$, and $a$ with probability $p<\frac{1}{2}$, and if you pull $\gS_2$, you observe $a, c$ with probability $1$. Thus, the reward of $\gS_1$ is $R_b + \sI[\gE_a]R_a$ where $\gE_a$ is the event where $a$ is triggered, and the reward of $\gS_2$ is $R_a + R_c$.
Now note that if we set the time scale $T$ large enough, we know that \algname{CUCB} will not play $\gS_1$ for $T - o(T)$ time. However if $T$ is not that large, it is still possible to misguide \algname{CUCB} to play $\gS_1$ for $T - o(T)$ times. The attack strategy is that: whenever $a$ is observed, we first set the reward for $a$ to be $0$ and let $\ucb{a} < 1/4$. Then \algname{CUCB} will pull $\gS_1$. However if $\gS_1$ is not pulled by $1/p$ times, it may not observe base arm $a$ and thus $\gS_1$ can still be played for $T-o(T)$ times for $T$ not large enough (say, not dependent on $1/p$).

Following the \cref{thm:attackability-known}, we now analyze the attackability of several practical CMAB problems.
\begin{corollary} \label{cor:spanning_tree_cascade}
    In online minimum spanning tree problem, for any super arm $\gS$, $\Delta_{\gS} \ge 0$. In cascading bandit problem, for any super arm $\gS$, $\Delta_{\gM = permutation(\gS)} \ge 0$.
\end{corollary}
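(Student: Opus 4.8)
The plan is to evaluate the gap $\Delta_\gS$ directly for each problem, exploiting the observation that the reward of a super arm depends only on the base arms it can trigger, so that $r_\gS(\vmu) = r_\gS(\vmu\odot\gO_\gS)$ and hence
\[
\Delta_\gS \;=\; r_\gS(\vmu\odot\gO_\gS) \;-\; \max_{\gS'\neq\gS} r_{\gS'}(\vmu\odot\gO_\gS).
\]
Thus $\Delta_\gS\ge 0$ is equivalent to saying that $\gS$ is optimal in the ``zeroed'' environment $\vmu\odot\gO_\gS$, in which every base arm outside $\gO_\gS$ has mean $0$. Both claims then reduce to showing that no competing super arm can strictly beat $\gS$ once the arms outside $\gO_\gS$ are switched off.

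For the online minimum spanning tree I would first note that selecting a spanning tree $\gS$ triggers and reveals exactly its own edges, so $\gO_\gS$ equals the edge set of $\gS$ and the (additive) reward of any tree evaluated at $\vmu\odot\gO_\gS$ can only accumulate weight from edges lying in $\gS$. Concretely, for any other spanning tree $\gS'$ the edges of $\gS'\setminus\gS$ contribute mean $0$, so $r_{\gS'}(\vmu\odot\gO_\gS)$ only sees the shared edges $\gS'\cap\gS\subseteq\gS$; since the edge means are nonnegative this is at most $r_\gS(\vmu\odot\gO_\gS)$, giving $\Delta_\gS\ge 0$. The only points requiring care are the identification $\gO_\gS=\gS$ and the additivity of the reward over triggered edges; the inequality itself is then a one-line consequence of $\gS'\cap\gS\subseteq\gS$ together with nonnegativity of $\vmu$.

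The cascading-bandit claim is the more delicate one and is where I expect the real work to lie. First I would compute the trigger set: under the cascade model the $k$-th listed item is observed exactly when all earlier items fail to attract a click, an event of probability $\prod_{j<k}(1-\mu_{a_j})>0$, so every listed item has positive triggering probability and $\gO_\gS$ is precisely the underlying set of $K$ items. Next I would use that the reward $r_\gS(\vmu)=1-\prod_{a\in\gS}(1-\mu_a)$ depends only on this set and is therefore invariant across the whole permutation class $\gM$: every ordering in $\gM$ has the same reward and the same $\gO$. The crux is to evaluate $\max_{\gS'\neq\gS}r_{\gS'}(\vmu\odot\gO_\gS)$: a competitor with item set $T'$ earns $1-\prod_{a\in T'\cap\gO_\gS}(1-\mu_a)$, and because each factor $1-\mu_a$ lies in $[0,1]$, this is maximized by making $T'\cap\gO_\gS$ as large as possible, i.e.\ by taking $T'=\gO_\gS$; that optimum is attained by any other ordering of the same set, which is itself a member of $\gM\setminus\{\gS\}$, with reward equal to $r_\gS(\vmu)$ exactly.

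Putting these together, $\max_{\gS'\neq\gS}r_{\gS'}(\vmu\odot\gO_\gS)=r_\gS(\vmu)$, so $\Delta_\gS=0$ for every $\gS\in\gM$ and hence $\Delta_\gM=\max_{\gS\in\gM}\Delta_\gS=0\ge 0$. I expect the main obstacle to be precisely this permutation-invariance step: one must argue that the best response in the zeroed environment is always another ordering of $\gS$'s own item set (so that the gap collapses to exactly $0$) rather than some list drawn from a different set, which is where the monotonicity of $\prod(1-\mu_a)$ in the number of included items and the correct computation of $\gO_\gS$ are both essential. The minimum-spanning-tree half, by contrast, is a short additivity argument.
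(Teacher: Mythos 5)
Your cascading-bandit half is correct and matches the paper's (much terser) argument: in the environment $\vmu\odot\gO_{\gS}$ every item outside $\gS$ has click probability $0$, so the best competitor is another ordering of the same item set, the gap collapses to exactly $0$, and $\Delta_{\gM}\ge 0$ follows; your verification that $\gO_{\gS}$ is the full listed set is a detail the paper omits. The minimum-spanning-tree half, however, has a genuine error of direction. Online minimum spanning tree is a cost-\emph{minimization} problem: the learner prefers trees with \emph{smaller} total edge weight. If you set the means of all edges outside $\gO_{\gS}$ to $0$, then any competing spanning tree $\gS'$ pays only for the shared edges $\gS'\cap\gS$ and is therefore \emph{cheaper} than $\gS$, i.e.\ strictly more attractive to the learner. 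Your inequality ``$r_{\gS'}$ only sees the shared edges, hence is at most $r_{\gS}$'' proves that $\gS$ has the largest weight in the zeroed environment, which is exactly the wrong conclusion for a minimization objective (equivalently, with reward $=-\mathrm{cost}$ your argument gives $\Delta_{\gS}\le 0$).

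The missing idea is the one the paper states explicitly: for minimization objectives the attack (and the interpretation of $\vmu\odot\gO_{\gS}$ in Definition~\ref{defn:gap}) must push the untriggered base arms to their \emph{worst} value, i.e.\ set every edge outside $\gO_{\gS}$ to cost $1$, not $0$ (this is the modification of line~4 of Algorithm~\ref{alg:attack-exact-oracle} to $\tilde X_i=1$). With that modification the correct argument also needs a fact you never invoke: every spanning tree has exactly $|V|-1$ edges, so $|\gS'\setminus\gS|=|\gS\setminus\gS'|$, and since each replaced edge now costs $1\ge\mu_e$ for any $e\in\gS\setminus\gS'$, one gets
\[
\sum_{e\in\gS'\cap\gS}\mu_e+|\gS'\setminus\gS|\cdot 1\;\ge\;\sum_{e\in\gS'\cap\gS}\mu_e+\sum_{e\in\gS\setminus\gS'}\mu_e\;=\;\sum_{e\in\gS}\mu_e,
\]
so $\gS$ is a minimum spanning tree in the modified environment and $\Delta_{\gS}\ge 0$. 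Without the equal-cardinality observation the inequality fails, and with the zeroed (rather than maxed-out) environment the claim is false.
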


Notice that if the goal of the CMAB problem is to minimize the objective (cost) such as online minimum spanning tree, we change line 3 of Algorithm \ref{alg:attack-exact-oracle} to $\tilde X_i^T = 1$ where we modify the base arm into having highest cost. 

Corollary \ref{cor:spanning_tree_cascade} is relatively straightforward. First, we know that every spanning tree has the same number of edges. Then the edges in $\gO_{\gS}$ (where $\gS$ is the target spanning tree we select) are the edges with smallest mean cost under parameter $\vmu\odot \gO_{\gS}$, since the other edges are set to have cost $1$. Thus, any other spanning tree will induce a larger cost. As for the cascade bandit, the selected items have the highest click probabilities under the mean vector $\vmu\odot \gO_{\gS}$ for super arm $\gS$, and replacing them to any other item will cause the total click probability to drop.



\paragraph{$(\alpha,\beta)$-oracle} Different from the previous CMAB literature, we do not consider the $(\alpha,\beta)$-approximation oracle and the $(\alpha,\beta)$-regret~\citep{chen2016combinatorial} in this paper. The fundamental problem for algorithms with $(\alpha,\beta)$-regret is that the algorithms are not ``no-regret'' algorithms, since they are not comparing to the best reward $\opt(\mu)$, but $\alpha\cdot\beta\cdot\opt(\mu)$. Then, it is always possible to change an $(\alpha,\beta)$-oracle to $(\alpha,\beta-\epsilon)$-oracle by applying the original oracle with probability $1-\epsilon$ and do the random exploration with probability $\epsilon$, which would make the problem unattackable. Thus, one should not hope to get general characterizations or results of the attackability for CMAB instances, and should discuss the attackability issue for a specific problem solved by specific algorithm and oracles. In the following, we show the attackability for the probabilistic maximum coverage problem solved by \algname{CUCB} together with the Greedy oracle.

\begin{theorem}\label{thm:covering-attackability}
    In the probabilistic maximum coverage problem (PMC), \algname{CUCB} algorithm with Greedy oracle is polynomial attackable when $\Delta_{\gM} > 0$. Besides, for any PMC instance, $\Delta_{\gM} \ge 0$.
\end{theorem}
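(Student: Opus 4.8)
The second claim---that $\Delta_{\gM}\ge 0$ for every PMC instance---is the cleaner half, so I would dispatch it first, in fact proving the stronger fact that $\Delta_{\gS}\ge 0$ for \emph{every} super arm $\gS$. Writing the expected coverage as $r_{\gS}(\vmu)=\sum_{v\in R}\bigl(1-\prod_{u\in\gS}(1-\mu_{u,v})\bigr)$, I would note that for any competitor $\gS'$, evaluating at the masked vector $\vmu\odot\gO_{\gS}$ kills every edge leaving a node $u\notin\gS$ (its factor becomes $1-0=1$), so that $r_{\gS'}(\vmu\odot\gO_{\gS})=r_{\gS'\cap\gS}(\vmu)$. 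Since $\gS'\cap\gS\subseteq\gS$ and coverage is monotone in the chosen node set---equivalently, by \Cref{ass:monotonicity} after zeroing the edges leaving $\gS\setminus\gS'$, which only lowers $\vmu$---this is at most $r_{\gS}(\vmu)$. Hence $\max_{\gS'\neq\gS}r_{\gS'}(\vmu\odot\gO_{\gS})\le r_{\gS}(\vmu)$, giving $\Delta_{\gS}\ge 0$ and therefore $\Delta_{\gM}\ge 0$.

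For the attackability claim the difficulty flagged in the $(\alpha,\beta)$-oracle paragraph is that Greedy is only an approximation oracle, so \Cref{thm:attackability-known} cannot be invoked as a black box; I would instead analyze \algname{CUCB}+Greedy directly, exploiting submodularity of coverage. Fix $\gS^*\in\gM$ with $\Delta_{\gS^*}=\Delta_{\gM}>0$ and run the attack of \Cref{alg:attack-exact-oracle}, which zeroes every triggered edge outside $\gO_{\gS^*}$; under this corruption the learner faces an effective instance with means $\vmu':=\vmu\odot\gO_{\gS^*}$. The key lemma I would prove is that Greedy run on $\vmu'$ recovers $\gS^*$ \emph{exactly}, with a per-step safety margin of $\Delta_{\gS^*}$: a node outside $\gS^*$ has all its edges at mean $0$ under $\vmu'$ and hence zero marginal gain, while for any partial set $T\subseteq\gS^*\setminus\{u\}$ the diminishing-returns property yields $r_{T\cup\{u\}}(\vmu')-r_{T}(\vmu')\ge r_{\gS^*}(\vmu)-r_{\gS^*\setminus\{u\}}(\vmu)\ge\Delta_{\gS^*}$. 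An induction then shows the Greedy set stays inside $\gS^*$ at every step and fills up to $\gS^*$, so the approximate nature of Greedy is harmless on the masked instance and it behaves as an exact oracle there.

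With the margin lemma in hand I would port the standard \algname{CUCB} confidence-interval argument to this effective instance. Optimism occasionally forces the learner to select super arms touching nodes outside $\gS^*$, but each such pull triggers the corresponding non-target edges, whose empirical means are pinned at $0$ by the attack while their confidence radii shrink; by the $1$-norm smoothness bound (\Cref{ass:tpm-bounded-smoothness}) a perturbation of size $\delta$ on the triggered edges moves any coverage value by at most $Bm\delta$, so once every triggered radius drops below a threshold of order $\Delta_{\gS^*}/(Bm)$, Greedy on the UCB vector is pinned to $\gS^*$. The number of such exploratory pulls---equivalently the number of corruptions, equivalently the attack cost---is then bounded exactly as in the \algname{CUCB} regret proof, of order $\mathrm{poly}(m,K,1/p^*,1/\Delta_{\gS^*})\cdot\log T$, which is sublinear in $T$ and polynomial in the instance parameters, matching \Cref{defn:poly-attack}.

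The main obstacle I anticipate is this last step: tracking how the UCB bonuses on the \emph{non-target} edges decay under the attack and converting this into a uniform ``all triggered radii small'' event that holds with high probability, while simultaneously ruling out that the optimistic coverage of some non-target super arm exceeds that of $\gS^*$ before the radii are small. This forces one to combine the submodular margin with the $1$-norm smoothness inequality in a way that is robust to the triggering probabilities $p_i^{\gD,\gS}$, and it is precisely where the $1/p^*$ dependence in the cost bound enters; by comparison the monotonicity half and the exact-recovery lemma are routine.
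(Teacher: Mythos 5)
Your proposal is correct and follows essentially the same route as the paper: the masking-plus-monotonicity argument for $\Delta_{\gS}\ge 0$, the submodular per-step margin lemma for Greedy on $\vmu\odot\gO_{\gS^*}$, and the confidence-radius threshold of order $\Delta_{\gS^*}/m$ forcing Greedy on the UCB vector to select $\gS^*$ are exactly the paper's Lemmas on submodularity and confidence bounds, with the same polylogarithmic cost count. The only cosmetic differences are that you state the margin inequality with sets of size $k-1$ (the paper pads with a dummy node $u'\notin\gS$ to stay in the feasible super-arm family, which your masking observation already justifies) and that you invoke a $1/p^*$ dependence that is vacuous for PMC since triggering is deterministic.
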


The intuition of the proof is that although the Greedy oracle is an approximation oracle, by using \algname{CUCB}, it ``acts'' like an exact oracle when the number of observations for each base arm is large enough, and thus we can follow the proof idea for Theorem \ref{thm:attackability-known}.


\looseness=-1
Another interesting application of \cref{thm:attackability-known} is simple episodic reinforcement learning (RL) settings where the transition probability is known, i.e., white-box attack  \citep{rakhsha2020policy,zhang2020adaptive}. Since the adversarial attacks on RL is a very large topic and little diverted from the current paper, we only provide minimal details for this result. Please refer to \cref{sec:proof-connection-rl} for more details.

\begin{corollary}[Informal]\label{cor:connection-rl-informal}
    For episodic reinforcement learning setting where the transition probability is known, the reward poisoning attack for the episodic RL can be reduced to the adversarial attack on CMAB, and thus the ``attackability'' of the episodic RL instances are captured by \cref{thm:attackability-known}.
\end{corollary}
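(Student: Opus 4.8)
The plan is to exhibit an explicit, structure-preserving reduction from episodic RL with known transitions to the CMAB model of \cref{sec:preliminary}, and then invoke \cref{thm:attackability-known} directly. First I would fix the dictionary between the two problems: each state–action pair $(s,a)$ at each step $h$ of the horizon becomes a \emph{base arm}, with mean $\mu_{(s,a,h)}$ equal to the expected immediate reward $\E[r(s,a)]$; a (step-dependent) policy $\pi$ becomes a \emph{super arm} $\gS_\pi$; and one episode rollout becomes one round of CMAB play. The stochastic trajectory generated by running $\pi$ under the known transition kernel visits a random set of state–action pairs, and I would take exactly this set to be the triggered subset $\tau_t$. Thus the triggering function $\gD^{\text{trig}}(\gS_\pi,\cdot)$ is the occupancy distribution induced by $\pi$ and the kernel, and the triggering probability $p^{\gD,\gS_\pi}_{(s,a,h)}$ is the state–action occupancy measure $d^\pi_h(s,a)$. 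Because the transition kernel is known to the adversary (the white-box assumption), these triggering probabilities are known, which is precisely the ``known environment'' regime under which \cref{thm:attackability-known} applies.

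Second, I would verify that the reduction is faithful at the level of rewards and costs. The expected episodic return of $\pi$ is $V^\pi=\sum_{h,(s,a)} d^\pi_h(s,a)\,\mu_{(s,a,h)}$, so defining $r_{\gS_\pi}(\vmu)=V^\pi$ makes the CMAB expected reward coincide with the policy value and exhibits it as an explicit function of $\vmu$, as required by the model. A reward-poisoning attack in RL perturbs the observed immediate reward at each visited $(s,a,h)$; under the dictionary this is exactly a perturbation of the triggered base-arm outcomes, and the RL attack cost (number of corrupted reward observations) equals the CMAB cost $\sum_t \|\tilde{\bm X}^{(t)}-\bm X^{(t)}\|_0$. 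Hence a successful polynomial attack on one side transfers verbatim to the other.

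Third, I would check that \cref{ass:monotonicity} and \cref{ass:tpm-bounded-smoothness} hold for the induced instance. Monotonicity is immediate: since the occupancy weights $d^\pi_h(s,a)$ are nonnegative and, under known and fixed transitions, independent of the reward means, raising any $\mu_{(s,a,h)}$ can only increase $V^\pi=r_{\gS_\pi}(\vmu)$. For smoothness I would bound $|r_{\gS_\pi}(\vmu)-r_{\gS_\pi}(\vmu')|=|\sum_{h,(s,a)} d^\pi_h(s,a)(\mu_{(s,a,h)}-\mu'_{(s,a,h)})|\le \sum_{h,(s,a)} d^\pi_h(s,a)\,|\mu_{(s,a,h)}-\mu'_{(s,a,h)}|$, which is exactly the $1$-norm TPM bound with $B=1$ since $d^\pi_h=p^{\gD,\gS_\pi}$. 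With both assumptions in place the instance is a legitimate CMAB instance and \cref{thm:attackability-known} classifies its polynomial attackability via the sign of $\Delta_{\gM}$; unwinding $\Delta_{\gS_\pi}=r_{\gS_\pi}(\vmu)-\max_{\pi'\neq\pi} r_{\gS_{\pi'}}(\vmu\odot\gO_{\gS_\pi})$ gives the RL reading that the target policy must dominate every competitor once rewards outside its own reachable set are zeroed out.

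The main obstacle I anticipate is not any single inequality but making the episodic, multi-step structure fit the single-shot triggering formalism cleanly: one must carry the step $h$ in the base-arm index (so the same $(s,a)$ visited at different depths are distinct arms), confirm that the occupancy measure is a genuine triggering distribution over $2^{[m]}$, and respect the known-environment caveat emphasized in \cref{sec:unknown}. Because the corollary is stated informally, I would keep the trajectory-to-triggering identification at the level of occupancy measures and defer the full bookkeeping for the general policy class to \cref{sec:proof-connection-rl}.
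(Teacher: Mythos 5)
Your reduction is essentially the same as the paper's: base arms are state--action reward variables, policies are super arms, the triggered set is the visited state--action pairs, the super-arm reward is the value function written as an occupancy-weighted linear combination of the $\mu$'s, and \cref{ass:monotonicity} and \cref{ass:tpm-bounded-smoothness} (with $B=1$) follow from that linearity. The only cosmetic difference is that you index base arms by $(s,a,h)$ so the triggering probabilities are genuine occupancy probabilities, whereas the paper works with stationary MDPs, indexes arms by $(s,a)$, and sets $p^{\gD,\pi}_{(s,a)}=\sum_{h}A^{\pi}_h(s,a)$; both are fine and the paper notes the non-stationary extension is trivial.
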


We sketch the reduction below. Please refer to \Cref{sec:proof-connection-rl} for formal version of \Cref{cor:connection-rl-informal} and its proof.

\begin{proof}[Proof sketch] We study episodic Markov Decision Process (MDP),
denoted by the tuple $(\texttt{State}, \texttt{Action}, H, \gP, \vmu)$, where \texttt{State} is the set of states, \texttt{Action} is the set of actions, $H$ is the number of steps in each episode, $\gP$ is the transition metric such that $\sP(\cdot|s, a)$ gives the transition distribution over the next state if action $a$ is taken in the current state $s$, and $\vmu$: $\texttt{State} \times \texttt{Action} \to \R$ is the expected reward of state action pair $(s, a)$. We assume that the states $\texttt{State}$ and the actions $\texttt{Action}$ are finite and the transition probability $\gP$ is known in advance. For simplicity, we only consider deterministic policy $\pi$.

Then we construct the CMAB instance.
\begin{enumerate}
    \item There are $|\texttt{State}| \times |\texttt{Action}|$ base arms, each base arm $(s,a)$ denote the random reward $r(s,a)$, with unknown expected value $\E [r(s,a)] = \mu(s,a)$.
    \item Each policy $\pi$ is a super arm, and the expected reward of super arm $\pi$ is just the value function.
    \item Note that when we select a policy $\pi$ (a super arm) and execute $\pi$ for this episode $t$, a random set of base arms $\tau_t$ will be triggered and observed following distribution $\gD$, and the distribution $\gD$ is determined by the episodic Markov Decision Process. Define $A^{\pi}_h(s,a)$ as the probability that it will go to state $s$ with $\pi(s) = a$ at time $h$, then the base arm $(s,a)$ is triggered with probability $\sum_{h=1}^H A^{\pi}_h(s,a)$ in each episode, thus $p_{(s,a)}^{\gD,\pi} = \sum_{h=1}^H A^{\pi}_h(s,a)$, where $p_{(s,a)}^{\gD,\pi}$ is the probability to trigger arm $(s,a)$ under policy $\pi$ (defined in \cref{sec:model}).
\end{enumerate}

In this way, the simple version of episodic RL (white-box) is reduced to CMAB, and one can also verify that Assumption \ref{ass:monotonicity} and Assumption \ref{ass:tpm-bounded-smoothness} hold.
\end{proof}

Although the above corollary is a simple \emph{direct} application of \cref{thm:attackability-known} to white-box episodic RL, we anticipate that comparable techniques can be applied to analyze much more complex RL problems and provide an attackability characterization at the ``instance'' level. This part is left as future work.


\section{Attack in Unknown Environment}\label{sec:unknown}

In previous section, we discussed polynomial attackability of a CMAB instance in a known environment, i.e., all parameters of the instance such as the reward distributions of super arms and outcome distributions of base arms are given. However, in practice the environment is often unknown to the adversary. Previous studies of adversarial attack on MAB and linear bandits showed that the attackability of an instance in a known or an unknown environment \cite{liu2019data,wang2022linear} are the same. In this section, we show that the polynomial attackability can be \textit{different} for the \textit{same} instance between a known or an unknown environment. 

\begin{restatable}{theorem}{thmhardnessunknown}\label{thm:hardness-unknown}
    There exists a CMAB instance satisfying Assumption \ref{ass:monotonicity} and \ref{ass:tpm-bounded-smoothness} such that it is polynomially attackable given the parameter $\vmu$ (induced from the instance's base arms' joint distribution $\gD$), but there exists no attack algorithm that can efficiently attack the instances for \algname{CUCB} algorithm with unknown parameter $\vmu$.
\end{restatable}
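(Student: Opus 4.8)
The plan is to prove a separation by exhibiting a family of CMAB instances that are pairwise indistinguishable to an adversary who does not know $\vmu$, yet each of which is polynomially attackable once $\vmu$ is revealed. Concretely, I would construct two instances $I_1,I_2$ sharing the same base-arm set $[m]$, the same action space $\sS$, the same target set $\gM$, and the same nominal triggering structure, differing only in the mean of a single ``discriminating'' base arm $b$ together with which super arm can trigger it. For the known direction I would check that both instances satisfy $\Delta_{\gM}>0$ and invoke \cref{thm:attackability-known}: each instance is then polynomially attackable, with \Cref{alg:attack-exact-oracle} (the reward-to-zero suppression, or its cost-to-one analogue for minimization problems) realizing the attack once $\gO_{\gS}$ and the sign of the gap are known. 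For the unknown direction I would run a minimax/coupling argument: because the two instances look identical to the adversary until a low-probability triggering event occurs, no single attack strategy can succeed on both, so the family is not polynomially attackable when $\vmu$ is unknown.

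The construction I would use places $b$ so that it is triggered \emph{only} through the target super arm and only with probability $p^*$, while the sign of the relevant gap hinges on $\mu_b$. To stay consistent with \Cref{ass:tpm-bounded-smoothness} (an arm triggered with probability $p^*$ can influence a super arm's reward by at most $B\,p^*$), I would calibrate the remaining means so that the true margin between $\gM$ and its best competitor is itself of order $p^*$; then $\Delta_{\gM}=\Theta(p^*)>0$ in both instances, so both are attackable when known with cost $\mathrm{poly}(m,1/p^*,K)$, yet the \emph{correct} manipulation under \Cref{alg:attack-exact-oracle} has opposite effect in the two instances, so that executing the wrong choice leaves a genuine competitor super arm with a higher \algname{CUCB} index than $\gM$ and the victim concentrates on a non-target arm. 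I would verify \Cref{ass:monotonicity} and \Cref{ass:tpm-bounded-smoothness} directly using the linear sum-of-triggered-outcomes reward (for which $B=1$), and compute $\Delta_{\gM}$ from \Cref{defn:gap} in each instance to confirm $\Delta_{\gM}>0$.

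The core of the argument is the unknown-environment lower bound. I would fix any attack algorithm and couple its interaction with \algname{CUCB} on $I_1$ and on $I_2$: before $b$ has ever been triggered, the outcome laws of all observed arms coincide across the two instances, so on a shared random seed the adversary's manipulations — and hence \algname{CUCB}'s pulls — are identical on $I_1$ and $I_2$. Since $b$ is triggered only via the target at rate $p^*$, it is not observed $\Omega(1/p^*)$ times (the precision needed to resolve a $\Theta(p^*)$ margin) until the target has already been pulled that many times, so throughout the decision-critical window the adversary is forced to commit to the same choice on both instances. By the incompatibility above, that commitment drives \algname{CUCB} onto a non-target super arm on at least one of $I_1,I_2$; I would then argue that reversing this requires re-suppressing the competitor against \algname{CUCB}'s accumulated confidence at cost $\Omega(T)$, violating the sublinear-cost requirement of \Cref{defn:poly-attack}. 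I expect the main obstacle to be making this indistinguishability fully rigorous against an \emph{adaptive} adversary that observes the true, pre-manipulation outcomes for free and against \algname{CUCB}'s optimism-driven exploration: the delicate point is to certify simultaneously that $b$ must be pivotal for the correct attack (which, given the smoothness bound, forces the razor-thin $\Theta(p^*)$ margin) and that the single bit distinguishing $I_1$ from $I_2$ is genuinely unavailable within the window and budget in which it would have to be acted upon, rather than merely delayed.
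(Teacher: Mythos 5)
Your high-level plan (a family of instances that are individually attackable when $\vmu$ is known but cannot all be attacked by one strategy when it is unknown) matches the paper's, but the mechanism you propose for the lower bound has a genuine gap. You build only \emph{two} instances distinguished by a single base arm $b$ that is triggered at rate $p^*$, and you derive hardness from the adversary's inability to tell them apart before $b$ is observed enough times to resolve a $\Theta(p^*)$ margin. The problem is that Definition~\ref{defn:poly-attack} explicitly tolerates anything polynomial in $m$, $1/p^*$, $K$: the adversary may spend $\mathrm{poly}(1/p^*)$ rounds forcing the target to be pulled until $b$ has been observed $\tilde O(1/(p^*)^2)$ times, identify which of $I_1,I_2$ it is facing, and only then commit. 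The damage done during this probing window is itself only polynomial, so your claim that reversing the wrong commitment costs $\Omega(T)$ is unsupported --- \algname{CUCB}'s ``accumulated confidence'' after polynomially many observations can be overturned with polynomially many further (possibly corrupted) observations. A two-point indistinguishability argument can therefore only \emph{delay} the attack by a polynomial amount, which does not contradict polynomial attackability; you acknowledge this worry at the end but do not resolve it, and with this construction it cannot be resolved.

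The paper avoids this by making the cost of \emph{searching} over candidates superpolynomial rather than relying on statistical indistinguishability. It builds $n$ instances $\gI_1,\dots,\gI_n$ on $2n$ base arms with all triggering deterministic: in $\gI_i$ only $\gS_i\in\gM$ has $\Delta_{\gS_i}=\epsilon>0$ while every other $\gS_j$ has $\Delta_{\gS_j}=-\epsilon<0$, and crucially there is one extra super arm $\gS_{n+1}$ that observes \emph{all} of $b_1,\dots,b_n$ at once. To make \algname{CUCB} pull $\gS_j$ the attacker must hold $\ucb{b_{j'}}\le\epsilon$ for all $j'\ne j$ yet $\ucb{b_j}\ge 1-2\epsilon$; raising a suppressed $b_{j'}$ back up can only be done through $\gS_{n+1}$, which simultaneously inflates the observation counts of every other $b$. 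Lemmas~\ref{lem:proof-unknown-hardness-obs} and \ref{lem:proof-unknown-hardness-exp} turn this into an induction showing that visiting $l$ distinct members of $\gM$ forces $(1/2\epsilon)^{l-1}$ pulls of $\gS_{n+1}$, i.e.\ exponential cost in $l$; combined with a uniformly random choice of $i$ and Lemma~\ref{lem:proof-unattackable-obs-bound} (negative-gap arms are pulled only sublinearly often), the attacker fails with constant probability. If you want to salvage your route, you would need to replace the pairwise-indistinguishability step with some structure that makes the cumulative cost of ruling out candidates grow faster than any polynomial --- which is exactly the role the shared super arm $\gS_{n+1}$ plays in the paper's example.
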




The following example construct hard instances that satisfying Theorem~\ref{thm:hardness-unknown}. We also illustrate the example with $n=5$ in Figure~\ref{fig:hard-example}.

\setlength{\intextsep}{0pt}
\begin{figure}
    \label{fig:hard-example}
    \centering
    \definecolor{myblue}{RGB}{80,80,160}
    \definecolor{mygreen}{RGB}{80,160,80}
    \resizebox{7cm}{!}{
    \begin{tikzpicture}[thick,
      fsnode/.style={draw, circle, fill=myblue},
      ssnode/.style={draw, circle, fill=mygreen},
      every fit/.style={rectangle, rounded corners,draw}
    ]
    
        \begin{scope}[start chain=going right,node distance=10mm]
        \foreach \i in {1,2,...,5}
          \node[fsnode,on chain] (f\i) [label=above: $a_\i$] {};
        \end{scope}
        
        \begin{scope}[xshift=0cm,yshift=-1cm,start chain=going right,node        distance=10mm]
        \foreach \i in {1,2,...,5}
          \node[ssnode,on chain] (s\i) [label=below: $b_\i$] {};
        \end{scope}

        \node [blue,fit=(f1) (s1),label={[font=\small,text=blue]30:$\gS_1$}] {}; 
        \node [blue,fit=(f2) (s2),label={[font=\small,text=blue]30:$\gS_2$}] {}; 
        \node [blue,fit=(f3) (s3),label={[font=\small,text=blue]30:$\gS_3$}] {}; 
        \node [blue,fit=(f4) (s4),label={[font=\small,text=blue]30:$\gS_4$}] {}; 
        \node [blue,fit=(f5) (s5),label={[font=\small,text=blue]30:$\gS_5$}] {}; 
        \node [orange,fit=(s1) (s5),label={[font=\small,text=orange]30:$\gS_6$}] {};
        \draw[rounded corners] ($(f5)+(1,0)$)  rectangle ($(s5)+(1.5,0)$) node[pos=.5] {$\gS_0$};
    \end{tikzpicture}
    }
    \caption{Example \ref{exp:hard-example} with $n=5$}
    \vspace{-3mm}
\end{figure}
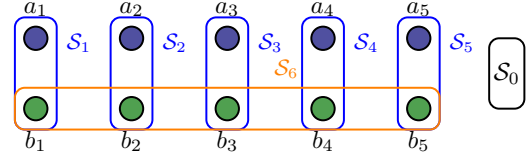

\begin{restatable}[Hard example]{example}{exphard}\label{exp:hard-example}
    We construct the following CMAB instance $\gI_i$. There are $2n$ base arms, $\{a_i\}_{i\in [n]}$ and $\{b_i\}_{i\in [n]}$, and each corresponds to a random variable ranged in $[0,1]$. We have $\mu_{a_j} = 1-2\epsilon$ for all $j \neq i$ and $\mu_{a_i} = 1$, and $\mu_{b_j} = 1-\epsilon$ for all $j\in [n]$ for some $\epsilon > 0$. Then we construct $n+2$ super arms. $\gS_j$ for all $j\in [n]$ will observe base arms $a_j$ and $b_j$, and $r_{\gS_j} = \mu_{a_j}+\mu_{b_j}$. There is another super arm $\gS_{n+1}$ that will observe base arms $b_j$ for all $j\in [n]$, and $r_{\gS_{n+1}} = \sum_{j\in [n]}\mu_{b_j} + (1-\epsilon)$. Besides, there is also a super arm $\gS_0$ with constant reward $\gS_0 = 2-2\epsilon$ and does not observe any base arm. Then, the attack super arm set $\gM = \{\gS_j\}_{j\in [n]}$. In total, we can construct $n$ hard instances $\gI_i$ for all $i\in [n]$.
\end{restatable}



The intuition of why this example is hard is that it ``blocks'' the exploration of base arms $a_1,\dots,a_m$ simultaneously. This means that the algorithm needs to visit exponential super arms and the cost is also exponential, violating the definition of polynomial attackable. Next, we give a proof sketch of Theorem \ref{thm:hardness-unknown} and explain more about the intuition, and the detailed proof is deferred to Appendix \ref{app:proof-unknown}.

\begin{proof}[Proof sketch of Theorem \ref{thm:hardness-unknown}]
First, given an instance $\gI_i$, if we know the environment parameters $\vmu$, we can attack the CMAB instance by setting the rewards besides base arms $a_i,b_i$ to $0$. However in the unknown setting, if we still want to attack the instance, we need to know the base arm $a_j$ for which $\mu_{a_j} = 1$. Thus, the attack algorithm needs to pull at least $\Omega(n)$ super arms in $\gM = \{\gS_j\}_{j\in [n]}$. 

For simplicity, assume that the attack algorithm lets \algname{CUCB} pull super arms $\gS_1,\gS_2\dots,\gS_{n'}$ in order. Now if \algname{CUCB} pulls super arm $\gS_i$, it means that $\ucb{a_i} \ge 1-2\epsilon$ and $\ucb{b_i} \ge 1-2\epsilon$. Otherwise, $\ucb{a_i} + \ucb{b_i} < 2-2\epsilon$ and \algname{CUCB} will choose to play $\gS_0$ instead. Besides, we also require $\ucb{b_j} \le \epsilon$ for all $j\neq i$. Otherwise, $\ucb{b_i} + \ucb{b_j} + 1-\epsilon > \ucb{a_i} + \ucb{b_i}$, and \algname{CUCB} will choose $\gS_{n+1}$.

Suppose that \algname{CUCB} first pulls $\gS_1$, and then pulls $\gS_2$. When pulling $\gS_1$, we know that $\ucb{b_2} \le \epsilon$, but if it needs to pull $\gS_2$, $\ucb{b_2} \ge 1-2\epsilon$. Note that $b_2$ can only be observed by $\gS_2$ and $\gS_{n+1}$, and without pulling $\gS_2$, the attack algorithm needs to let \algname{CUCB} to pull $\gS_{n+1}$ to change the UCB value of $b_2$. Then \algname{CUCB} needs to pull $\gS_{n+1}$ for at least $\Omega(1/\epsilon)$ times. Now say the attack algorithm needs \algname{CUCB} to pull $\gS_3$, the UCB value of $b_3$ needs to raise to at least $1-2\epsilon$ from at most $\epsilon$. However, note that $b_3$ has already been observed by at least $\Omega(1/\epsilon)$ times since $\gS_{n+1}$ is pulled by at least $\Omega(1/\epsilon)$ to observe $\gS_2$. Then, $\gS_{n+1}$ need to be pulled by $\Omega(1/\epsilon^2)$ times. Now if the attack algorithm wants \algname{CUCB} to pull $\gS_4$, \algname{CUCB} will pull $\gS_{n+1}$ for $\Omega(1/\epsilon^3)$ times. Because the attack algorithm needs \algname{CUCB} to visit $\Omega(n)$ super arms in $\gM$, the total cost is at least $1/\epsilon^{\Omega(n)} = 1/\epsilon^{\Omega(m)}$ since $m=2n$.
\end{proof}


\looseness=-1Despite the hardness result mentioned previously, an adversary can still attack the CMAB instance for certain instance or application. For example, an adversary can just randomly pick a super arm $\gS$ from the set $\gM$ and set $\gS$ to be the target arm in Algorithm \ref{alg:attack-exact-oracle}. The guarantee of this heuristic follows from Theorem \ref{thm:attackability-known}:

\begin{corollary}
    If $\gS$ satisfies $\Delta_{\gS} > 0$,\footnote{Note that  is $\Delta_{\gS}=0$ is a very special case and we do not consider it in the corollary. In order to analyze the case, we need to understand how the algorithm breaks the tie when two superarms have the same reward and this information is generally unknown.}  then running Algorithm \ref{alg:attack-exact-oracle} can successfully attack the CMAB instance under Definition \ref{defn:poly-attack}.
\end{corollary}


Note that we have $\Delta_{\gM} \ge 0$ for cascading bandit and online minimum spanning tree problem. Thus even in the unknown environment case, cascading bandit and online minimum spanning tree are mostly polynomially attackable (by applying Algorithm \ref{alg:attack-exact-oracle}). Also from Theorem \ref{thm:covering-attackability}, when solving probabilistic covering problem using CUCB algorithm with Greedy oracle, $\Delta_{\gM} \ge 0$ and thus it is also mostly polynomially attackable. However, for the shortest path problem, we do not know its attackability in the unknown environment since $\Delta_{\gM}$ can be either positive or negative. Designing attacking algorithms for other CMAB applications in the unknown setting is left as future work. 

\section{Numerical Experiments}\label{sec:exp}

In this section, we present the numerical experiments along with their corresponding results. We empirically evaluate our attack on four CMAB applications: probabilistic maximum coverage
, online minimum spanning tree 
, online shortest path problems 
, and cascading bandits 
. Additionally, we conduct experiments on the influence maximization problems discussed in \cref{sec:im-exp}.

\subsection{Experiment setup}

\paragraph{General setup} In our study, we utilize the Flickr dataset \citep{mcauley2012image} for the probabilistic maximum coverage, online minimum spanning tree, and online shortest path problems. We downsample a subgraph from this dataset and retain only the maximum weakly connected component to ensure connectivity, which comprises 1,892 nodes and 7,052 directed edges. We incorporate the corresponding edge activation weights provided in the dataset. For cascading bandits, we employ the MovieLens small dataset~\citep{harper2015movielens}, which comprises 9,000 movies. From this dataset, we randomly sample 5,000 movies for our experiments. Each experiment is conducted a minimum of 10 times, and we report the average results and variances. Please refer to \cref{sec:exp-details} for more details.

\begin{figure*}
    \centering
    \subfigure[]{\includegraphics[width=0.24\textwidth]{./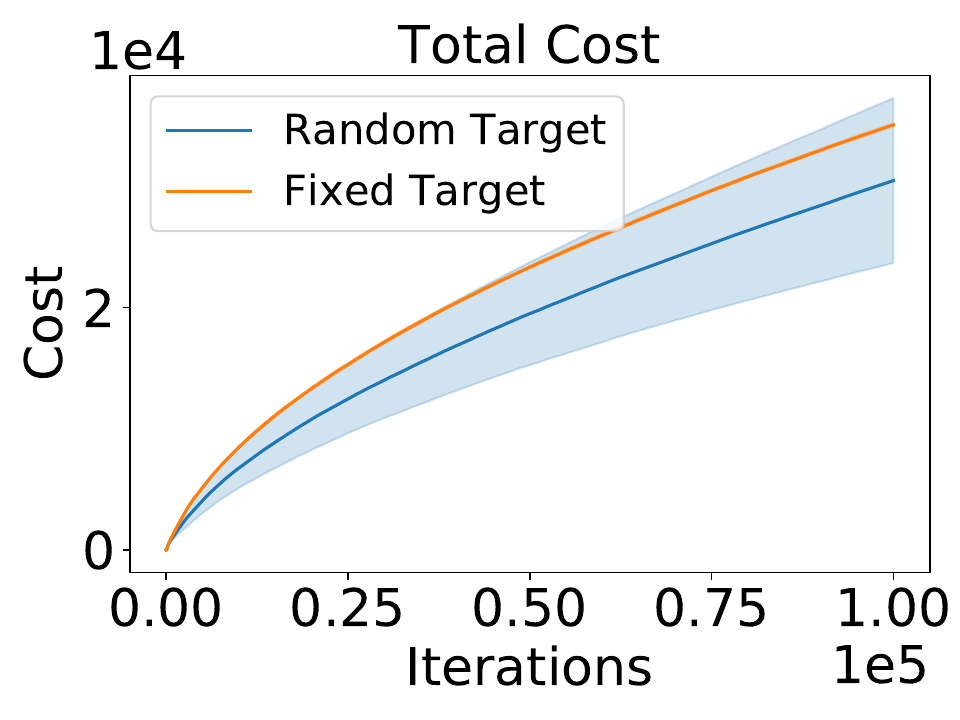}\label{fig:prob_coverage_cost}} 
     \subfigure[]{\includegraphics[width=0.24\textwidth]{./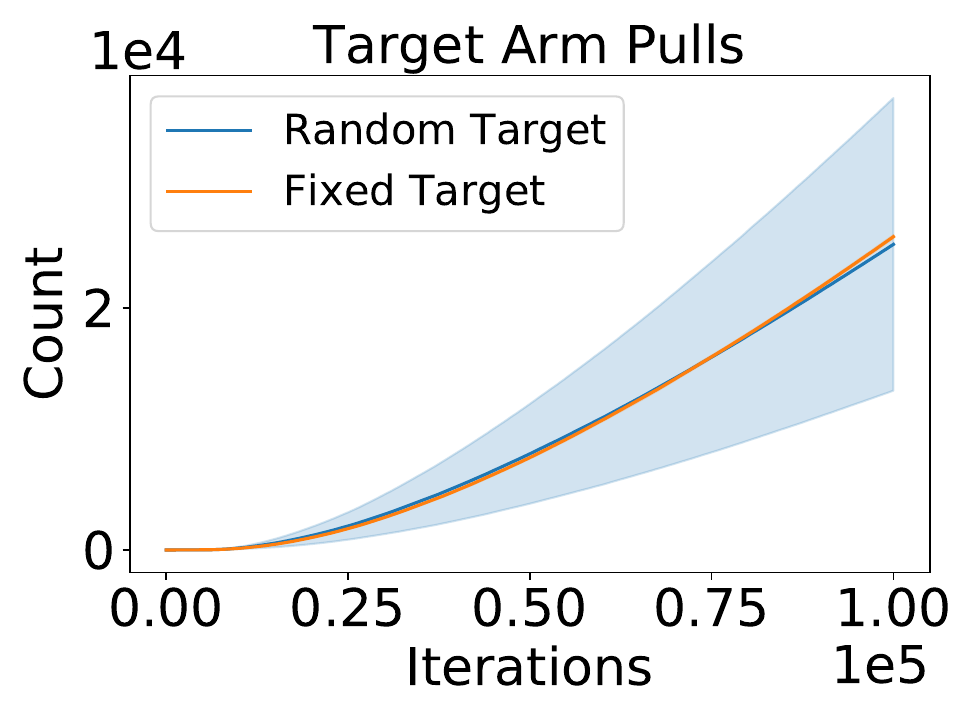}\label{fig:prob_coverage_rate}}
    \subfigure[]{\includegraphics[width=0.24\textwidth]{./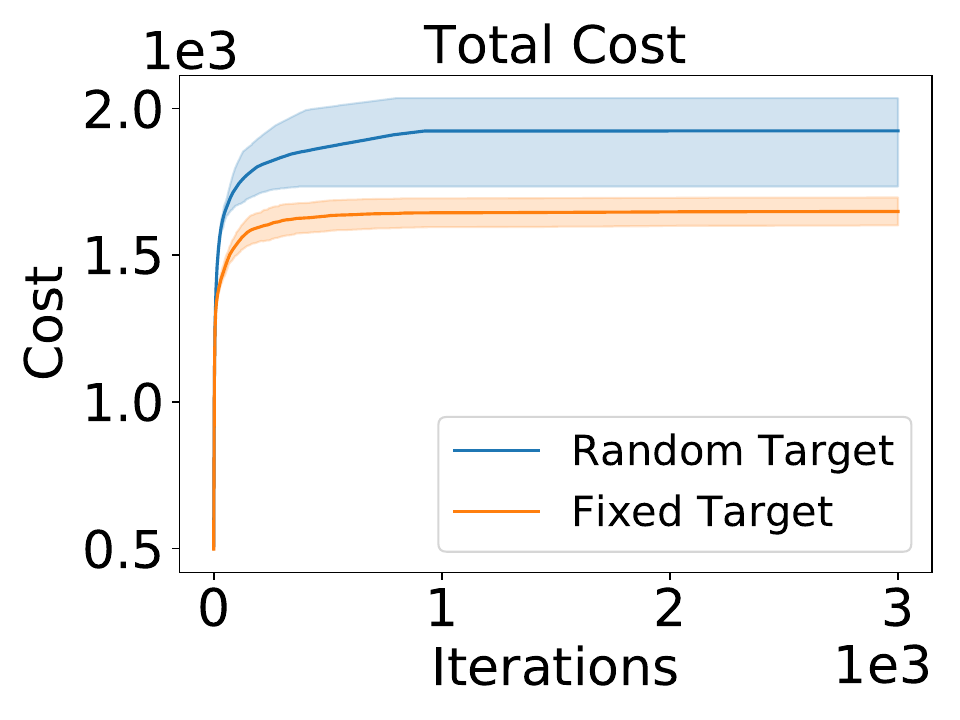}\label{fig: spanningtree_cost}} 
     \subfigure[]{\includegraphics[width=0.24\textwidth]{./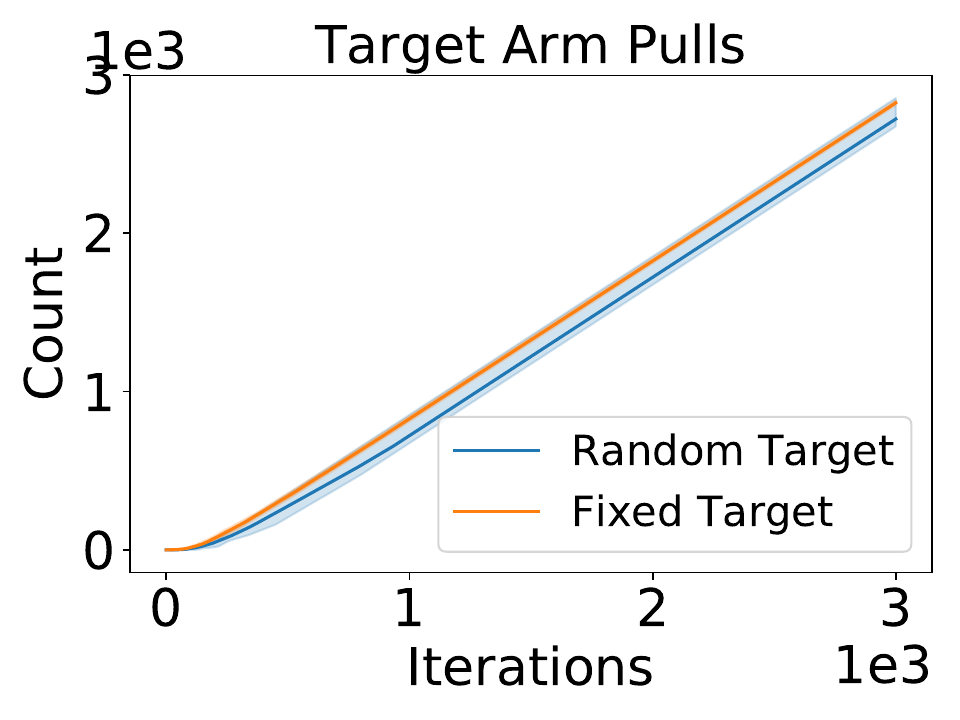}\label{fig: spanningtree_rate}}
     \subfigure[]{\includegraphics[width=0.24\textwidth]{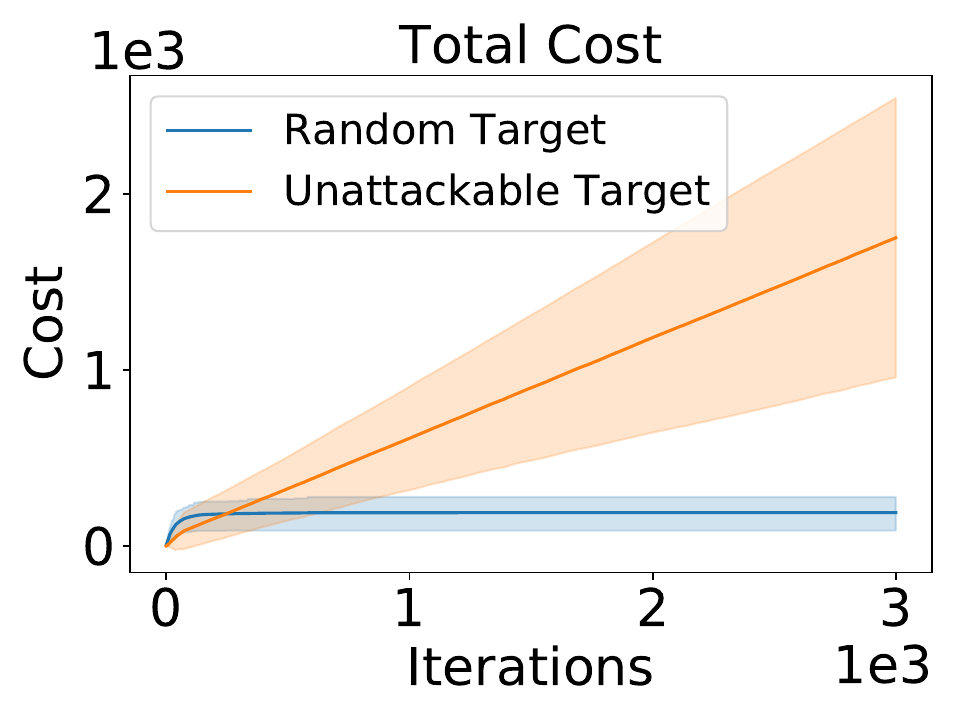}\label{fig: shortestpath_cost}} 
     \subfigure[]{\includegraphics[width=0.24\textwidth]{./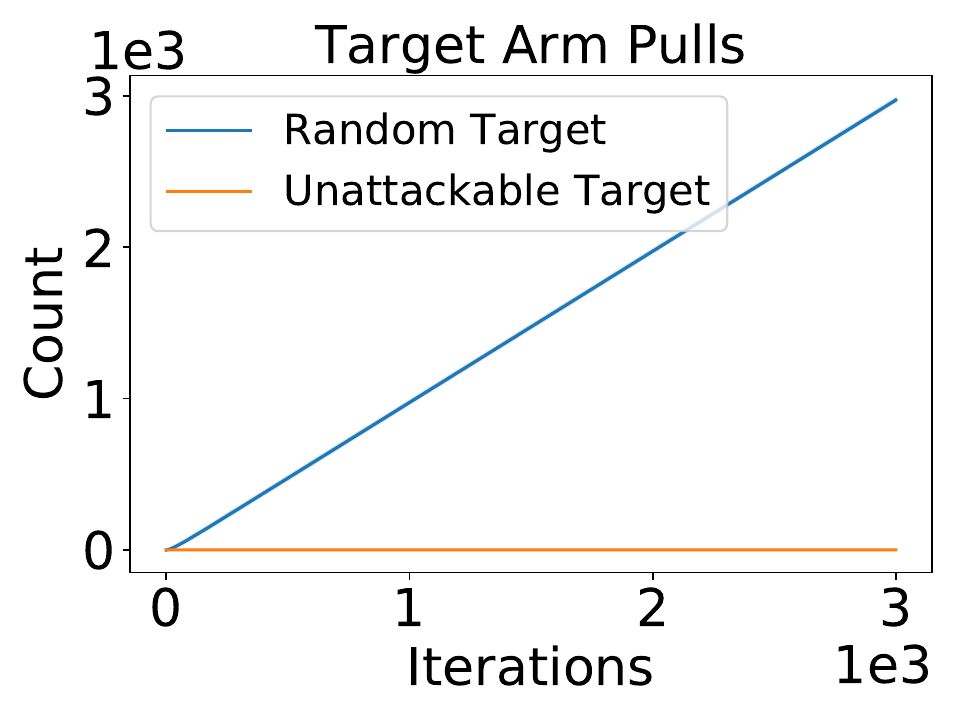}\label{fig: shortestpath_rate}}
     \subfigure[]{\includegraphics[width=0.24\textwidth]{./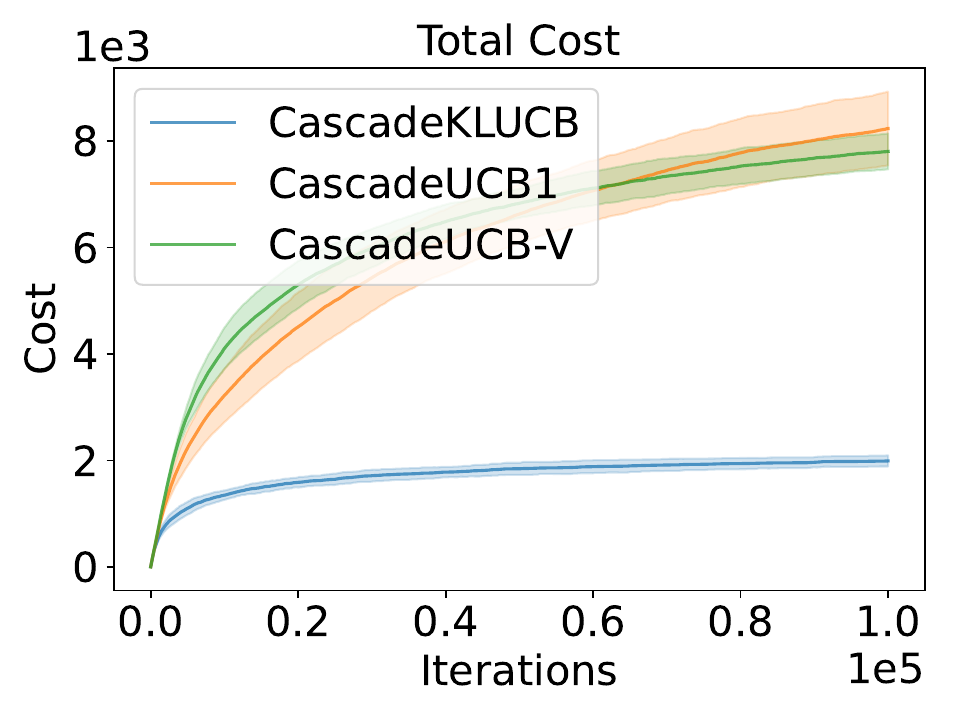}\label{fig:movielens_cost}} 
     \subfigure[]{\includegraphics[width=0.24\textwidth]{./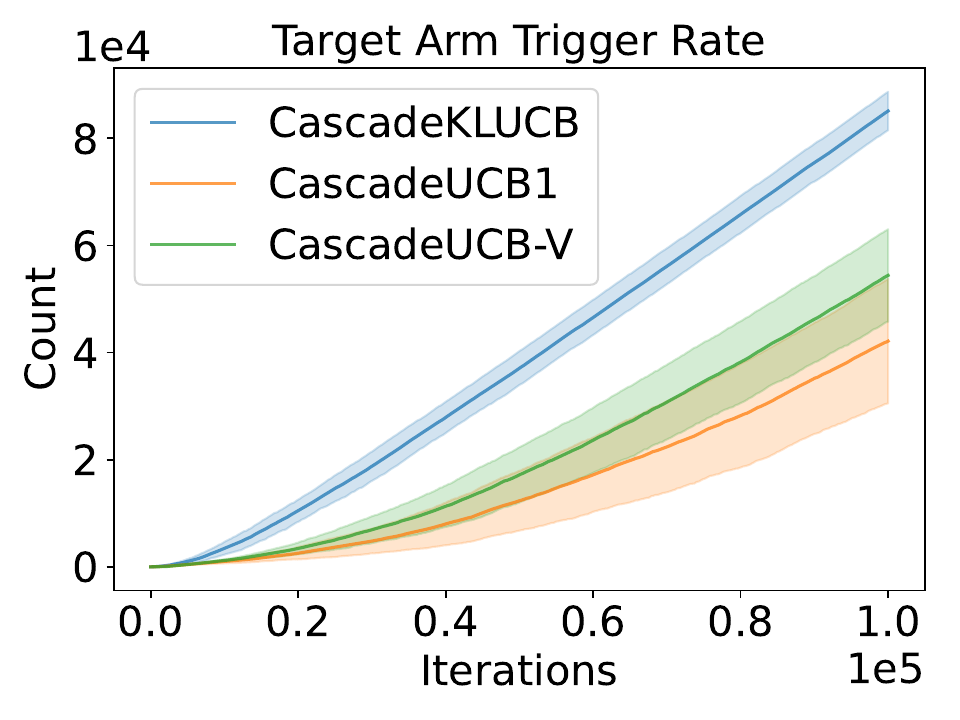}\label{fig:movielens_rate}}
     
    \caption{Cost and target arm pulls for: (\ref{fig:prob_coverage_cost}, \ref{fig:prob_coverage_rate}) probabilistic max coverage; (\ref{fig: spanningtree_cost}, \ref{fig: spanningtree_rate}) online maximum spanning tree; (\ref{fig: shortestpath_cost}, \ref{fig: shortestpath_rate}) online shortest path; (\ref{fig:movielens_cost}, \ref{fig:movielens_rate}) cascading bandits. Experiments are repeated for at least $10$ times and we report the averaged result and its variance.}
    \label{fig:results}
\end{figure*}

\paragraph{Probabilistic maximum coverage} We use the \algname{CUCB} algorithm~\citep{chen2016combinatorial} along with a greedy oracle. We consider two kinds of targets: In the first case, we calculate the average weight of all outgoing edges of a node, sort the nodes with decreasing average weight, and select the nodes $K+1,\dots,2K$, and the selected node set is denoted \textit{fixed target}. The second type is the \textit{random target}, where we randomly sample $K$ nodes whose average weight over all the outgoing edges is greater than $0.5$ to ensure no node with extremely sparse edge activation is selected. 

\paragraph{Online minimum spanning tree} We convert the Flickr dataset to an undirected graph, where we use the average probability in both directions as the expected cost of the undirected edge. We employ a modified version of Algorithm \ref{alg:attack-exact-oracle}, with line 4 changed to $\tilde X_i^T = 1$ since now we are minimizing the cost. We consider two types of targets: (1) the \textit{fixed target}, where $\gM$ containing only the second-best minimum spanning tree; and (2) the \textit{random target}, where $\gM$ contains the minimum spanning tree obtained on the graph with same topology but \textit{randomized weight} (mean uniformly sampled from $[0,1]$) on edges.

\paragraph{Online shortest path} We consider two types of targets $\mathcal{M}$: In the first type, $\mathcal{M}$ contains an unattackable path that is carefully constructed. We randomly draw a source node $\mathbf{s}$, then use random walk to unobserved nodes and record the path from $\mathbf{s}$. When we reach a node $k$ and the path weight is larger than the shortest path length from $\mathbf{s}$ to $k$ by a certain threshold $\theta$, we set $k$ as the destination node $\mathbf{t}$ and set the path obtained from the random walk as the target path. If after $50$ steps the target path still can not be found, we re-sample the source node $\mathbf{s}$. To avoid trivial unattackable cases, we require the shortest path should have more than one edge. In our experiment, we set the threshold $\theta$ as $0.5$.  We call this type of $\mathcal{M}$ as \textit{unattackable target}.
The second type of $\mathcal{M}$ is generated by randomly sampling the source node $\mathbf{s}$ and the destination node $\mathbf{t}$, and then finding the shortest path from $\mathbf{s}$ to $\mathbf{t}$ given a randomized weight on edges. We call this type of $\mathcal{M}$ as \textit{random target}. We sample 100 targets for both \textit{unattackable} and \textit{random} ones, and repeat the experiment 10 times for each target.

\looseness=-1\paragraph{Cascading bandit} In this experiment, we test \algname{CascadeKL-UCB}, \algname{CascadeUCB1}~\citep{kveton2015cascading}, and \algname{CascadeUCB-V}~\citep{vial2022minimax}. We compute a $d$-rank SVD approximation using the training data, which is used to compute a mapping $\phi$ from movie rating to the probability that a user selected at random would rate the movie with 3 stars or above.\footnote{Please refer to Section 6~\citep{vial2022minimax} for more details.} In our experiments, we select the target super arm from the subset of a base arms $\mathcal{M}$ whose average click probability is greater than $0.1$.

\subsection{Experiment results}
The experiment results are summarized in \cref{fig:results}, which show the cost and number of target super arm played under different settings.

\paragraph{Probabilistic maximum coverage} Fig. \ref{fig:prob_coverage_cost}, \ref{fig:prob_coverage_rate} show the results of our experiments on two types of targets. From Fig. \ref{fig:prob_coverage_rate} and \ref{fig:prob_coverage_cost}, we observe that the number of target arm pulls increases linearly after around $5,000$ iterations while the cost grows sublinearly. Considering the large number of nodes and edges, the cost is clearly polynomial to the number of base arms. This result validated our Theorem \ref{thm:covering-attackability} that probabilistic maximum covering is polynomially attackable.  We also report the variance in the figures, highlighting that attacking a random target results in larger variances for both cost and target arm pulls. 

\paragraph{Online minimum spanning tree} As demonstrated in Figs \ref{fig: spanningtree_cost} and \ref{fig: spanningtree_rate}, the total cost is sublinear and the rounds pulling the target arm are linear. This result  aligns with our Corollary \ref{cor:spanning_tree_cascade}: since the number of edges in spanning tree is the same and the `reward' of the super arm is a linear combination of base arms, $\Delta_{\mathcal{M}}\ge 0$. Thus online minimum spanning tree problem is always attackable  as suggested by experiments on a random target if there is only one minimum spanning tree.

\pgfdeclarelayer{background}
\pgfsetlayers{background,main}

\tikzstyle{vertex}=[circle,fill=black!25,minimum size=20pt,inner sep=0pt]
\tikzstyle{selected vertex} = [vertex, fill=red!24]
\tikzstyle{edge} = [draw,thick,-]
\tikzstyle{weight} = [font=\small]
\tikzstyle{selected edge} = [draw,line width=5pt,-,red!50]
\tikzstyle{ignored edge} = [draw,line width=5pt,-,black!20]

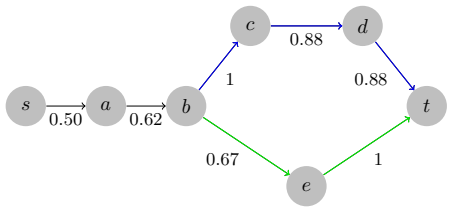
\begin{figure}[!t]
\centering
\resizebox{6cm}{!}{
\begin{tikzpicture}[scale=1.4, auto,swap]
    \foreach \pos/\name in {{(1,1)/a}, {(2,1)/b}, {(2.8,2)/c},
                            {(4.2,2)/d}, {(3.5,0)/e}, {(5,1)/t}, {(0,1)/s}}
        \node[vertex] (\name) at \pos {$\name$};
    \foreach \source/ \dest /\weight in {s/a/0.50, a/b/0.62,b/c/1,c/d/0.88,d/t/0.88,b/e/0.67,e/t/1}j
        \draw[->](\source) -> node[weight] {$\weight$} (\dest);
        \foreach \source / \dest in {b/c,c/d,d/t}
            \draw[blue=][->] (\source) -> (\dest);
        \foreach \source / \dest in {b/e,e/t}
            \draw[green][->] (\source) -> (\dest);            
\end{tikzpicture}
}
\caption{\small An unattackable shortest path from $s$ to $t$ in the Flickr dataset. Optimal path: $(s,a,b,e,t)$. Target path: $(s,a,v,c,d,t)$. The cost on $(b,c,d,t)$ is larger than the number of edges on $(b,e,t)$, and the attacker cannot fool the algorithm to play the target path.}
\label{fig: unattackalbe_shortest_path_case}
\end{figure}

\paragraph{Online shortest path} In Figs \ref{fig: shortestpath_cost} and \ref{fig: shortestpath_rate}, we show that for the random target $\mathcal{M}$, the total cost is sublinear, and the target arm pulls are linear. 
For the unattackable targets $\mathcal{M}$, the total cost is linear while the target arm pulls are almost constant, which means they are indeed unattackable. We show an example from the Flickr dataset in Figure \ref{fig: unattackalbe_shortest_path_case}.

\looseness=-1
\paragraph{Cascading bandits} We can clearly observe from Fig. \ref{fig:movielens_rate} $\&$ \ref{fig:movielens_cost} the number of times the target arm is played increases linearly, while the cost of attack is sublinear. Considering the large number of base arms $m$, the results validated Corollary \ref{cor:spanning_tree_cascade} that cascading bandits is polynomially attackable. 

\section{Discussions, Limitation, and Open Problems}

\paragraph{Attack algorithms with better cost} 

The findings presented in \Cref{thm:attackability-known} comprehensively delineate the polynomial attackability of CMAB instances, wherein the susceptible instances are vulnerable to the attack strategy outlined in \Cref{alg:attack-exact-oracle}. However, the attack cost associated with \Cref{alg:attack-exact-oracle} may not be optimal. There is a significant interest in developing attack algorithms that achieve lower attack cost compared to \Cref{alg:attack-exact-oracle}. Furthermore, establishing a lower bound for the attack costs associated with CMAB instances is an intriguing area for future research.


\paragraph{More discussions on black-box settings}

Our Theorem 3.6 characterizes the unattackable CMAB instances, which is the (intrinsic) robustness of CMAB instances for the known environment. On the other hand, it is interesting to consider if there is any robustness guarantee of CMAB instances for the unknown environment. 

\looseness=-1For the CMAB instances in the unknown environment, the only guarantee (derived from this paper) is that if the instance of a certain problem is always attackable in the known environment, then it must be attackable in the unknown environment. For example, in online minimum spanning tree and cascading bandit problems, the instances always have non-negative gap (Definition 3.5, Corollary 3.8), and thus always attackable in the unknown case. For other problem instances, we might need to discuss their attackability in the unknown setting case by case, which is probably very hard. Characterizing the robustness guarantee for CMAB instances in the unknown setting is an interesting open problem.



\paragraph{Limitation}
One limitation of our findings is that our attackability characterization is limited to one threat model: reward poisoning attacks. The characterization cannot be directly generalized to environment poisoning attacks ~\citep{rakhsha2020policy,sun2021vulnerabilityaware,xu2021transferable,rangi2022understanding} where the adversary can directly change the environment such as reward function. Environment poisoning is more powerful than reward poisoning and perturbation in environment will invalidate our analysis regarding $\Delta_{\mathcal{M}}$. On the other hand, our attackability characterization can be applied to action poisoning attacks \citet{liu2020action}, which in principle is still reward poisoning where the reward cannot be arbitrarily changed but has to be replaced by another action's reward. We leave the study of attackability of CMAB under other threat models as future work.

\section{Conclusion}\label{sec:conclusion}
\looseness=-1
In this paper, we provide the first study on attackability of CMAB and propose to consider the \emph{polynomial attackability}, a definition accommodating the exponentially large number of super arms in CMAB. We first provide a sufficient and necessary condition of polynomial attackability of CMAB. Our analysis reveals a surprising fact that the polynomial attackability of a particular CMAB instance is conditioned on whether the bandit instance is known or unknown to the adversary. In addition, we present a hard example, such that the instance is polynomially attackable if the environment is known to the adversary and
polynomially unattackable if it is unknown. Our result reveals that adversarial attacks on CMAB are difficult in practice, and a general attack strategy for any CMAB instance does not exist since the environment is mostly unknown to the adversary. 
We propose an attack algorithm where we show that the attack is guaranteed to be successful and efficient if the instance is attackable. 
Extensive experimental results on real-world datasets from four applications of CMAB validate the effectiveness of our attack algorithm. 
As of future work, we will further investigate the impact of probabilistic triggered arm to the attackability, e.g., the attackability of online influence maximization given different diffusion models. It is also important to generalize the hardness result in unknown environment, e.g., a general framework to reduce unattackable CMAB problems to the hardness example.  We also leave the study of attackability of CMAB under other threat models as future work.

\section*{Impact Statement}\label{sec:impact}

This research studied the vulnerability of combinatorial bandits against reward poisoning attacks. Our findings on attackability characterization and the attack algorithm have the potential to inspire the design of robust CMAB algorithms, which would benefit many real-world applications including the ones studied in this paper.


\bibliography{refs}
\bibliographystyle{icml2024}

\appendix
\onecolumn
\clearpage

\section{Additional Experiments}\label{sec:additional-exp}
In this section, we provide additional information regarding the experiments. In \cref{sec:exp-details}, we give more details for the experiments in \cref{sec:exp}. In \cref{sec:im-exp}, we show the results of our experiments on online influence maximization.

\subsection{More experiment details}\label{sec:exp-details}
For the online shortest path, online minimum spanning tree and probabilistic maximum coverage we use the Flickr dataset \citep{mcauley2012image}. We first select a fraction of the nodes such that their degree is in the range $[m, n]$. Here, we take $m=2, n=75$.
We then clean up the graph by selecting the largest connected component, which has $1892$ nodes and $7052$ edges.

For cascading bandits, we use the small version of the MovieLens Dataset ~\citep{harper2015movielens}, which contains $9000$ movies. Each movie has a rating (in the range of one to five stars) given to it by several users. To convert the ratings from stars to average click probability by any user, we consider the click probability close to the probability that the movie has a rating of $3$ stars or above. We follow ~\citet{vial2022minimax} to generate a mapping $\phi$ that goes from ratings in stars to the average click probability for some user selected at random. However, as the average click probabilities are sparse, we use only the top $1000$ movies in our experiments.

\subsection{Experiment results on Influence Maximization}\label{sec:im-exp}

Here we provide additional experiment results for the online influence maximization (IM) problem.

\paragraph{Basic influence maximization settings.} The IM problem involves selecting (activating) an initial set of $K$ nodes. Each node $u$ can attempt to activate its neighbor $v$ with probability $p_{u,v}$. If node $v$ was not activated by node $u$ in time instant $i$, it cannot be activated by $u$ in any further time instant $i+1, i+2, \cdots$. This is called one step of diffusion. We continue this process until time step $T_0$, such that no new node can be activated in time step $T_0+1$. The goal of the IM problem is to select a set of $K$ nodes that maximize the final number of activated nodes. The goal in online IM is to select a set of $K$ nodes at each round $t$ that minimizes the total regret. At each round, the player observes the diffusion process, and receives the number of activated nodes as the reward. Note that the IM problem is very similar to the probabilistic maximum coverage problem, where the only difference is the length of the diffusion process since the probabilistic maximum coverage problem can be viewed as influence maximization when there is only one diffusion step.

\paragraph{Attack heuristics.} Note that the IM problem is NP-hard and only has $(1-1/e)$-approximation oracle. Thus, we do not have an attack strategy that has a theoretical guarantee from \cref{thm:attackability-known}. However, we can still have heuristics to attack the online influence maximization problem.

We define a set $S^\ell_{ex}$ that represents the extended target set. This set includes the target nodes and all other nodes $v$ at a distance of at most $\ell$ from any node $u$ in the target set $S$,
\begin{equation*}
    S' = \{v: \exists u \in S, v \in V, d(u, v) < \ell\}, \quad S^\ell_{ex} = S \cup S'
\end{equation*}
where $d(u, v)$ is the distance function between two nodes $u$ and $v$. We do not attack the edge between $u$ and $v$ if either of the nodes lies in $S^\ell_{ex}$. For all other observed edges, we modify the realization to $0$ (thus attack it).
Note that if $\ell=\infty$, then $S^\ell_{ex}$ contains all nodes, and thus there is no viable attack. If $\ell=1$, then $S^\ell_{ex}$ only includes the target set, and then the reward feedback is very similar to the probabilistic maximum coverage problem, and the ``expected value after attack'' for each arm is exactly the same as the probabilistic maximum coverage attack strategy (\cref{thm:covering-attackability}).
This attack heuristic simplifies the strategy for the attacker as there are more nodes in the extended set than in the target set, making the attack easier. In all the experiments, we use an $(\alpha, \beta)$-approximation oracle based on \citep{tang2015influence} (IMM oracle) and Fig \ref{fig:IM_vs_PMC_results} shows the corresponding results. Experiments are repeated for at least $5$ times and we report the averaged result and its variance.

\begin{figure*}[!t]
    \centering
    \subfigure[]{\includegraphics[width=0.23\textwidth]{./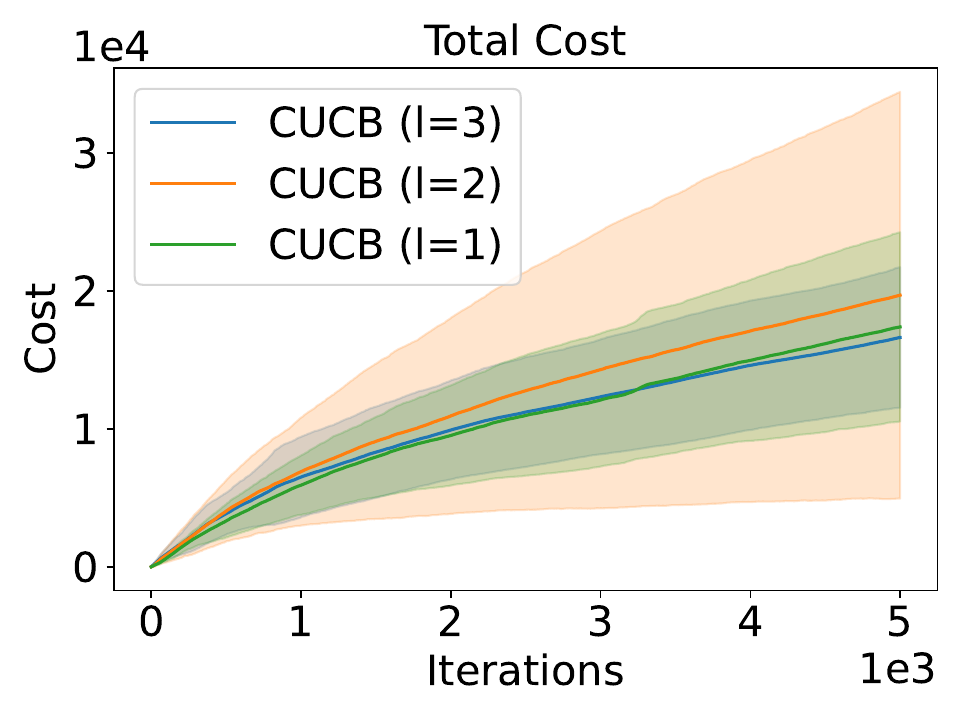}\label{fig:im_cost}} 
    \subfigure[]{\includegraphics[width=0.23\textwidth]{./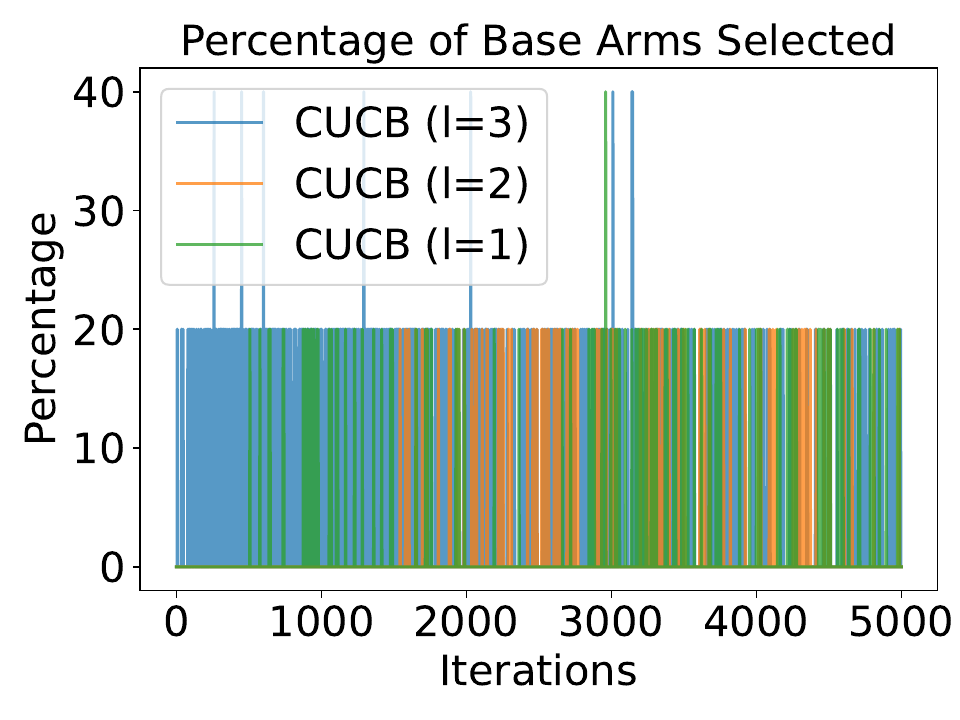}\label{fig: im_base_arm_percent}}
    \subfigure[]{\includegraphics[width=0.23\textwidth]{./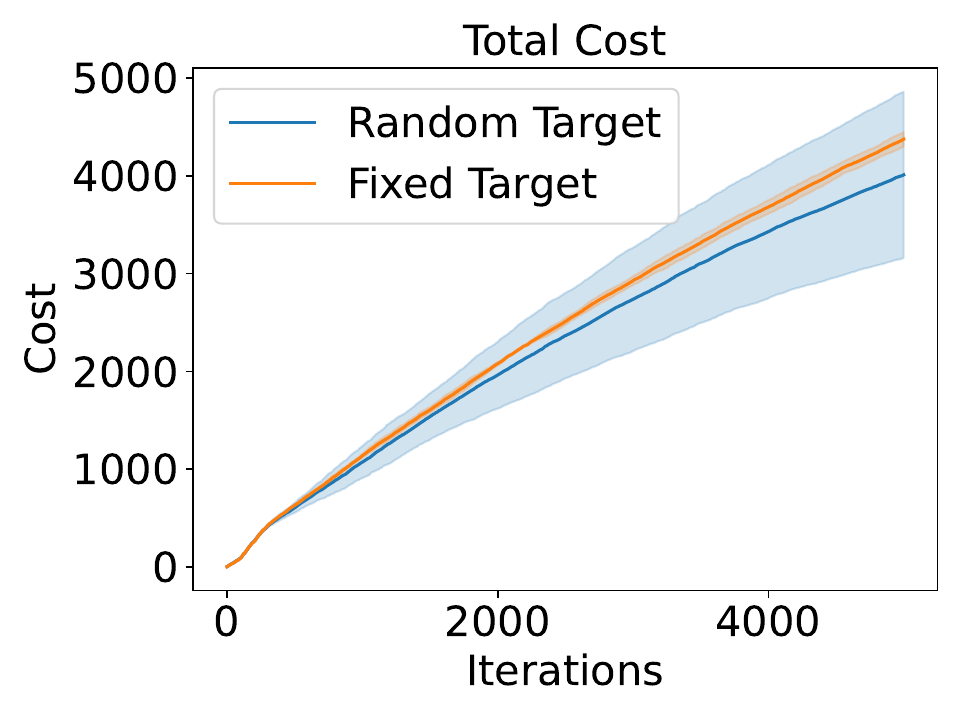}\label{fig:pmc_cost}} 
    \subfigure[]{\includegraphics[width=0.23\textwidth]{./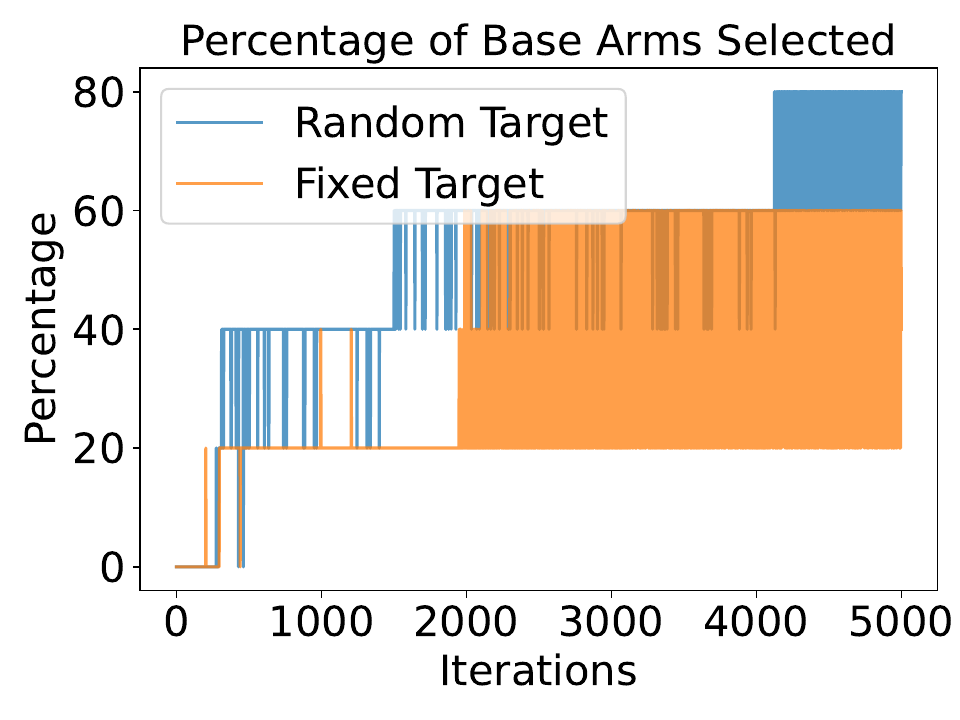}\label{fig: pmc_base_arm_percent}}
    
    \caption{Cost and percentage of base arms selected for: (\ref{fig:im_cost}, \ref{fig: im_base_arm_percent}) Influence Maximization; (\ref{fig:pmc_cost}, \ref{fig: pmc_base_arm_percent}) Probabilistic Maximum Coverage.}
    \label{fig:IM_vs_PMC_results}
\end{figure*}

\paragraph{Discussions of the results.} Figure \ref{fig: im_base_arm_percent} and Figure \ref{fig: pmc_base_arm_percent} show the percentage of target nodes played in each round for online influence maximization and probabilistic maximum coverage respectively. This value represents the percentage of base arms in the target super arm that is selected at a given time instant. For example, if we consider a target set containing $5$ nodes (base arms), $20\%$ of base arms selected would imply that at that round, $1$ node belonging to the target set was played.

From Figure \ref{fig:IM_vs_PMC_results}, we can clearly find that although the attack cost in each round decreases as the number of iterations increases\footnote{Here in theory, in our attack strategy, the attack cost can never be sublinear, since no matter what $l$ parameter we choose, there is a ``constant'' probability (independent on the time $t$) that the diffusion happens to the nodes outside $S^l_{ex}$. However, if we choose $l$ large enough, the probability to attack will decrease, and if we choose the hyperparameter $l$ appropriately to make sure the probability to attack is $o(1)$, the attack cost can still be ``sublinear'' with respect to a given time scale $T$.}, the number of target set played is a constant $0$. This may happen because (1) attacking online influence maximization is hard, and (2) the number of iterations is not large enough. To prove the attacking influence maximization is harder, we also show the result of probabilistic maximum coverage only with $T=5000$ (Figure \ref{fig:IM_vs_PMC_results}). Although for probabilistic maximum coverage, there is also no target set pulled in the first 5000 rounds, we can clearly observe that the percentage of target nodes selected is increasing, and there is a clear trend to select all the nodes in the target arm set when the number of iteration is large enough (which is the case in Figure \ref{fig:prob_coverage_rate}). However, for the online influence maximization problem, the algorithm selects none or 20\% of the target node set for $\ell=1,2,3$ for a majority of the experiment, and there is no trend indicating that the number of target nodes selected would increase. Note that when $\ell=1$, the reward structure of influence maximization and probabilistic maximum coverage are nearly the same, and the difference is the oracle. This finding corroborates our claim that when the oracle for a CMAB instance is not exact ($\alpha$-approximation oracle), we need to analyze the instance case by case.

Although we do not have a promising attack strategy for online influence maximization, it does not mean that it is unattackable. Further developing efficient attack methods for online influence maximization problem or proving its intrinsic robustness is a very interesting future work.

\section{Missing proofs in \cref{sec:poly-attack}}
In this section, we give the missing proofs in \cref{sec:poly-attack}. We first prove \cref{thm:attackability-known} in \cref{app:proof-attack-known}, which characterizes the ``attackability'' issue of different CMAB instances under the exact oracle. Then in \cref{app:proof-attack-covering}, we prove \cref{thm:covering-attackability}, which has an approximation oracle.

\subsection{Proof of Theorem \ref{thm:attackability-known}}\label{app:proof-attack-known}

In this section, we prove Theorem \ref{thm:attackability-known}, which is restated below.

\thmattack*

The proof of Theorem \ref{thm:attackability-known} is divided into two cases: $\Delta_{\gM} > 0$ or $\Delta_{\gM} < 0$. We first show the case when $\Delta_{\gM} > 0$, and then we prove the theorem when $\Delta_{\gM} < 0$.

\paragraph{Proof of Theorem \ref{thm:attackability-known} when $\Delta_{\gM} > 0$.}

The proof of this direction is relatively straightforward: we only need to show that there exists an attacking strategy that successfully attacks the CMAB instance when the number of rounds $T$ is polynomially large. The attacking strategy is exactly Algorithm \ref{alg:attack-exact-oracle}.

Note that if $\Delta_{\gM} > 0$, we can find $\gS^*$ such that $\Delta_{\gS^*} > 0$. The regret of the CMAB algorithm \algname{Alg} is bounded by $\text{poly}(m,1/p^*,K) T^{1-\gamma}$ with high probability, then in the modified CMAB environment with $\vmu' = \vmu \odot \gO_{\gS^*}$, \algname{Alg} will only play super arms other than $\gS^*$ for $\text{poly}(m,1/p^*,K) T^{1-\gamma} / \Delta_{\gS^*}$ times. Each time when \algname{Alg} pulls super arm other than $\gS^*$, the attacking algorithm (Algorithm \ref{alg:attack-exact-oracle}) might need to modify the reward, leading to the cost bounded by $m$. Therefore the total budget can be bounded by $\text{poly}(m,1/p^*,K)\cdot m \cdot (1/\Delta_{\gS^*}) \cdot T^{1-\gamma}$. We conclude the proof by choosing $T^{\gamma/2} \ge \text{poly}(m,1/p^*,K)\cdot m \cdot (1/\Delta_{\gS^*})$ and set $\gamma' = \gamma/2$.

\paragraph{Proof of Theorem \ref{thm:attackability-known} when $\Delta_{\gM} < 0$.}

Now we prove the other direction: when $\Delta_{\gM} < 0$, the CMAB instance is polynomially unattackable (Definition \ref{defn:poly-unattack}). The intuition of the proof is that: when all base arms $i\in \cup_{\gS\in\gM} \gO_{\gS}$ are observed for a certain number of times (say, $T^{1-\gamma'/2}$ times for each base arm), \algname{CUCB} will not choose to play the super arms in $\gM$. Since the budget is bounded by $\gB \le T^{1-\gamma'}$, the estimated upper confidence bound of \algname{CUCB} after corrupted by the attacker should be still close to the real empirical mean. Thus, because $\Delta_{\gM} < 0$, the \algname{CUCB} algorithm will not try to play the arms in $\gM$ anymore. The following part formalizes this intuition. The following proof is organized as follows: we first present the necessary notations, technical propositions, definitions, and lemmas; then, we formalize the intuition in the main lemma (Lemma \ref{lem:proof-unattackable-obs-bound}); finally, we conclude the proof base on Lemma \ref{lem:proof-unattackable-obs-bound}.

We use $\tilde \mu_{i}^{(t)}$ to denote the mean of base arm $i$ estimated by the \algname{CUCB} algorithm (possibly after attacker's corruption). We use $T_{i}^{(t)}$ to denote the number of times base arm $i$ is observed before round $t$. We use $\rho_i^{(t)}$ to denote the confidence bound of base arm $i$. Because we are interested in a high-probability event, we choose $\rho_i^{(t)} = \sqrt{\frac{\ln(4m t^3/\delta)}{2T_{i,t}}}$. We also define $\hat\mu_{i,t}$ to be the empirical mean generated by the environment before the corruption by the attacker (or the empirical mean observed by the attacker from the environment). We define $\textbf{UCB}_t = \min\{\tilde\vmu^{(t)} + \bm{\rho}^{(t)},1\}$ to be the upper confident bound at round $t$.

First, we show two standard concentration inequalities (Proposition \ref{prop:hoeffding} and \ref{prop:multi-chernoff}) and several technical results.

\begin{proposition}[Hoeffding Inequality]\label{prop:hoeffding}
    Suppose $X_i\in [0,1]$ for all $i\in [n]$ and $X_i$ are independent, then we have
    \[\Pr\left\{\bigg|\frac{1}{n}\sum_{i=1}^n X_i - \E\left[\frac{1}{n}\sum_{i=1}^n X_i\right]\bigg| \ge \epsilon\right\} \le 2\exp\left(-2n\epsilon^2\right).\]
\end{proposition}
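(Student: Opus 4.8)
The plan is to prove the inequality by the standard Chernoff (exponential-moment) method combined with Hoeffding's lemma, establishing the one-sided upper tail first and then recovering the two-sided bound by a union bound. Write $S_n = \sum_{i=1}^n X_i$ and $\mu_i = \E[X_i]$, so $\E[S_n] = \sum_i \mu_i$. First I would reduce the claim to controlling the upper tail $\Pr\{S_n - \E[S_n] \ge n\epsilon\}$; the lower tail is obtained by applying the same argument to the variables $1 - X_i$ (which again lie in $[0,1]$), and the stated factor $2$ then comes from the union bound over the two tails. Since dividing inside the probability by $n$ turns $S_n - \E[S_n] \ge n\epsilon$ into $S_n/n - \E[S_n/n] \ge \epsilon$, it suffices to bound the deviation of the unnormalized sum.

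For the upper tail, for any $s > 0$ I would apply Markov's inequality to the nonnegative random variable $e^{s(S_n - \E[S_n])}$ and then use independence to factorize:
\[
\Pr\{S_n - \E[S_n] \ge n\epsilon\} \le e^{-sn\epsilon}\, \E\!\left[e^{s(S_n - \E[S_n])}\right] = e^{-sn\epsilon} \prod_{i=1}^n \E\!\left[e^{s(X_i - \mu_i)}\right].
\]
The crux is then to control each single-variable moment generating function.

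The key estimate is Hoeffding's lemma: if $Y \in [a,b]$ with $\E[Y] = 0$, then $\E[e^{sY}] \le e^{s^2(b-a)^2/8}$. I would prove this by considering the cumulant generating function $\psi(s) = \log \E[e^{sY}]$, which satisfies $\psi(0) = 0$, $\psi'(0) = \E[Y] = 0$, and $\psi''(s) = \Var_s[Y] \le (b-a)^2/4$, where $\Var_s$ is the variance under the exponentially tilted measure $d\mathbb{Q}_s \propto e^{sY}\,d\mathbb{P}$; the bound on the tilted variance follows because the tilted variable still takes values in $[a,b]$, whose variance is at most $(b-a)^2/4$ (Popoviciu's inequality). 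A second-order Taylor expansion of $\psi$ around $0$ then yields $\psi(s) \le s^2(b-a)^2/8$. Applying this with $Y = X_i - \mu_i \in [-\mu_i, 1-\mu_i]$, an interval of length $1$, gives $\E[e^{s(X_i-\mu_i)}] \le e^{s^2/8}$, hence the product is at most $e^{ns^2/8}$.

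Combining these pieces gives $\Pr\{S_n - \E[S_n] \ge n\epsilon\} \le e^{-sn\epsilon + ns^2/8}$ for every $s > 0$. Optimizing the exponent (it is minimized at $s = 4\epsilon$, yielding value $-2n\epsilon^2$) produces the one-sided bound $e^{-2n\epsilon^2}$. The symmetric lower-tail argument gives the identical bound, and summing the two tails yields $\Pr\{|S_n/n - \E[S_n/n]| \ge \epsilon\} \le 2e^{-2n\epsilon^2}$, as claimed. The only genuinely delicate step is Hoeffding's lemma (the $\psi'' \le (b-a)^2/4$ variance bound and the Taylor argument); everything else is a routine Chernoff optimization.
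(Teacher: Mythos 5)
Your proof is correct, but there is nothing in the paper to compare it against: the paper states this proposition without proof, simply invoking it as one of ``two standard concentration inequalities.'' Your argument --- the Chernoff exponential-moment method, Hoeffding's lemma via the bound $\psi''(s)=\Var_s[Y]\le (b-a)^2/4$ on the tilted variance, and the optimization at $s=4\epsilon$ giving the exponent $-2n\epsilon^2$ --- is the canonical textbook derivation and is complete and accurate, including the correct handling of non-identically distributed $X_i$ and the factor of $2$ from the union bound over the two tails.
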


\begin{proposition}[Multiplicative Chernoff Bound]\label{prop:multi-chernoff}
    Suppose $X_i$ are Bernoulli variables for all $i\in [n]$ and $\E[X_i|X_1,\dots,X_{i-1}] \ge \mu$ for every $i\le n$. Let $Y = X_1 + \dots + X_n$, then we have
    \[\Pr\left\{Y \le(1-\delta)n\mu\right\} \le \exp\left(-\frac{\delta^2n\mu}{2}\right).\]
\end{proposition}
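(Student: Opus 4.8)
The plan is to prove this lower-tail bound by the standard Chernoff exponential-moment method, taking care to accommodate the \emph{conditional} mean hypothesis: the $X_i$ are not assumed independent, only that each has conditional mean at least $\mu$ given its predecessors, so independence cannot be used to factor the moment generating function. First I would fix a parameter $s > 0$ and apply Markov's inequality to the nonnegative variable $e^{-sY}$. Since $\{Y \le (1-\delta)n\mu\}$ is the same event as $\{e^{-sY} \ge e^{-s(1-\delta)n\mu}\}$, this gives
\[
\Pr\left\{Y \le (1-\delta)n\mu\right\} \le e^{s(1-\delta)n\mu}\,\E\!\left[e^{-sY}\right].
\]

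The key step is to bound $\E[e^{-sY}]$ using the filtration $\mathcal{F}_{i} = \sigma(X_1,\dots,X_i)$. Writing $e^{-sY} = \prod_{i=1}^n e^{-sX_i}$ and peeling off the last factor via the tower property, I would use that for a Bernoulli variable with conditional mean $p_i \coloneqq \E[X_i \mid \mathcal{F}_{i-1}] \ge \mu$,
\[
\E\!\left[e^{-sX_i}\,\middle|\, \mathcal{F}_{i-1}\right] = 1 - p_i\left(1 - e^{-s}\right) \le e^{-p_i(1-e^{-s})} \le e^{-\mu(1-e^{-s})},
\]
where the first inequality is $1-x \le e^{-x}$ and the last uses $p_i \ge \mu$ together with $1-e^{-s} > 0$. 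Iterating the conditioning from $i=n$ down to $i=1$ yields $\E[e^{-sY}] \le e^{-n\mu(1-e^{-s})}$, and hence
\[
\Pr\left\{Y \le (1-\delta)n\mu\right\} \le \exp\!\big(n\mu\left[s(1-\delta) - (1-e^{-s})\right]\big).
\]

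The optimization over $s$ is transparent: choosing $e^{-s} = 1-\delta$ (that is, $s = -\ln(1-\delta)$) makes the exponent equal to $n\mu\left[-(1-\delta)\ln(1-\delta) - \delta\right]$. The claim then reduces to the scalar inequality $(1-\delta)\ln(1-\delta) + \delta \ge \delta^2/2$ on $\delta \in [0,1)$. I would prove this by setting $f(\delta)$ equal to the difference of the two sides, noting $f(0)=0$, and computing $f'(\delta) = -\ln(1-\delta) - \delta \ge 0$, since $-\ln(1-\delta) = \sum_{k\ge 1}\delta^k/k \ge \delta$; thus $f$ is nondecreasing and therefore nonnegative, giving the stated exponent $-\delta^2 n\mu/2$.

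The main obstacle—indeed the only nonroutine point—is handling the dependence correctly: because only the conditional means are controlled, one cannot invoke independence to split $\E[e^{-sY}]$ into a product, and the argument must proceed through the tower property (equivalently, a supermartingale bound on the partial products $e^{-sS_i} e^{i\mu(1-e^{-s})}$ where $S_i = X_1 + \dots + X_i$). Once that conditioning is set up, the remaining computation and the elementary calculus inequality are entirely standard.
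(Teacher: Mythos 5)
Your proof is correct. The paper itself states this proposition without proof, presenting it as a standard concentration inequality (it is essentially the martingale/adaptive version of the multiplicative Chernoff lower-tail bound used in the CMAB literature, e.g.\ in the analysis of \algname{CUCB} by \citet{wang2017improving}), so there is no in-paper argument to compare against. Your derivation is the standard exponential-moment method, and you correctly identify and handle the one point that requires care: since only the conditional means $\E[X_i\mid X_1,\dots,X_{i-1}]\ge\mu$ are assumed, the moment generating function cannot be factored by independence, and the bound $\E\left[e^{-sX_i}\mid\mathcal{F}_{i-1}\right]=1-p_i(1-e^{-s})\le e^{-\mu(1-e^{-s})}$ must be peeled off via the tower property. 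The optimization $e^{-s}=1-\delta$ and the scalar inequality $(1-\delta)\ln(1-\delta)+\delta\ge\delta^2/2$ are both verified correctly, so the argument is complete.
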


Next, we define the following event that the empirical means of different base arms returned by the environment is close to the mean $\vmu$. This definition is similar to that in \citet{wang2017improving}.

\begin{definition}[Sampling is Nice]\label{defn:sample-nice}
    We say that the sampling is nice at the beginning of round $t$ if for any arm $i\in[m]$, we have $|\hat\mu_{i}^{(t)}-\mu_{i}^{(t)}| < \rho_{i}^{(t)}$, where $\rho_{i}^{(t)} = \sqrt{\frac{\ln(4m t^3/\delta)}{2T_{i}^{(t)}}}$($\infty$ if $T_{i}^{(t)} = 0$). We use $\gN_{t}^s$ to denote this event.
\end{definition}

The following lemma shows that the defined event should hold with high probability. The proof is a straightforward application of Hoeffding's inequality (Proposition \ref{prop:hoeffding}).

\begin{lemma}\label{lem:sample-nice-high-prob}
    For each round $t\ge 1$, we have $\Pr\{\lnot \gN_t^s\} \le \frac{\delta}{2t^2}$. Besides, $\Pr\{\cup_t (\lnot \gN_t^s)\} \le \delta$.
\end{lemma}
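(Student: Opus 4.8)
The final statement to prove is Lemma~\ref{lem:sample-nice-high-prob}, which bounds the probability that the ``sampling is nice'' event fails.

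\medskip

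The plan is to apply Hoeffding's inequality (Proposition~\ref{prop:hoeffding}) to each base arm separately and then take a union bound, first over the $m$ base arms at a fixed round and then over all rounds. First I would fix a round $t$ and a base arm $i\in[m]$. The empirical mean $\hat\mu_i^{(t)}$ is the average of $T_i^{(t)}$ independent samples of the outcome $X_i$, each lying in $[0,1]$ with mean $\mu_i^{(t)}$, so Proposition~\ref{prop:hoeffding} gives
\[
\Pr\left\{\big|\hat\mu_i^{(t)}-\mu_i^{(t)}\big| \ge \rho_i^{(t)}\right\} \le 2\exp\left(-2 T_i^{(t)} (\rho_i^{(t)})^2\right).
\]
Plugging in the choice $\rho_i^{(t)} = \sqrt{\ln(4m t^3/\delta)/(2T_i^{(t)})}$ makes the exponent collapse: $2 T_i^{(t)}(\rho_i^{(t)})^2 = \ln(4m t^3/\delta)$, so the right-hand side equals $2\exp(-\ln(4m t^3/\delta)) = \delta/(2 m t^3)$.

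\medskip

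Next I would take a union bound over the $m$ base arms to control the failure of $\gN_t^s$ at a fixed round $t$:
\[
\Pr\{\lnot \gN_t^s\} \le \sum_{i=1}^m \Pr\left\{\big|\hat\mu_i^{(t)}-\mu_i^{(t)}\big| \ge \rho_i^{(t)}\right\} \le m \cdot \frac{\delta}{2 m t^3} = \frac{\delta}{2t^3} \le \frac{\delta}{2t^2},
\]
which gives the first claim (the case $T_i^{(t)}=0$ is vacuous since $\rho_i^{(t)}=\infty$ and the event $|\hat\mu_i^{(t)}-\mu_i^{(t)}|<\infty$ holds trivially). Finally, a union bound over all rounds $t\ge 1$ yields
\[
\Pr\left\{\bigcup_t (\lnot \gN_t^s)\right\} \le \sum_{t=1}^\infty \frac{\delta}{2t^3} \le \frac{\delta}{2}\cdot \sum_{t=1}^\infty \frac{1}{t^2} = \frac{\delta}{2}\cdot \frac{\pi^2}{6} < \delta,
\]
using the convergent series $\sum_{t\ge 1} t^{-2} = \pi^2/6 < 2$. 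This establishes the second claim.

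\medskip

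There is essentially no serious obstacle here; this is a routine concentration argument. The one point requiring a little care is the independence assumption underlying Hoeffding's inequality: the number of observations $T_i^{(t)}$ is itself a random quantity determined by the adaptive play of the algorithm, so strictly the samples are collected at data-dependent stopping times rather than being a fixed i.i.d. batch. The clean way to handle this, which I expect matches the treatment in~\citet{wang2017improving}, is to note that the attacker modifies only the \emph{reported} outcomes and not the underlying draws, so $\hat\mu_i^{(t)}$ is the average of genuine environment samples; one then applies the inequality conditionally (or via a standard ``fixed-sample-size then union over the possible values of $T_i^{(t)}\in\{1,\dots,t\}$'' argument absorbed into the $t^3$ factor) to justify the per-arm bound. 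Apart from that bookkeeping, the computation is the direct substitution and two union bounds described above.
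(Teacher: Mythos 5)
Your proof is correct and follows essentially the same route as the paper: Hoeffding per arm, a union bound over the $m$ arms, and a union bound over rounds, with the $t^3$ inside the logarithm reserved to absorb the union over the random number of observations $T_i^{(t)}$. The only bookkeeping point is that once that union over $k\in\{1,\dots,t-1\}$ is actually carried out (which you correctly flag as necessary, and which is exactly what the paper does), the per-round bound becomes $\delta(t-1)/(2t^3)\le \delta/(2t^2)$ rather than your displayed $\delta/(2t^3)$ --- this matches the lemma's stated bound and still sums to at most $\delta$, so nothing breaks.
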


\begin{proof}
    For each round $t \ge 1$, we have
    \begin{align*}
        \Pr\{\lnot \gN_t^s\} & = \Pr\left\{\exists i\in [m], |\hat \mu_{i}^{(t)}-\mu_{i}| \ge \sqrt{\frac{\ln(4m t^3/\delta)}{2T_{i}^{(t)}}}\right\} \\
        & \le \sum_{i\in [m]}\Pr\left\{|\hat \mu_{i}^{(t)}-\mu_{i}| \ge \sqrt{\frac{\ln(4m t^3/\delta)}{2T_{i}^{(t)}}}\right\} \\
        & = \sum_{i\in [m]}\sum_{k=1}^{t-1}\Pr\left\{T_{i}^{(t)}=k, |\hat \mu_{i}^{(t)}-\mu_{i}| \ge \sqrt{\frac{\ln(4m t^3/\delta)}{2T_{i}^{(t)}}}\right\}.
    \end{align*}
    Then we apply Hoeffding's inequality, and we have
    \begin{align*}
        \Pr\left\{T_{i}^{(t)}=k, |\hat \mu_{i}^{(t)}-\mu_{i}| \ge \sqrt{\frac{\ln(4m t^3/\delta)}{2T_{i}^{(t)}}}\right\} \le 2e^{-2 k \frac{\ln(4m t^3/\delta)}{2k}} = \frac{\delta}{2mt^3}.
    \end{align*}
    Plugging in the previous bound, we have $\Pr\{\lnot \gN_t^s\} \le \frac{\delta}{2t^2}$. Now we take the union bound over $t$, we have
    \[\Pr\{\cup_t (\lnot \gN_t^s)\} \le \sum_{t=1}^{\infty}\frac{\delta}{2t^2} \le \delta,\]
    and we conclude the proof.
\end{proof}

Because of the probabilistic triggered arms, a base arm may only be observed with some probability when some super arm is pulled. For the sake of simpler analysis, we define the following notion ``Counter'', which characterizes how many times a base arm may be triggered by some super arm.

\begin{definition}[Counter]\label{defn:counter}
	In an execution of the \algname{CUCB} algorithm, we define the counter $N_{i,t}$ as the following number
	\[N_{i,t} := \sum_{s=0}^t \sI\left\{p^{\gD,\gS_s}_i > 0\right\}.\]
\end{definition}

The following event (Definition \ref{defn:triggering-nice}) bridges the observation time of a base arm and the notion ``Counter''. Then, Lemma \ref{lem:trigger-nice-high-prob} shows that the following event should hold with high probability. The proof of Lemma \ref{lem:trigger-nice-high-prob} is a direct application of the multiplicative Chernoff bound (Proposition \ref{prop:multi-chernoff}).

\begin{definition}[Triggering is Nice]\label{defn:triggering-nice}
    We call that the triggering is nice at the beginning of round $t$ if for any arm $i$, as long as $\ln (4mt^3/\delta) \le \frac{1}{4}N_{i,t-1}\cdot p^*$, we have
    \[T_{i}^{(t-1)} \ge \frac{1}{4}N_{i,t-1}\cdot p^*.\]
    We use $\gN^{t}_t$ to denote this event. 
\end{definition}

\begin{lemma}\label{lem:trigger-nice-high-prob}
    We have for every round $t\ge 1$,
    \[\Pr\{\lnot \gN^{t}_t\} \le \frac{\delta}{4t^3}.\]
    Then, we have
    \[\Pr\{\cup(\lnot \gN^{t}_t)\} \le \frac{\delta}{2}.\]
\end{lemma}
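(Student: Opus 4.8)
The plan is to bound, for each fixed round $t$ and base arm $i$, the probability that arm $i$ has been observed far fewer times than its triggering counter $N_{i,t-1}$ predicts, and then to take two union bounds — first over the $m$ arms to obtain the per-round estimate, then over all rounds $t$ to obtain the uniform bound.

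First I would fix a round $t$ and a base arm $i$ that actually satisfies the hypothesis of the event, namely $\ln(4mt^3/\delta) \le \tfrac14 N_{i,t-1} p^*$; arms violating this hypothesis impose no constraint and contribute nothing to $\lnot\gN^t_t$. The key observation is that each time the played super arm $\gS_s$ can trigger arm $i$ (i.e.\ $p_i^{\gD,\gS_s}>0$, which is precisely what the counter $N_{i,t-1}$ of Definition \ref{defn:counter} tallies), arm $i$ is actually observed with conditional probability $p_i^{\gD,\gS_s} \ge p^*$, by the very definition $p^* = \inf_{\gS\in\sS,\, i\in\gO_{\gS}} p_i^{\gD,\gS}$. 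Hence $T_i^{(t-1)}$ is a sum of $N_{i,t-1}$ triggering indicators, each with conditional mean at least $p^*$ given the past. This is exactly the setting of the multiplicative Chernoff bound (Proposition \ref{prop:multi-chernoff}) with $\mu = p^*$ and $n = N_{i,t-1}$.

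Next I would apply Proposition \ref{prop:multi-chernoff} with deviation parameter $3/4$, giving
\[\Pr\left\{T_i^{(t-1)} \le \tfrac14 N_{i,t-1} p^*\right\} \le \exp\left(-\frac{(3/4)^2 N_{i,t-1} p^*}{2}\right) = \exp\left(-\frac{9 N_{i,t-1} p^*}{32}\right).\]
Invoking the hypothesis $N_{i,t-1} p^* \ge 4\ln(4mt^3/\delta)$ turns the right-hand side into $\left(\frac{\delta}{4mt^3}\right)^{9/8} \le \frac{\delta}{4mt^3}$, since the base is at most $1$ and the exponent exceeds $1$. A union bound over the at most $m$ relevant arms then yields $\Pr\{\lnot\gN^t_t\} \le m\cdot\frac{\delta}{4mt^3} = \frac{\delta}{4t^3}$, the first claim. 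Summing over $t\ge 1$ and using $\sum_{t\ge1} t^{-3} \le \sum_{t\ge1} t^{-2} < 2$ gives $\Pr\{\cup_t(\lnot\gN^t_t)\} \le \frac{\delta}{2}$, the second claim. (Note that deviation $1/2$ would only bound the complement by $(\delta/4mt^3)^{1/2}$, which is too weak; applying the bound directly at threshold $\tfrac14$ with parameter $3/4$ is what secures the extra power.)

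The step I expect to require the most care is justifying that $T_i^{(t-1)}$ may legitimately be treated as a sum of $N_{i,t-1}$ conditionally-Bernoulli variables with mean at least $p^*$: the counter $N_{i,t-1}$ is itself random, since it depends on the adaptively chosen super arms, and the triggering events are not independent across rounds. The clean way around this, following the analysis of \citet{wang2017improving}, is to index the triggering indicators by the ``could-be-triggered'' filtration rather than by the round index, so that the conditional-mean hypothesis $\E[X_k \mid X_1,\dots,X_{k-1}]\ge p^*$ of Proposition \ref{prop:multi-chernoff} holds verbatim; the deterministic bound $N_{i,t-1}\le t-1$ keeps the construction finite. Everything else is a routine substitution.
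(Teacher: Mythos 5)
Your proof follows essentially the same route as the paper's: apply the multiplicative Chernoff bound (Proposition~\ref{prop:multi-chernoff}) at the threshold $\tfrac14 N_{i,t-1}p^*$ with deviation parameter $3/4$, invoke the hypothesis $N_{i,t-1}p^*\ge 4\ln(4mt^3/\delta)$ to obtain the per-arm bound $\delta/(4mt^3)$, and then union-bound over arms and rounds. The one place where you diverge is the treatment of the randomness of the counter: you correctly identify that the triggering indicators should be indexed along the ``could-be-triggered'' filtration so that the conditional-mean hypothesis of Proposition~\ref{prop:multi-chernoff} applies, but you then evaluate the Chernoff bound at the \emph{realized} (random) value of $N_{i,t-1}$ without paying for that. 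Conditioning on $N_{i,t-1}=k$ changes the law of the indicators, so one genuinely needs either an additional union bound over the at most $t$ possible values of $N_{i,t-1}$ or an equivalent maximal/stopping-time argument; the paper takes the former route, unioning over both $i\in[m]$ and the value of $N_{i,t-1}$, which is why its proof actually arrives at the per-round bound $\delta/(4t^2)$ rather than the $\delta/(4t^3)$ displayed in the lemma statement. This discrepancy costs only a factor of $t$ and leaves the final conclusion intact, since $\sum_{t\ge1}\delta/(4t^2)=\delta\pi^2/24\le\delta/2$, so your argument is correct in substance once that extra union bound is made explicit.
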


\begin{proof}
    We directly apply the Multiplicative Chernoff Bound (Proposition \ref{prop:multi-chernoff}). Note that as long as $N_{i,t-1}\cdot p^* \ge 4\ln (4mt^3/\delta) $, we have
    \begin{align*}
        \Pr\left\{T_{i} ^{(t-1)} \le \frac{1}{4}N_{i,t-1} \cdot p^*\right\} & \le \exp\left(-\frac{3^2 N_{i,t-1} \cdot p^*}{2\times 4^2}\right) \\
        & \le \exp\left(-\frac{ 9\times 4\ln (4mt^3/\delta)}{32}\right) \\
        & \le \exp\left(\ln (4mt^3/\delta)\right) \\
        & \le \frac{\delta}{4mt^3}.
    \end{align*}
    Now applying the union bound on $i\in [m]$ and $N_{i,t-1}$, we can get
    \[\Pr\{\lnot \gN^{t}_t\} \le \frac{\delta}{4t^2}.\]
    Then apply the union bound on $t$, we have
    \[\Pr\{\cup(\lnot \gN^{t}_t)\} \le \sum_{t\ge 1}\frac{\delta}{4t^2} \le \frac{\delta}{2}\frac{\pi^2}{6\times 2} \le \frac{\delta}{2}.\]
\end{proof}

Given the previous technical lemmas, we now show the main lemma to prove Theorem \ref{thm:attackability-known} when $\Delta_{\gM} < 0$. The following lemma upper bounds the number of times the \algname{CUCB} algorithm plays the arms in $\gM$ under certain budget constraint $\gB$.

\begin{lemma}\label{lem:proof-unattackable-obs-bound}
    Suppose that the adversary only has budget $\gB = o(T)$ and assume that $\cap\gN_t^s$ holds. For any $\gS\in\gM$, if $\Delta_{\gM} < 0$, and $\gS$ is pulled in round $t$, then there must exist $i\in \gO_{\gS}$ such that
    \[T_{i}^{(t)} \le T_0 := \frac{6KB\gB}{|\Delta_{\gM}|} + \frac{18K^2B^2 \log(4mT^3/\delta)}{|\Delta_{\gM}|^2}.\]
\end{lemma}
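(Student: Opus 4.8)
The plan is to argue by contrapositive: I will show that if \emph{every} base arm $i \in \gO_{\gS}$ has already been observed more than $T_0$ times, then \algname{CUCB} (with its exact oracle) cannot have selected $\gS$ at round $t$, contradicting the hypothesis. Fix $\gS \in \gM$; since $\Delta_{\gM} < 0$ we have $\Delta_{\gS} \le \Delta_{\gM} < 0$, so by Definition~\ref{defn:gap} there is a \emph{fixed} competitor $\gS^*\neq \gS$ (independent of $t$) with $r_{\gS^*}(\vmu\odot\gO_{\gS}) - r_{\gS}(\vmu) = |\Delta_{\gS}| \ge |\Delta_{\gM}|$. The whole argument amounts to showing that, once all arms in $\gO_{\gS}$ are well observed, the oracle prefers $\gS^*$ to $\gS$ under the corrupted UCB vector.

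First I would sandwich the UCB of each well-observed arm around its true mean. Because the cost is measured in $L^0$ and outcomes lie in $[0,1]$, the total distortion of arm $i$'s empirical sum is at most the number of times it was corrupted, hence at most the whole budget $\gB$; this gives $|\tilde\mu_i^{(t)} - \hat\mu_i^{(t)}| \le \gB/T_i^{(t)}$. Combining this with the nice-sampling event (so $|\hat\mu_i^{(t)}-\mu_i|<\rho_i^{(t)}$) and the definition $\textbf{UCB}_{i,t}=\min\{\tilde\mu_i^{(t)}+\rho_i^{(t)},1\}$ yields, for every $i\in\gO_{\gS}$,
\[\mu_i - \frac{\gB}{T_i^{(t)}} \;\le\; \textbf{UCB}_{i,t} \;\le\; \mu_i + 2\rho_i^{(t)} + \frac{\gB}{T_i^{(t)}}.\]

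Next I would bound the two reward values the oracle compares. For $\gS$ only coordinates in $\gO_{\gS}$ enter (the rest have $p_i^{\gD,\gS}=0$), so Assumption~\ref{ass:tpm-bounded-smoothness} with the sandwich gives $r_{\gS}(\textbf{UCB}_t)\le r_{\gS}(\vmu)+B\sum_{i\in\gO_{\gS}}(2\rho_i^{(t)}+\gB/T_i^{(t)})$. For $\gS^*$ the subtlety is that it may trigger arms \emph{outside} $\gO_{\gS}$ whose UCBs I cannot control; the key step is to invoke Monotonicity (Assumption~\ref{ass:monotonicity}) to lower those coordinates to $0$, producing a vector $\le\textbf{UCB}_t$ that agrees with $\vmu\odot\gO_{\gS}$ off $\gO_{\gS}$, and then apply bounded smoothness only on the well-observed coordinates $i\in\gO_{\gS^*}\cap\gO_{\gS}$. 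This yields $r_{\gS^*}(\textbf{UCB}_t)\ge r_{\gS^*}(\vmu\odot\gO_{\gS})-B\sum_{i\in\gO_{\gS}}(2\rho_i^{(t)}+\gB/T_i^{(t)})$. I expect this monotonicity-plus-smoothness step to be the main obstacle, since it is exactly what prevents the attacker from exploiting rarely-pulled arms outside $\gO_{\gS}$ to inflate $\gS^*$'s apparent reward in an uncontrolled way.

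Finally, since \algname{CUCB} selected $\gS$ we have $r_{\gS}(\textbf{UCB}_t)\ge r_{\gS^*}(\textbf{UCB}_t)$; chaining the two displayed reward bounds and using the gap gives
\[|\Delta_{\gM}| \;\le\; 4B\sum_{i\in\gO_{\gS}}\rho_i^{(t)} + 2B\gB\sum_{i\in\gO_{\gS}}\frac{1}{T_i^{(t)}}.\]
Using $|\gO_{\gS}|\le K$, $\rho_i^{(t)}=\sqrt{\ln(4mt^3/\delta)/(2T_i^{(t)})}$, and $t\le T$, if every $T_i^{(t)}>T_0$ I would bound the second sum by $2BK\gB/T_0\le|\Delta_{\gM}|/3$ (using the first term of $T_0$) and the first sum by $4BK\sqrt{\ln(4mT^3/\delta)/(2T_0)}\le 2|\Delta_{\gM}|/3$ (using the second term of $T_0$). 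Their sum is strictly below $|\Delta_{\gM}|$, contradicting the inequality above; hence some $i\in\gO_{\gS}$ must satisfy $T_i^{(t)}\le T_0$. The constants $6$ and $18$ in $T_0$ are precisely what make this $\tfrac13+\tfrac23$ split close.
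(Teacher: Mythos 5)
Your proposal is correct and follows essentially the same route as the paper's proof: contradiction assuming all arms in $\gO_{\gS}$ are observed more than $T_0$ times, the per-arm corruption bound $\gB/T_i^{(t)}$ combined with the nice-sampling event to sandwich the UCBs, monotonicity to restrict the competitor $\gS^*$ (the paper's $\gS'$) to the coordinates in $\gO_{\gS}$ before applying TPM smoothness, and a constant split that consumes the two terms of $T_0$. The only differences are cosmetic bookkeeping of the $\tfrac13/\tfrac23$ margins versus the paper's $\tfrac16+\tfrac12$ accounting.
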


\begin{proof}
   We'll use proof by contradiction. 
   Let's suppose there exists a round \(t\) where \(\gS\) is pulled and \(T_{i}^{(t)} \ge T_0 \) for all $i\in \gS$.    
   Since the adversary only has budget $\gB = o(T)$, each arm $i$ can receive the corrupted value for at most $\gB$ times. Thus, we have for all $i\in \gS$,
    \[|\tilde\mu_{i}^{(t)} - \hat\mu_{i}^{(t)}| \le \frac{\gB}{T_0} \le \frac{|\Delta_{\gS}|}{6KB},\]
    where we use the fact that $|\Delta_{\gS}| \ge |\Delta_{\gM}|$.
    Besides, given $\cap\gN_t^s$, for all $i\in \gS$ we have
    \[|\hat\mu_{i}^{(t)} - \mu_{i}| \le \rho_{i}^{(t)} = \sqrt{\frac{\log(4mt^3/\delta)}{2 T_{i}^{(t)}}} \le \frac{|\Delta_{\gS}|}{6KB}.\]
    Then under $\cap \gN_t^s$, we have
    \begin{align*}
        \textbf{UCB}_t\odot \gO_{\gS} - \vmu\odot \gO_{\gS} & = ((\tilde\vmu^{(t)} + \bm\rho^{(t)})\odot \gO_{\gS}) - \vmu\odot \gO_{\gS} \\
        & = (\tilde\vmu^{(t)} + \bm\rho^{(t)} - \vmu)\odot \gO_{\gS} \\
        & \preceq (\hat\vmu^{(t)} + \frac{|\Delta_{\gS}|}{6KB}\mathbf 1 + \bm\rho^{(t)} - \vmu)\odot \gO_{\gS} \\
        & \preceq (\vmu + \frac{|\Delta_{\gS}|}{6KB}\mathbf 1 + 2\bm\rho^{(t)} - \vmu)\odot \gO_{\gS} \\
        & \preceq (\frac{|\Delta_{\gS}|}{6KB}\mathbf 1 + 2\frac{|\Delta_{\gS}|}{6KB}\mathbf 1 )\odot \gO_{\gS} \\
        & \preceq \frac{|\Delta_{\gS}|}{2KB}\odot \gO_{\gS}.
    \end{align*}
   Let $\gS'$ denote a super arm such that  $r_{\gS'}(\mu \odot \gO_{\gS}) = r_{\gS}(\mu)-\Delta_{\gS}$, under $\cap \gN_t^s$, we have
    \begin{align*}
        r_{\gS'}(\textbf{UCB}^{(t)}) & = r_{\gS'}(\tilde\vmu^{(t)} + \bm\rho^{(t)}) \\
        & \ge r_{\gS'}((\tilde\vmu^{(t)} + \bm\rho^{(t)})\odot \gO_{\gS}) \\
        & \ge r_{\gS'}\left(\left(\hat \vmu^{(t)} - \frac{|\Delta_{\gS}|}{6KB}\mathbf 1 + \bm\rho^{(t)}\right)\odot \gO_{\gS}\right) \\
        & \ge r_{\gS'}\left((\hat \vmu^{(t)} + \bm\rho^{(t)})\odot \gO_{\gS}\right) - \sum_{i\in\gS'}B \frac{|\Delta_{\gS}|}{6KB} \\
        & \ge r_{\gS'}(\vmu\odot \gO_{\gS}) - \frac{|\Delta_{\gS}|}{6} \\
        & = r_{\gS}(\vmu\odot \gO_{\gS}) + \frac{5}{6}|\Delta_{\gS}| \\
        & \ge r_{\gS}(\textbf{UCB}_t) - \sum_{i\in \gS}B\frac{|\Delta_{\gS}|}{2KB} + \frac{5}{6}|\Delta_{\gS}| \\
        & \ge r_{\gS}(\textbf{UCB}_t) + \frac{1}{3}|\Delta_{\gS}|,
    \end{align*}
    which means that CUCB will not play $\gS$, which contradicts our assumption that \(\gS\) is pulled in round \(t\).  Thus, our original proposition that there exists an element \( i \in \gO_{\gS}\) such that \(T_i^{(t)} \leq T_0\) holds true, completing the proof.
\end{proof}

Then we are ready to prove Theorem \ref{thm:attackability-known} when $\Delta_{\gM} < 0$.

\begin{proof}[Proof of Theorem \ref{thm:attackability-known} when $\Delta_{\gM} < 0$]
    First we assume that $\cap\gN_t^s$ and $\cap\gN_t^t$ hold, which should happen with probability at least $1-\frac{3}{2}\delta$.
    
    Now under $\cap\gN_t^s$, if $\gS\in\gM$ is pulled, then at least one of the base arm $i\in\gO_{\gS}$ has not been observed for $T_0$ times. Besides, under $\cap\gN_t^t$, we know that when $\gS$ is pulled at time $t$, at least one of its base arm $i\in\gO_{\gS}$ satisfies $N_{i,t-1} \le 4 T_0/p^*$.

    Note that there are $m$ arms in total, thus the \algname{CUCB} algorithm will only play super arms belonging to $\gM$ for at most $4m T_0/p^*$ times in total. Since every time \algname{CUCB} pulls $\gS\in\gM$, the quantity $\sum_i N_{i,t}$ increases by at least 1. Note that if $\gB \le T^{1-\gamma'}$, $T_0$ can be bounded by $\text{poly}(m,1/p^*,K,1/|\Delta_{\gM}|,\log(1/\delta))\cdot T^{1-\gamma'}$. Thus there must exist some polynomial on $T^* = \text{poly}(m,1/p^*,K,1/|\Delta_{\gM}|,\log(1/\delta))$ such that when $T > T^*$, $4n T_0/p^* \le \frac{T}{2}$, and thus we conclude the proof.
\end{proof}

\subsection{Proof of Theorem \ref{thm:covering-attackability}}\label{app:proof-attack-covering}

In this section, we prove Theorem \ref{thm:covering-attackability}. The intuition of Theorem \ref{thm:covering-attackability} is that, although the Greedy oracle is an approximation oracle, by using \algname{CUCB}, it ``acts'' like an exact oracle when the number of observations for each base arm is large enough, and thus we can follow the proof idea for Theorem \ref{thm:attackability-known}. We first present some technical lemmas to leverage the submodular structure of the reward function.

\begin{lemma}[\citet{chen2013combinatorial}]\label{lem:pmc-tpm}
    Probabilistic maximum coverage is $1$-TPM bounded smoothness (Assumption \ref{ass:tpm-bounded-smoothness}).
\end{lemma}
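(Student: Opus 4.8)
The plan is to write the PMC expected reward explicitly and then reduce the claim to a standard estimate on the difference of two products whose factors lie in $[0,1]$. First I would identify the base arms with the edges of the bipartite graph $\gG = (L,R,E)$ and observe that the super arm $\gS$ corresponding to a node set $S\subseteq L$ triggers \emph{deterministically} exactly the edges $(u,v)$ with $u\in S$; hence $p_{(u,v)}^{\gD,\gS}=1$ for $u\in S$ and $p_{(u,v)}^{\gD,\gS}=0$ otherwise. Because a node $v\in R$ is left inactive precisely when every incident attempt from $S$ fails, the probability it is activated is $1-\prod_{u\in S}(1-\mu_{u,v})$, so the expected reward is $r_{\gS}(\vmu)=\sum_{v\in R}\bigl(1-\prod_{u\in S}(1-\mu_{u,v})\bigr)$, an expression involving only the means of edges incident to $S$.

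Next I would fix a single target node $v\in R$ and control the quantity $\bigl|\prod_{u\in S}(1-\mu_{u,v})-\prod_{u\in S}(1-\mu'_{u,v})\bigr|$ by a telescoping (hybrid) argument. Writing $a_u=1-\mu_{u,v}$ and $b_u=1-\mu'_{u,v}$ under any fixed ordering of $S$, I would use the identity
\[
\prod_{u\in S}a_u-\prod_{u\in S}b_u=\sum_{u\in S}\Bigl(\prod_{w<u}b_w\Bigr)(a_u-b_u)\Bigl(\prod_{w>u}a_w\Bigr).
\]
Since every factor lies in $[0,1]$, each partial product is at most $1$, and $|a_u-b_u|=|\mu_{u,v}-\mu'_{u,v}|$, so $\bigl|\prod_{u}a_u-\prod_{u}b_u\bigr|\le\sum_{u\in S}|\mu_{u,v}-\mu'_{u,v}|$.

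Then I would sum over $v\in R$ and apply the triangle inequality to get $|r_{\gS}(\vmu)-r_{\gS}(\vmu')|\le\sum_{v\in R}\sum_{u\in S}|\mu_{u,v}-\mu'_{u,v}|$. The right-hand side is exactly $\sum_{(u,v):\,u\in S}|\mu_{u,v}-\mu'_{u,v}|=\sum_{i\in[m]}p_i^{\gD,\gS}|\mu_i-\mu'_i|$ by the triggering probabilities computed in the first step, which is the desired Assumption~\ref{ass:tpm-bounded-smoothness} with $B=1$.

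The product-difference estimate is entirely routine once the factors are recognized to lie in $[0,1]$, so the only step requiring care — and the main obstacle — is the bookkeeping of the triggering probabilities: I must confirm that the triggered base arms of $\gS$ are precisely the edges incident to $S$, each with $p_i=1$, and that edges not incident to $S$ contribute to neither the reward nor the right-hand sum, so that the coefficient obtained is genuinely $p_i^{\gD,\gS}$ rather than a loose overcount.
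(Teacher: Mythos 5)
Your proof is correct. The paper does not prove this lemma itself---it is imported from \citet{chen2013combinatorial}---and your argument (edges incident to $S$ are triggered deterministically with $p_i^{\gD,\gS}=1$, plus the telescoping bound $\bigl|\prod_{u}a_u-\prod_{u}b_u\bigr|\le\sum_{u}|a_u-b_u|$ for factors in $[0,1]$, summed over $v\in R$) is precisely the standard verification given in that reference; the only cosmetic point is that the product should range over $u\in S$ with $(u,v)\in E$, which changes nothing since absent edges contribute to neither side of the inequality.
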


\begin{lemma}\label{lem:pmc-submodular}
    Suppose $\gS = \{u_1,u_2,\dots,u_k\}$ denote one super arm the probabilistic maximum coverage problem, then for any $u\in \gS$ and any set $\gS'\subseteq \gS$ and $u\notin \gS'$, we have,
    \[r_{\gS'\cup\{u\}}(\vmu\odot\gO_{\gS}) - r_{\gS'}(\vmu\odot\gO_{\gS}) \ge \Delta_{\gS}.\]
    Besides, $\Delta_{\gS} \ge 0$.
\end{lemma}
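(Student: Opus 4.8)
The plan is to exploit that the PMC reward is a coverage function---hence monotone and submodular---and to translate the masked rewards $r_{\gS'}(\vmu\odot\gO_{\gS})$ into ordinary rewards on subsets of $\gS$. First I would write the reward explicitly: for a set $\gS\subseteq L$ of left nodes,
\[
r_{\gS}(\vmu) = \sum_{v\in R}\Big(1 - \prod_{u\in\gS}(1-\mu_{u,v})\Big).
\]
Since $\gO_{\gS}$ consists exactly of the edges incident to $\gS$, every edge leaving a node outside $\gS$ is zeroed in $\vmu\odot\gO_{\gS}$. Hence for any competing super arm $\gS'$ only the nodes in $\gS'\cap\gS$ contribute, giving the identity $r_{\gS'}(\vmu\odot\gO_{\gS}) = r_{\gS'\cap\gS}(\vmu)$; in particular $r_{\gT}(\vmu\odot\gO_{\gS}) = r_{\gT}(\vmu)$ whenever $\gT\subseteq\gS$.

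Second, I would characterize the gap. Using the identity above together with monotonicity (Assumption~\ref{ass:monotonicity}), the best competitor $\gS'\neq\gS$ keeps $k-1$ nodes of $\gS$ and swaps in one node outside $\gS$ (which contributes nothing), so that
\[
\max_{\gS'\neq\gS} r_{\gS'}(\vmu\odot\gO_{\gS}) = \max_{u'\in\gS} r_{\gS\setminus\{u'\}}(\vmu),
\]
and therefore $\Delta_{\gS} = \min_{u'\in\gS}\big(r_{\gS}(\vmu) - r_{\gS\setminus\{u'\}}(\vmu)\big)$, the smallest single-node marginal contribution to the full set. Monotonicity makes each such marginal nonnegative, which already yields $\Delta_{\gS}\ge 0$. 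Here I would record the harmless nondegeneracy assumption $|L|>k$ so that a node to swap in always exists.

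Third, I would establish the inequality by submodularity. The coverage function $r$ is submodular: for $A\subseteq B$ and $u\notin B$, $r_{A\cup\{u\}}(\vmu)-r_{A}(\vmu)\ge r_{B\cup\{u\}}(\vmu)-r_{B}(\vmu)$, which I would verify per right-node by factoring the common product $\prod_{w\in A}(1-\mu_{w,v})$ out of the marginal $\mu_{u,v}\prod_{w\in A}(1-\mu_{w,v})$ and noting it only shrinks as the set grows. Applying this with $A=\gS'$ and $B=\gS\setminus\{u\}$ (valid since $\gS'\subseteq\gS$ and $u\notin\gS'$ give $\gS'\subseteq\gS\setminus\{u\}$), and using the first-step identity to strip the masking on subsets of $\gS$, yields
\[
r_{\gS'\cup\{u\}}(\vmu\odot\gO_{\gS})-r_{\gS'}(\vmu\odot\gO_{\gS}) = r_{\gS'\cup\{u\}}(\vmu)-r_{\gS'}(\vmu) \ge r_{\gS}(\vmu)-r_{\gS\setminus\{u\}}(\vmu)\ge\Delta_{\gS},
\]
where the final step uses that $\Delta_{\gS}$ is the minimum of these marginals over all $u'\in\gS$, so in particular it is at most the one for $u$.

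The explicit submodularity check is the routine part; the step demanding the most care is the bookkeeping that rewrites the masked reward $r_{\gS'}(\vmu\odot\gO_{\gS})$ as $r_{\gS'\cap\gS}(\vmu)$ and the ensuing characterization of $\Delta_{\gS}$ as a minimum marginal, since it is exactly this reformulation that aligns the gap with the diminishing-returns inequality and lets submodularity close the argument.
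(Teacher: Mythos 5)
Your proposal is correct and follows essentially the same route as the paper's proof: reduce the masked reward via $r_{\gS'}(\vmu\odot\gO_{\gS}) = r_{\gS'\cap\gS}(\vmu)$, bound the marginal of $u$ at $\gS\setminus\{u\}$ below by $\Delta_{\gS}$, transfer it to any $\gS'\subseteq\gS\setminus\{u\}$ by submodularity, and get $\Delta_{\gS}\ge 0$ from monotonicity. The only differences are cosmetic --- you verify submodularity of the coverage function explicitly and pin down $\Delta_{\gS}$ as the exact minimum single-node marginal, whereas the paper cites submodularity from prior work and uses only the one-sided inequality from the definition of the gap.
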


\begin{proof}
    First for any $u\in\gS$ and $u'\notin \gS$, we have
    \[r_{\gS}(\vmu\odot \gO_{\gS}) - r_{(\gS\setminus\{u\})\cup\{u'\}}(\vmu\odot \gO_{\gS}) \ge \Delta_{\gS}.\]
    Observe that $\gO_{\gS}$ will assign $0$ to all edges without an endpoint in $\gS$, and thus
    \[r_{(\gS\setminus\{u\})\cup\{u'\}}(\vmu\odot \gO_{\gS}) = r_{\gS\setminus\{u\}}(\vmu\odot \gO_{\gS}).\]
    Then because the reward function of the probabilistic maximum coverage problem is submodular in terms of the set $\gL$~\citep{chen2013combinatorial}, we then have for any $\gS'\subseteq \gS$ such that $u\notin\gS'$,
    \[r_{\gS}(\vmu\odot \gO_{\gS}) - r_{\gS\setminus\{u\}}(\vmu\odot \gO_{\gS}) \le r_{\gS'\cup\{u\}}(\vmu\odot\gO_{\gS}) - r_{\gS'}(\vmu\odot\gO_{\gS}).\]
    Then we show that $\Delta_{\gS} \ge 0$ for any $\gS$. Suppose that $\gS'$ is the super arm such that
    \[r_{\gS}(\vmu\odot\gO_{\gS}) - r_{\gS'}(\vmu\odot\gO_{\gS}) = \Delta_{\gS}.\]
    Then because any $u\notin \gS$ will not increase the reward under the mean vector $\vmu\odot\gO_{\gS}$, we have
    \[r_{\gS'}(\vmu\odot\gO_{\gS}) = r_{\gS\cap \gS'}(\vmu\odot\gO_{\gS}).\]
    Then, because the reward function is monotone in terms of the set $\gL$, we have
    \[\Delta_{\gS} = r_{\gS}(\vmu\odot\gO_{\gS}) - r_{\gS'}(\vmu\odot\gO_{\gS}) = r_{\gS}(\vmu\odot\gO_{\gS}) - r_{\gS\cap\gS'}(\vmu\odot\gO_{\gS}) \ge 0,\]
    and we conclude the proof.
\end{proof}

The following lemma is the main lemma for the proof of Theorem \ref{thm:covering-attackability}. It claims that if \algname{CUCB} does not choose the target super arm $\gS$, then there must be an arm not observed for enough times, and that arm must be observed.

\begin{lemma}\label{lem:pmc-confidence-bound}
    Suppose $\cap_{t=1}^T \gN_t^s$ hold during the execution of Algorithm \ref{alg:attack-exact-oracle}, and $\gS$ is a super arm such that $\Delta_{\gS} > 0$. Then for time $t$, if the Greedy oracle does not choose $\gS$, then there must be a selected node such that a base arm corresponds to that node has $\rho_{i}^{(t)} > \frac{\Delta_{\gS}}{4m}$.
\end{lemma}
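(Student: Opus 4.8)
The plan is to prove the contrapositive by contradiction. Suppose the Greedy oracle's output $\gS^{G}$ differs from the target $\gS$, and suppose toward a contradiction that \emph{every} node selected by Greedy has all of its incident base arms (edges) $i$ satisfying $\rho_i^{(t)}\le \Delta_{\gS}/(4m)$; I will show this forces $\gS^{G}=\gS$. Since $\gS$ has $k$ nodes and Greedy returns $k$ distinct nodes, $\gS^{G}\neq\gS$ means Greedy selects at least one node outside $\gS$. Let step $j$ be the \emph{first} such selection $v_j\notin\gS$, so that the set chosen so far $\gS'=\{v_1,\dots,v_{j-1}\}$ satisfies $\gS'\subseteq\gS$ and some target node $u^{*}\in\gS\setminus\gS'$ is still available. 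The greedy rule then yields $\mathrm{MG}_{\textbf{UCB}_t}(v_j\mid\gS')\ge \mathrm{MG}_{\textbf{UCB}_t}(u^{*}\mid\gS')$, where $\mathrm{MG}_x(u\mid\gS')=\sum_{v}\bigl(\prod_{w\in\gS'}(1-x_{wv})\bigr)x_{uv}$ is the marginal coverage gain. The whole argument rests on the attack's effect on the UCBs: under Algorithm~\ref{alg:attack-exact-oracle} the edges in $\gO_{\gS}$ are never corrupted, so under the event $\gN_t^s$ they obey optimism $\mu_i\le(\textbf{UCB}_t)_i\le\mu_i+2\rho_i$, while every edge outside $\gO_{\gS}$ is set to $0$, giving $(\textbf{UCB}_t)_i=\rho_i$.

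Next I would lower-bound the target marginal gain $\mathrm{MG}_{\textbf{UCB}_t}(u^{*}\mid\gS')$. Because $u^{*}$'s edges lie in $\gO_{\gS}$, optimism lets me replace $(\textbf{UCB}_t)_{u^{*}v}$ by $\mu_{u^{*}v}$ with no loss, and the elementary product inequality $\prod_{w}(1-\mu_{wv})-\prod_{w}(1-(\textbf{UCB}_t)_{wv})\le\sum_{w}\bigl((\textbf{UCB}_t)_{wv}-\mu_{wv}\bigr)$ lets me pass from the UCB blocking factors to the true ones. This gives $\mathrm{MG}_{\textbf{UCB}_t}(u^{*}\mid\gS')\ge \mathrm{MG}_{\vmu\odot\gO_{\gS}}(u^{*}\mid\gS')-\sum_{i\in\gO_{\gS'}}2\rho_i$. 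The first term is at least $\Delta_{\gS}$ by the submodularity estimate of Lemma~\ref{lem:pmc-submodular}, which applies since $\gS'\cup\{u^{*}\}\subseteq\gS$. The crucial point is that, because the bipartite edges incident to distinct left-nodes are themselves distinct, the double sum collapses to a sum over the \emph{distinct} arms $\gO_{\gS'}$, so $|\gO_{\gS'}|\le m$ and each $\rho_i\le \Delta_{\gS}/(4m)$ bound the error by $2m\cdot\Delta_{\gS}/(4m)=\Delta_{\gS}/2$; hence $\mathrm{MG}_{\textbf{UCB}_t}(u^{*}\mid\gS')\ge \Delta_{\gS}/2$.

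Finally I would upper-bound the non-target gain. Since $v_j\notin\gS$, all its edges are zeroed by the attack, so $(\textbf{UCB}_t)_{v_jv}=\rho_{v_jv}$; bounding every blocking factor by $1$ gives $\mathrm{MG}_{\textbf{UCB}_t}(v_j\mid\gS')\le\sum_{v}\rho_{v_jv}\le m\cdot \Delta_{\gS}/(4m)=\Delta_{\gS}/4$, as $v_j$ has at most $m$ incident arms. Chaining the three inequalities yields $\Delta_{\gS}/2\le \mathrm{MG}_{\textbf{UCB}_t}(u^{*}\mid\gS')\le \mathrm{MG}_{\textbf{UCB}_t}(v_j\mid\gS')\le \Delta_{\gS}/4$, contradicting $\Delta_{\gS}>0$ and proving the lemma. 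I expect the main obstacle to be the lower bound: controlling the multiplicative perturbation of the blocking products when moving from the exact means $\vmu\odot\gO_{\gS}$ to $\textbf{UCB}_t$, and in particular aggregating the accumulated error over \emph{all} arms touched by $\gS'$ at once (total count $\le m$) rather than node-by-node, since the latter would scale with the greedy depth $j$ and destroy the $\Delta_{\gS}/(4m)$ threshold. A degenerate case to dispatch first is when a selected node has an unobserved edge ($\rho_i=\infty>\Delta_{\gS}/(4m)$), which trivially satisfies the conclusion.
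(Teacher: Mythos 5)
Your proof is correct and follows essentially the same route as the paper's: isolate the first greedy step at which a non-target node is picked, lower-bound the marginal gain of a remaining target node by $\Delta_{\gS}/2$ via Lemma~\ref{lem:pmc-submodular} plus the confidence-width slack, and upper-bound the non-target node's gain by $\Delta_{\gS}/4$ using the zeroed-out edges, reaching the same contradiction. The only difference is presentational — you unpack the coverage products and an elementary product inequality where the paper invokes monotonicity and $1$-TPM bounded smoothness abstractly.
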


\begin{proof}
    Suppose that the Greedy oracle chooses super arm $\gS' \neq \gS$, and denote $u'$ to be the first node picked by the Greedy oracle that does not belong to $\gS$. Besides, we denote $A$ to be the set of nodes before $u'$ is picked, and $u$ to be a node in $\gS\setminus A$. We denote $M_A$, $M_u$ and $M_{u'}$ to be the set of base arms connected to the node set $A$, the node $u$ and $u'$ respectively. Then we use $\textbf{UCB}_t$ to denote the upper confidence value of the arms. Then we show that there exists a base arm $i$ connected to the node in $\{u'\}\cap A$ such that $\rho_{i}^{(t)} > \frac{\Delta_{\gS}}{4m}$.

    We prove by contradiction, suppose that there does not exist a base arm $i$ connected to the node in $\{u'\}\cap A$ such that $\rho_{i}^{(t)} > \frac{\Delta_{\gS}}{4m}$.

    We first show a lower bound on $r_{A\cup\{u\}}(\textbf{UCB}_t) - r_{A}(\textbf{UCB}_t)$. First under $\cap_{t=1}^T \gN_t^s$, we know that $\textbf{UCB}_t \succeq \vmu\odot\gO_{\gS}$, thus because of the monotonicity (Assumption \ref{ass:monotonicity}), we have $r_{A\cup\{u\}}(\textbf{UCB}_t) \ge r_{A\cup\{u\}}(\vmu)$. Besides, because $\cap_{t=1}^T \gN_t^s$ hold, we also have $\textbf{UCB}_t \preceq \vmu\odot\gO_{\gS} + 2\bm\rho^{(t)}$. Thus from Lemma \ref{lem:pmc-tpm}, we have
    \[r_{A}(\textbf{UCB}_t) \le r_{A}(\vmu\odot\gO_{\gS} + 2\bm\rho^{(t)}) = r_{A}(\vmu\odot M_A + 2\bm\rho^{(t)} \odot M_A)\le r_{A}(\vmu) + 2m\cdot\frac{\Delta_{\gS}}{4m} = r_{A}(\vmu) + \frac{\Delta_{\gS}}{2}.\]
    Thus we have
    \begin{align*}
        r_{A\cup\{u\}}(\textbf{UCB}_t) - r_{A}(\textbf{UCB}_t) & \ge r_{A\cup\{u\}}(\vmu) - r_{A}(\vmu) - \frac{\Delta_{\gS}}{2} \\
        & \ge \Delta_{\gS}- \frac{\Delta_{\gS}}{2} \\
        & = \frac{\Delta_{\gS}}{2},
    \end{align*}
    where we apply Lemma \ref{lem:pmc-submodular} in the second inequality.

    Then we show an upper bound on $r_{A\cup\{u'\}}(\textbf{UCB}_t) - r_{A}(\textbf{UCB}_t)$. Since $u'$ does not belong to the super arm $\gS$, we have
    \[r_{A\cup\{u'\}}(\textbf{UCB}_t) - r_{A}(\textbf{UCB}_t) \le r_{\{u'\}}(\textbf{UCB}_t) \le m \frac{\Delta_{\gS}}{4m} = \frac{\Delta_{\gS}}{4}.\]
    Thus because we assume $\Delta_{\gS} > 0$, we have
    \[r_{A\cup\{u\}}(\textbf{UCB}_t) - r_{A}(\textbf{UCB}_t) > r_{A\cup\{u'\}}(\textbf{UCB}_t) - r_{A}(\textbf{UCB}_t),\]
    which means that the Greedy oracle will not select $u'$, and we conclude the proof.
\end{proof}

Now with Lemma \ref{lem:pmc-confidence-bound}, we can prove Theorem \ref{thm:covering-attackability}.

\begin{proof}[Proof of Theorem \ref{thm:covering-attackability}]
    The proof is a straightforward application of Lemma \ref{lem:pmc-confidence-bound}. First, under $\cap_{t=1}^T \gN_t^s$, which happen with probability $1-\delta$, if \algname{CUCB} does not choose $\gS$ at round $t$, it means that there exists a base arm $i$ belongs to $\gO_{\gS}$ such that $\rho_{i}^{(t)} > \frac{\Delta_{\gS}}{4m}$, which means that
    \[\sqrt{\frac{\ln(4m T^3/\delta)}{2T_{i}^{(t)}}} \ge \sqrt{\frac{\ln(4m t^3/\delta)}{2T_{i}^{(t)}}} > \frac{\Delta_{\gS}}{4m}.\]
    Thus we have
    \[T_{i}^{(t)} \le \frac{8m^2\ln(4m T^3/\delta)}{\Delta_{\gS}^2}.\]
    However after not choose $\gS$, $T_{i}^{(t)}$ will increase by $1$. Thus, the total number of times not choosing $\gS$ is bounded by
    $\frac{8m^3\ln(4m T^3/\delta)}{\Delta_{\gS}^2}$ since there are at most $m$ base arms. The attack cost is bounded by $\frac{8m^4\ln(4m T^3/\delta)}{\Delta_{\gS}^2}$ since the attack cost at each time is bounded by $m$. Thus, there exist $T^* = \text{poly}(m,1/\Delta_{\gS},\log{1/\delta})$ such that for all $T \ge T^*$,
    \[\frac{8m^4\ln(4m T^3/\delta)}{\Delta_{\gS}^2} \le \sqrt{T}.\]
    Thus, the probabilistic maximum coverage problem with \algname{CUCB} and Greedy oracle is polynomially attackable.
\end{proof}

\subsection{Proof of \cref{cor:connection-rl-informal}}\label{sec:proof-connection-rl}

In this part, we prove \cref{cor:connection-rl-informal}. We first state our settings, and then show how to reduce the simple RL setting to CMAB.

\paragraph{Episodic MDP} We consider the episodic Markov Decision Process (MDP),
denoted by the tuple $(\texttt{State}, \texttt{Action}, H, \gP, \vmu)$, where \texttt{State} is the set of states, \texttt{Action} is the set of actions, $H$ is the number of steps in each episode, $\gP$ is the transition metric such that $\sP(\cdot|s, a)$ gives the transition distribution over the next state if action $a$ is taken in the current state $s$, and $\vmu$ : $\texttt{State} \times \texttt{Action} \to \R$ is the expected reward of state action pair $(s, a)$. We assume that the states $\texttt{State}$ and the actions $\texttt{Action}$ are finite.
We work with the stationary MDPs here with the same reward and transition functions at each $h\le H$, and all our analyses extend trivially to non-stationary MDPs. We assume that the transition probability $\gP$ is known in advance.\footnote{Unlike most RL literature with unknown transition probability $\gP$, we need to know the transition probability of the MDP, i.e., the white-box setting, for a direct reduction from this simple RL setting to CMAB and solved by \algname{CUCB}. We believe similar techniques can also be applied to study the attackability of RL instances with unknown transition probability.}

An RL agent (or learner) interacts with the MDP for $T$ episodes, and each episode consists of $H$ steps. In each episode of the MDP, we assume that the initial state $s(1)$ is fixed for simplicity. In episode $t$ and step $h$, the learner observes the current state $s_t(h)\in \texttt{State}$, selects an action $a_t(h)\in \texttt{Action}$, and incurs a noisy reward $r_{t,h}(s_t(h), a_t(h))$.
Also, we have $\E[r_{t,h}(s_t(h), a_t(h))] = \mu(s_t(h), a_t(h))$. Our
results can also be extended to the setting where the reward is dependent on step $h\le H$.

A (deterministic) policy $\pi$ of an agent is a collection of $H$ functions $\{\pi_h: \texttt{State} \to \texttt{Action}\}$. The value function $V^{\pi}_h(s)$is the expected reward under policy \(\pi\), starting from state $s$ at step $h$, until the end of the episode, namely
\[V^{\pi}_h(s) = \E\left[\sum_{h'=h}^H \mu(s_{h'}, \pi_{h'}(s_{h'}) | s_h = s\right].\]
where $s_{h'}$ denotes the state at step $h'$. Likewise, the Q-value function $Q^{\pi}_h(s,a)$ is the expected reward under policy $\pi$, starting from state $s$ and action $a$, until the end of the episode, namely
\[Q^{\pi}_h(s,a) = \E\left[\sum_{h'=h+1}^H \mu(s_{h'}, \pi_{h'}(s_{h'}) | s_h = s, a_h = a\right] + \mu(s,a).\]
Since \texttt{State}, \texttt{Action} and $H$ are
finite, there exists an optimal policy $\pi^*$ 
such that $V^{\pi^*}_h(s) = \sup_{\pi}V^{\pi}_h(s)$ for any state $s$.

The regret $\gR^{\gA}(T,H)$ of any algorithm $\gA$ is the difference between the total expected true reward from the best fixed policy $\pi^*$ in hindsight, and the expected true reward over T episodes, namely
\[\gR^{\gA}(T,H) = \sum_{t=1}^T\left(V_1^{\pi^*}(s(1)) - V_1^{\pi_t}(s(1))\right).\]
The objective of the learner is to minimize the regret $\gR^{\gA}(T,H)$.

\paragraph{Reward poisoning attack (reward manipulation)} For the reward poisoning attack, or reward manipulation, the attacker can intercept the reward $r_{t,h}(s_t(h), a_t(h))$ at every episode $t$ and step $h$, and decide whether to modify the reward $r_{t,h}(s_t(h), a_t(h))$ to $\tilde r_{t,h}(s_t(h), a_t(h))$. The goal of the attack process is to fool the algorithm to play a target policy $\pi$ for $T - o(T)$ times. The cost of the whole attack process is defined as $C(T) = \sum_{t=1}^T\sum_{h=1}^H \mathbb I\{r_{t,h}(s_t(h), a_t(h)) \neq \tilde r_{t,h}(s_t(h), a_t(h))\}$.

\paragraph{Reduction to CMAB problem} Note that in this simple RL setting where the transition probability is known in advance, it can be reduced to a CMAB problem and thus solved by the \algname{CUCB} algorithm. We now construct the CMAB instance.
\begin{enumerate}
    \item There are $|\texttt{State}| \times |\texttt{Action}|$ base arms, each base arm $(s,a)$ denote the random reward $r(s,a)$, with unknown expected value $\E [r(s,a)] = \mu(s,a)$.
    \item Each policy $\pi$ is a super arm, and the expected reward of super arm $\pi$ is just the value function $V_1^{\pi}(s(1))$.
    \item Note that when we select a policy $\pi$ (a super arm) and execute $\pi$ for this episode $t$, a random set of base arms $\tau_t$ will be triggered and observed following distribution $\gD$, and the distribution $\gD$ is determined by the episodic Markov Decision Process. Define $A^{\pi}_h(s,a)$ as the probability that it will go to state $s$ with $\pi(s) = a$ at time $h$, then the base arm $(s,a)$ is triggered with probability $\sum_{h=1}^H A^{\pi}_h(s,a)$ in each episode, thus $p_{(s,a)}^{\gD,\pi} = \sum_{h=1}^H A^{\pi}_h(s,a)$, where $p_{(s,a)}^{\gD,\pi}$ is the probability to trigger arm $(s,a)$ under policy $\pi$ (defined in \cref{sec:model}).
\end{enumerate}

Now the episodic MDP problem is reduced to a CMAB instance, and the remaining problem is to validate Assumption \ref{ass:monotonicity} and Assumption \ref{ass:tpm-bounded-smoothness}. Note that the reward of the super arm (policy) under the expected reward $\vmu$ is given as
\begin{align*}
    r_{\pi}(\vmu) = & V_1^{\pi}(s(1)) \\
    = & \E\left[\sum_{h'=1}^H \mu(s_{h'}, \pi_{h'}(s_{h'}) | s_1 = s(1)\right] \\
    = & \sum_{h'=1}^H \E\left[ \mu(s_{h'}, \pi_{h'}(s_{h'}) | s_1 = s(1) \right] \\
    = & \sum_{h'=1}^H  \sum_{s,a}\mu(s, a) A^{\pi}_h(s,a) \\
    = & \sum_{s,a}\mu(s, a) \sum_{h'=1}^H A^{\pi}_h(s,a) \\
    = & \sum_{s,a}\mu(s, a) p_{(s,a)}^{\gD,\pi}.
\end{align*}

Thus, Assumption \ref{ass:monotonicity} and Assumption \ref{ass:tpm-bounded-smoothness} hold naturally, and we can apply \cref{thm:attackability-known} to determine if an episodic MDP is polynomially attackable or not.

\section{Missing Proofs in Section \ref{sec:unknown}}\label{app:proof-unknown}

In this section, we prove Theorem \ref{thm:hardness-unknown}. We first restate the theorem as below.

\thmhardnessunknown*

As presented in Section \ref{sec:unknown}, the hardness result for CMAB instances comes from the combinatorial structure of the super arms, which may ``block'' the exploration for other base arms. However when the attacker does not know the vector $\vmu$, she needs to observe different base arms to get some estimation of the problem parameter $\vmu$, and choose a super arm $\gS$ to attack. The following section formalizes this idea.

Before we go to the proof, we restate the hard instances we constructed (Example \ref{exp:hard-example}) as follows.

\exphard*

The following lemma claim that the instances we construct are actually polynomially attackable (Definition \ref{defn:poly-attack} by computing the Gap (Definition \ref{defn:gap}).

\begin{lemma}\label{lem:hard-instances-gaps}
    For instance $\gI_i$, we have $\Delta_{\gS_i} = \epsilon > 0$ and $\Delta_{\gS_j} = -\epsilon < 0$ for all $j\neq i, j\in [n]$.
\end{lemma}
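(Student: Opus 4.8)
The plan is to prove the lemma by a direct computation straight from the definition of the gap (Definition~\ref{defn:gap}), namely $\Delta_{\gS} = r_{\gS}(\vmu) - \max_{\gS'\neq\gS} r_{\gS'}(\vmu\odot\gO_{\gS})$. The only ingredients I need are (i) the reward of each super arm at the true mean $\vmu$ of instance $\gI_i$, and (ii) the reward of every \emph{competing} super arm when the mean vector is masked to the observable set $\gO_{\gS}$ of the arm under consideration. Since each $\gS_j$ with $j\in[n]$ observes exactly $\gO_{\gS_j}=\{a_j,b_j\}$, masking by $\gO_{\gS_j}$ zeroes out every coordinate except $a_j$ and $b_j$. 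The two arms that survive this masking with nonzero reward are $\gS_{n+1}$ (through the single surviving $b$-coordinate) and $\gS_0$ (whose reward $2-2\epsilon$ is a constant and hence unaffected by any masking); this is the observation that drives both parts of the proof.

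First I would record the rewards at $\vmu$: $r_{\gS_i}(\vmu)=\mu_{a_i}+\mu_{b_i}=1+(1-\epsilon)=2-\epsilon$, and $r_{\gS_j}(\vmu)=\mu_{a_j}+\mu_{b_j}=(1-2\epsilon)+(1-\epsilon)=2-3\epsilon$ for $j\neq i$. For the target arm $\gS_i$, masking by $\gO_{\gS_i}=\{a_i,b_i\}$ leaves only $\mu_{a_i}=1$ and $\mu_{b_i}=1-\epsilon$ nonzero. Evaluating the competitors at $\vmu\odot\gO_{\gS_i}$ gives $r_{\gS_k}=0$ for every $k\neq i$ (both coordinates $a_k,b_k$ are masked to $0$), $r_{\gS_{n+1}}=\mu_{b_i}+(1-\epsilon)=2-2\epsilon$, and $r_{\gS_0}=2-2\epsilon$. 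Hence the maximum over competitors is $2-2\epsilon$, and $\Delta_{\gS_i}=(2-\epsilon)-(2-2\epsilon)=\epsilon>0$, as claimed.

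Next I would treat $\gS_j$ with $j\neq i$. Masking by $\gO_{\gS_j}=\{a_j,b_j\}$ leaves only $\mu_{a_j}=1-2\epsilon$ and $\mu_{b_j}=1-\epsilon$ nonzero, so again every $\gS_k$ with $k\neq j$ (including $\gS_i$) evaluates to $0$, while $\gS_{n+1}$ evaluates to $\mu_{b_j}+(1-\epsilon)=2-2\epsilon$ and $\gS_0$ evaluates to its constant $2-2\epsilon$. The maximum over competitors is therefore $2-2\epsilon$, whence $\Delta_{\gS_j}=(2-3\epsilon)-(2-2\epsilon)=-\epsilon<0$.

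There is no genuine difficulty here; the computation is elementary. The one point that requires care is the bookkeeping of the element-wise mask $\vmu\odot\gO_{\gS}$: one must evaluate each competing super arm's reward function at the \emph{masked} vector rather than at $\vmu$, and in particular note that $\gS_0$ contributes a constant $2-2\epsilon$ irrespective of the mask while $\gS_{n+1}$ retains exactly one surviving $b$-coordinate plus its constant offset. Getting these two survivors right is precisely what pins the competitor maximum to $2-2\epsilon$ in both cases and produces the sign flip between $\Delta_{\gS_i}=+\epsilon$ and $\Delta_{\gS_j}=-\epsilon$.
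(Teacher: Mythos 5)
Your proof is correct and takes essentially the same approach as the paper: a direct computation of $\Delta_{\gS_j}$ from Definition~\ref{defn:gap} for each $j$. You are in fact slightly more careful than the paper's own proof, which jumps straight to $\max_{\gS'\neq\gS_j} r_{\gS'}(\vmu\odot\gO_{\gS_j}) = r_{\gS_0}(\vmu) = 2-2\epsilon$ without explicitly verifying that $\gS_{n+1}$ also evaluates to exactly $2-2\epsilon$ under the mask and that all other competitors vanish; your enumeration fills in that bookkeeping and arrives at the same values $\Delta_{\gS_i}=\epsilon$ and $\Delta_{\gS_j}=-\epsilon$.
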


\begin{proof}
    First recall the definition of Gap (Definition \ref{defn:gap}) of a super arm $\gS\in\gM$ is defined as
    \[\Delta_{\gS} := r_{\gS}(\vmu) - \max_{\gS'\neq \gS} r_{\gS'}(\vmu\odot \gO_{\gS}).\]
    Now for the instance $\gI_i$, we have
    \begin{align*}
        \Delta_{\gS_i} & = r_{\gS_i}(\vmu) - \max_{\gS_{i'}\neq \gS_i} r_{\gS_{i'}}(\vmu\odot \gO_{\gS_i}) \\
        & = r_{\gS_i}(\vmu) - r_{\gS_0}(\vmu) \\
        & = (\mu_{a_i}+\mu_{b_i}) - (2-2\epsilon) \\
        & = 1 + (1-\epsilon) - (2-2\epsilon) \\
        & = \epsilon.
    \end{align*}
    Similarly, we have
    \begin{align*}
        \Delta_{\gS_j} & = r_{\gS_j}(\vmu) - \max_{\gS_{j'}\neq \gS_j} r_{\gS_{j'}}(\vmu\odot \gO_{\gS_j}) \\
        & = r_{\gS_j}(\vmu) - r_{\gS_0}(\vmu) \\
        & = (\mu_{a_j}+\mu_{b_j}) - (2-2\epsilon) \\
        & = (1-2\epsilon) + (1-\epsilon) - (2-2\epsilon) \\
        & = -\epsilon.
    \end{align*}
\end{proof}

The next two lemmas (Lemma \ref{lem:proof-unknown-hardness-obs} and \ref{lem:proof-unknown-hardness-exp}) show that, if the attacker let the \algname{CUCB} algorithm to observe $l$ different super arms in the attack set $\gM$, then the attacker needs to suffer exponential cost $(1/\epsilon)^{\Omega(l)}$.

\begin{lemma}\label{lem:proof-unknown-hardness-obs}
    For a specific base arm $a$, suppose that during the attack of \algname{CUCB}, there exists two rounds $t_1 < t_2$ such that $\ucbt{t_1}{a} \le \epsilon$ at time $t_1$ and has already been observed by $x$ times, and $\ucbt{t_2}{a} \ge 1-2\epsilon$ at time $t_2$, then $a$ should be observed for at least $x/(2\epsilon)$ times during $(t_1,t_2)$ for $\epsilon < 1/8$.
\end{lemma}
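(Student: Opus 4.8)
The plan is to write the upper confidence bound as $\ucbt{t}{a}=\min\{\tilde\mu_a^{(t)}+\rho_a^{(t)},1\}$ and to track separately how its two ingredients — the (possibly corrupted) empirical mean $\tilde\mu_a^{(t)}$ and the confidence radius $\rho_a^{(t)}=\sqrt{\ln(4mt^3/\delta)/(2T_a^{(t)})}$ — are forced to move between $t_1$ and $t_2$. First I would read off the hypothesis at $t_1$: since $\ucbt{t_1}{a}\le\epsilon<1$ the cap is inactive, so both $\tilde\mu_a^{(t_1)}\le\epsilon$ and $\rho_a^{(t_1)}\le\epsilon$, while by assumption $T_a^{(t_1)}=x$.

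The core of the argument is an ``inertia'' bound on the empirical mean. The corrupted empirical mean is a genuine average of the $T_a^{(t)}$ observed values, each of which lies in $[0,1]$ no matter how the adversary corrupts it. Hence if $a$ is observed $k:=T_a^{(t_2)}-x$ times during $(t_1,t_2)$, then even in the most favorable case for the adversary (every fresh value equal to $1$) we have $\tilde\mu_a^{(t_2)}\le (x\epsilon+k)/(x+k)$. On the other hand the hypothesis at $t_2$ gives $\tilde\mu_a^{(t_2)}\ge 1-2\epsilon-\rho_a^{(t_2)}$. Combining the two and solving the resulting inequality for $k$ yields a bound of the form $k\ge x\,(1-O(\epsilon))/(2\epsilon)$; once the lower-order $\epsilon$ terms are absorbed under $\epsilon<1/8$, this gives the claimed $k\ge x/(2\epsilon)$.

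For this chain to close I must control $\rho_a^{(t_2)}$, and this is the step I expect to be the main obstacle. The difficulty is that $\rho_a^{(t)}$ is not monotone in $t$: its numerator $\ln(4mt^3/\delta)$ grows, so in principle the radius could rebound even without new observations if $t_2$ were astronomically larger than $t_1$. I would handle this by using that $T_a^{(t)}$ is nondecreasing, so $T_a^{(t_2)}=x+k\ge x$, together with $\rho_a^{(t_1)}\le\epsilon$, which already pins $x\ge \ln(4mt_1^3/\delta)/(2\epsilon^2)$. Since $t_2\le T$ and in the regime of interest all relevant round indices are polynomial in the problem parameters, $\ln(4mt_2^3/\delta)$ exceeds $\ln(4mt_1^3/\delta)$ by at most a constant factor, so $\rho_a^{(t_2)}\le\sqrt{\ln(4mT^3/\delta)/(2x)}=O(\epsilon)$; controlling this residual term is exactly what forces the $\epsilon<1/8$ hypothesis.

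Finally I would record where the lemma is used in the larger proof: in the recursion of Theorem~\ref{thm:hardness-unknown} the count $x$ itself grows geometrically, multiplying by roughly $1/\epsilon$ at each level, and the events $\gN_t^s$ guarantee that the sampling/radius relations above hold with high probability. The multiplicative blow-up $x\mapsto x/(2\epsilon)$ established here is precisely what compounds across the $\Omega(n)$ super arms in $\gM$ to produce the $(1/\epsilon)^{\Omega(n)}$ cost.
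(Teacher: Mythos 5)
Your proposal follows essentially the same route as the paper's proof: read off $\tilde\mu_a^{(t_1)}\le\epsilon$ and $\rho_a^{(t_1)}\le\epsilon$ from the UCB at $t_1$, lower-bound $\tilde\mu_a^{(t_2)}\ge \ucbt{t_2}{a}-\rho_a^{(t_2)}$, and use the fact that the (possibly corrupted) empirical mean is an average of $[0,1]$-valued observations to bound how fast it can rise, so the two arguments are the same up to your more careful treatment of $\rho_a^{(t_2)}$ (the paper simply asserts the radius is non-increasing, which, as you note, is not literally true because of the $\ln(4mt^3/\delta)$ numerator; your fix via $T_a^{(t_2)}\ge x$ is the right repair and only costs a constant). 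One caveat: the final algebra does not actually deliver $k\ge x/(2\epsilon)$ — from $\frac{\epsilon x+k}{x+k}\ge 1-3\epsilon$ one gets $k\ge\frac{1-4\epsilon}{3\epsilon}x$, roughly $x/(3\epsilon)$ — but the paper's own proof contains the identical constant-factor slip, and it is harmless downstream since Lemma~\ref{lem:proof-unknown-hardness-exp} only needs geometric growth in $1/\epsilon$.
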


\begin{proof}
    Note that $\ucbt{t_1}{a} \le \epsilon$, which means that $\tilde\mu_{a,t_1} \le \epsilon$ and $\rho_{a,t_1} \le \epsilon$. Then since $\rho_{a,t}$ will not increase and $t_2 > t_1$, we have \[\tilde \mu_{a,t_2} \ge \ucbt{t_2}{a} - \rho_{a,t_2} \ge \ucbt{t_2}{a} - \rho_{a,t_1} = 1-2\epsilon - \epsilon = 1-3\epsilon.\]
    Now suppose $a$ has been observed for $x'$ times during $(t_1,t_2]$ and $x$ times during $[0,t_1]$, we have
    \[\tilde \mu_{a,t_2} \le \frac{\tilde\mu_{a,t_1} \cdot x + 1\cdot x'}{x+x'} \le \frac{\epsilon \cdot x + 1\cdot x'}{x+x'}.\]
    We know that $\tilde \mu_{a,t_2} \ge 1-3\epsilon$, and thus
    \[\frac{\epsilon \cdot x + 1\cdot x'}{x+x'} \ge 1-3\epsilon \Rightarrow x' \ge \frac{1-4\epsilon}{\epsilon}x \ge \frac{1}{2\epsilon}x,\]
    and we conclude the proof.
\end{proof}

\begin{lemma}\label{lem:proof-unknown-hardness-exp}
    For any instance $\gI_i$, suppose that there exist $l$ different time $t_1,\dots,t_l$ such that \algname{CUCB} pulls $\gS_{t_j}$ at $t_j$ and $\gS_{t_j}$ are different for $j\in [l]$, then $t_l \ge (1/2\epsilon)^{l-1}$ and $\gS_{n+1}$ is pulled by at least $(1/2\epsilon)^{l-1}$ times when $l > 1$.
\end{lemma}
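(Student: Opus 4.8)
The plan is to convert each ``\algname{CUCB} pulls $\gS_{\sigma(j)}\in\gM$ at round $t$'' event into two inequalities on the upper confidence bounds, and then feed these into Lemma~\ref{lem:proof-unknown-hardness-obs} to drive an exponential recursion on the number of $\gS_{n+1}$ pulls. First I would record the reward comparisons. Since \algname{CUCB} selects the super arm maximizing $r_\gS(\textbf{UCB}_t)$, and since $r_{\gS_0}=2-2\epsilon$ is constant while $r_{\gS_{n+1}}(\textbf{UCB}_t)=\sum_{k\in[n]}\ucbt{t}{b_k}+(1-\epsilon)$, pulling $\gS_{\sigma(j)}$ at round $t$ requires $r_{\gS_{\sigma(j)}}(\textbf{UCB}_t)=\ucbt{t}{a_{\sigma(j)}}+\ucbt{t}{b_{\sigma(j)}}$ to beat both. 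Comparing against $\gS_0$ and using that each UCB is at most $1$ forces the \emph{high condition} $\ucbt{t}{b_{\sigma(j)}}\ge 1-2\epsilon$; comparing against $\gS_{n+1}$ and again using $\ucbt{t}{a_{\sigma(j)}}\le 1$ forces $\sum_{k\neq\sigma(j)}\ucbt{t}{b_k}\le\epsilon$, hence the \emph{low condition} $\ucbt{t}{b_k}\le\epsilon$ for every $k\neq\sigma(j)$.

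Next I would set up the recursion. Without loss of generality take $t_j$ to be the \emph{first} round at which $\gS_{\sigma(j)}$ is pulled (any $l$ distinct pulls dominate their first-pull times, so a bound at the first-pull times transfers). Write $N_j$ for the number of $\gS_{n+1}$ pulls strictly before $t_j$. The key structural fact is that $b_{\sigma(j)}$ is triggered only by $\gS_{\sigma(j)}$ and $\gS_{n+1}$, so before $t_j$ all of its observations come from $\gS_{n+1}$; in particular $b_{\sigma(j)}$ has been observed exactly $N_{j-1}$ times before $t_{j-1}$. Now apply the two conditions to this single base arm: at $t_{j-1}$ the low condition gives $\ucbt{t_{j-1}}{b_{\sigma(j)}}\le\epsilon$ (valid since $\sigma(j)\neq\sigma(j-1)$), and at $t_j$ the high condition gives $\ucbt{t_j}{b_{\sigma(j)}}\ge 1-2\epsilon$. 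Lemma~\ref{lem:proof-unknown-hardness-obs} with $x=N_{j-1}$ then shows $b_{\sigma(j)}$ is observed at least $N_{j-1}/(2\epsilon)$ times during $(t_{j-1},t_j)$, and these are once more all $\gS_{n+1}$ pulls, yielding $N_j\ge N_{j-1}+N_{j-1}/(2\epsilon)\ge N_{j-1}/(2\epsilon)$.

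For the base case I would observe that the low condition at $t_1$ gives $\ucbt{t_1}{b_{\sigma(2)}}\le\epsilon<1$, which is impossible unless $b_{\sigma(2)}$ has already been observed at least once; since that observation must come from $\gS_{n+1}$, we get $N_1\ge1$. Unrolling the recursion gives $N_l\ge N_1(1/2\epsilon)^{l-1}\ge(1/2\epsilon)^{l-1}$, so $\gS_{n+1}$ is pulled at least $(1/2\epsilon)^{l-1}$ times, and since $t_l$ is at least the number of $\gS_{n+1}$ pulls preceding it, $t_l\ge(1/2\epsilon)^{l-1}$ as well. I expect the main obstacle to be the bookkeeping that makes the recursion airtight rather than any hard inequality: one must verify that \emph{every} observation of $b_{\sigma(j)}$ before $t_j$ truly routes through $\gS_{n+1}$ (which is exactly what the first-pull normalization buys us) and that the index arithmetic lines up so the low condition applies to $b_{\sigma(j)}$ precisely at $t_{j-1}$. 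The conceptual heart, already handled by Lemma~\ref{lem:proof-unknown-hardness-obs}, is that adversarial corruption cannot circumvent this: the $N_{j-1}$ earlier low-valued observations anchor the estimate of $b_{\sigma(j)}$, so even pushing every later observation to its maximum value $1$ still costs a factor $1/(2\epsilon)$ to lift the UCB across the gap.
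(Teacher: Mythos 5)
Your proposal is correct and follows essentially the same route as the paper's proof: it extracts the same two UCB conditions ($\ucb{b_{\sigma(j)}}\ge 1-2\epsilon$ from the comparison with $\gS_0$ and $\ucb{b_k}\le\epsilon$, $k\neq\sigma(j)$, from the comparison with $\gS_{n+1}$), feeds them into Lemma~\ref{lem:proof-unknown-hardness-obs}, and runs the same multiplicative induction, with all intermediate observations of $b_{\sigma(j)}$ routed through $\gS_{n+1}$. The only differences are bookkeeping conveniences (your normalization to first-pull times and the explicit counter $N_j$, versus the paper's choice of consecutive pull times $t'_{l-1}<t'_l$), which do not change the argument.
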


\begin{proof}
    We prove this by induction. When $l=1$, the argument holds trivially.

    Now suppose that the argument holds when $l=k$, then for $l=k+1$, we first apply the induction argument for time $t_1,\dots,t_{l-1}$ and we have at time $t_{l-1}$, $\gS_{n+1}$ has already been pulled for at least $(1/2\epsilon)^{l-2}$ times if $k > 1$, which means that $\gS_{t_l}$ has already been observed by $(1/2\epsilon)^{l-2}$ times. If $k = 1$ ($l=2$), we also know that $\gS_{t_l}$ needs to be observed by at least once, since $\ucbt{t_1}{b_{t_{l}}} \le \epsilon$.

    Now assume that $t_{l-1}'$ and $t_l'$ satisfies that $t_{l-1} \le t_{l-1}' < t_l' \le t_l$ such that at time $t_{l-1}'$ \algname{CUCB} pulls $\gS_{t_{l-1}}$ and at time $t_{l}'$ \algname{CUCB} pulls $\gS_{t_l}$, and during the period $(t_{l-1}',t_{l-1}')$ \algname{CUCB} never pulls the super arm $\gS_{t_{l-1}}$ and $\gS_{t_l}$. Then because \algname{CUCB} will pull $\gS_{t_l}$ at time $t_l'$, we have $\ucbt{t_l'}{b_{t_l}} \ge 1-2\epsilon$. Besides, at time $t_{l-1}'$, since \algname{CUCB} pulls $\gS_{t_{l-1}}$, we have $\ucbt{t_{l-1}'}{b_{t_l}} \le \epsilon$. Applying Lemma \ref{lem:proof-unknown-hardness-obs}, we know that $b_{t_l}$ is observed for at least $(1/2\epsilon)^{l-1}$ times during $(t_{l-1}', t_l')$. Also note that during $(t_{l-1}', t_l')$, super arm $\gS_{t_l}$ is never pulled, and the only way to observe $b_{t_l}$ is to pull the super arm $\gS_{n+1}$. Thus, we conclude the induction step.
\end{proof}

Now we can finally prove Theorem \ref{thm:hardness-unknown}. In Lemma \ref{lem:proof-unknown-hardness-exp}, we already show that if the attacker lets \algname{CUCB} to pull $l$ different arms in the attack set $\gM$, the attacker needs to suffer for loss exponential in $l$. Then the intuition is that: without knowing the parameter $\vmu$, the attacker needs to visit at least $\Omega(n)$ different arms in $\gM$ to guarantee a high probability of attack success, and thus suffer for exponential cost.

\begin{proof}[Proof of Theorem \ref{thm:hardness-unknown}]
We prove this by contradiction. Suppose that an attack algorithm $\gA$ with constant $\gamma'$ successfully attacks \algname{CUCB} on the random instance $\gI$, which is uniformly chosen from $\{\gI_1,\dots,\gI_n\}$ with high probability. Here we assume $\epsilon < 1/4$. Then, for $T \ge P(n,K)$ where $P(n,K)$ denotes a polynomial with variable $n,K$, we know that $\gA$ uses at most $T^{1-\gamma'}$ budget and play the super arms in $\gM=\{\gS_1,\dots,\gS_n\}$ with at least $T^{1-\gamma'}$ times.

Now we choose $n$ such that $1/(2\epsilon)^{n/2} \ge P(n,K)$, which is possible since $P(n,K)$ can be bound by some polynomials of $n$, and we consider $T = 1/(2\epsilon)^{n/2}$. Then by Lemma \ref{lem:proof-unknown-hardness-exp}, we know that at time $T$, \algname{CUCB} visits at most $n/2$ different arms in $\gM$. Recall by Lemma \ref{lem:hard-instances-gaps}, we know that for instance $\gI_i$, only super arm $\gS_i$ has gap $\Delta_{\gS_i} = \epsilon > 0$ and all other super arms $\gS_j$ for $j\neq i, j \in [n]$ has gap $\Delta_{\gS_j} = -\epsilon < 0$.  Because $\gI$ is uniformly chosen from $\{\gI_1,\dots,\gI_n\}$, then with probability at least $1/2$, \algname{CUCB} does not visit the super arm with gap $\epsilon$ at time $T$.

However if the super arm $\gS_j$ has gap $\gS_j = -\epsilon < 0$, from Lemma \ref{lem:proof-unattackable-obs-bound}, $\gS_j$ can only be observed by $P'(n,1/\epsilon,\log(1/\delta)\cdot T^{1-\gamma'}$ times for some polynomial $P'$ on $n,1/\epsilon,\log(1/\delta)$, under event $\cap (\lnot \gN_t^s)$, which happens with probability $1-\delta$. Thus, with probability at least $1/2-\delta$, \algname{CUCB} can only play super arms in $\gM$ for $n/2 \cdot P'(n,1/\epsilon,\log(1/\delta) \cdot T^{1-\gamma'}$ times. However, by Definition \ref{defn:poly-attack}, \algname{CUCB} needs to pull arms in $\gM$ for at least $T-T^{1-\gamma'}$ times, choosing $n$ large enough will get
\[T-T^{1-\gamma'} > n/2 \cdot P'(n,1/\epsilon,\log(1/\delta)\cdot T^{1-\gamma'},\]
since $T = 1/(2\epsilon)^{n/2}$ exponentially depends on $n$, which means that with probability at least $1/2 - \delta$, \algname{CUCB} will not pull arms in $\gM$ with at least $T-T^{1-\gamma'}$ times. Note that the number of base arms $m=2n$, we conclude the proof.
\end{proof}






\end{document}